\author{
  Lunjia Hu\thanks{Northeastern University. \texttt{lunjia@alumni.stanford.edu}}
  \and
  Haipeng Luo\thanks{University of Southern California. \texttt{haipengl@usc.edu}}
  \and
  Spandan Senapati\thanks{University of Southern California. \texttt{ssenapat@usc.edu}}
  \and
  Vatsal Sharan\thanks{University of Southern California. \texttt{vsharan@usc.edu}}
}
\newcommand{\abs}[1]{\ensuremath{\left\lvert#1\right\rvert}}
\renewcommand{\c}[1]{\ensuremath{\mathcal{#1}}} 
\newcommand{\norm}[1]{\ensuremath{\left\|#1\right\|}} 
\newcommand{\ip}[2]{\ensuremath{\left\langle #1, #2 \right\rangle}} 
\newcommand{\reg}{\mathsf{Reg}}
\newcommand{\sreg}{\mathsf{SReg}}
\newcommand{\owal}{\mathsf{OAL}}
\newcommand{\comp}{\mathsf{Comp}}
\newcommand{\bigs}[1]{\left[ #1 \right]}
\newcommand{\bigc}[1]{\left( #1 \right)}
\newcommand{\bigcurl}[1]{\left\{ #1 \right\}}
\newcommand{\ind}[1]{\mathbb{I}{\left[#1\right]}}
\newcommand{\pref}[1]{\prettyref{#1}}
\DeclareMathOperator*{\argmin}{argmin}
\definecolor{lightblue}{RGB}{0,102,204}
\def \x {{\bi{x}}}
\def \x{\boldsymbol{x}}
\def \cA {{\c{A}}}
\def \cF {{\c{F}}}
\def \cX {{\c{X}}}
\def \cY {{\c{Y}}}
\def \O {{\c{O}}}
\def \Rn {{\mathbb{R}}}
\def \cA{\mathcal{A}}
\def \cF{\mathcal{F}}
\def \cG{\mathcal{G}}
\def \cI{\mathcal{I}}
\def \cJ{\mathcal{J}}
\def \cO{\mathcal{O}}
\def \cP{\mathcal{P}}
\def \cS{\mathcal{S}}
\def \cV{\mathcal{V}}
\def \cX{\mathcal{X}}
\def \cY{\mathcal{Y}}
\def \cZ{\mathcal{Z}}
\def \alg{\mathsf{ALG}}
\def \cal{\mathsf{Cal}}
\def \smcal{\mathsf{SMCal}}
\def \mcal{\mathsf{MCal}}
\newcommand{\savehyperref}[2]{\texorpdfstring{\hyperref[#1]{#2}}{#2}}
\newtheorem{assumption}{Assumption}
\newtheorem{remark}{\bf Remark}
\newtheorem{theorem}{Theorem}
\newtheorem{corollary}{Corollary}
\newtheorem{lemma}{Lemma}
\newtheorem{definition}{Definition}
\newtheorem{proposition}{Proposition}
\newtcolorbox{schemebox}{
  colback=gray!10,      
  colframe=black,       
  boxrule=0.5pt,        
  arc=0mm,              
  fonttitle=\bfseries,  
  left=2mm, right=2mm,  
  top=2mm, bottom=2mm   
}
\title{Efficient Swap Multicalibration of Elicitable Properties\thanks{Author ordering is alphabetical.}}
\begin{document}
\date{}

\maketitle
\thispagestyle{empty}

\begin{abstract} 
    Multicalibration \citep{hebert2018multicalibration} is an algorithmic fairness perspective which demands that the predictions of a predictor are correct conditional on themselves and membership in a collection of potentially overlapping subgroups of a population. The work of \cite{noarov2023statistical} established a surprising connection between multicalibration for an arbitrary property $\Gamma$ (e.g., mean or median) and property elicitation: a property $\Gamma$ can be multicalibrated if and only if it is elicitable, where elicitability is the notion that the true property value of a distribution can be obtained by solving a regression problem over the distribution. In the adversarial (online) setting, \cite{noarov2023statistical} proposed an inefficient algorithm that achieves $\tilde{\cO}(\sqrt{T})$ $\ell_{2}$-multicalibration error for a hypothesis class of group membership functions and an elicitable property $\Gamma$, after $T$ rounds of interaction between a forecaster and adversary. 
    
    In this paper, we generalize multicalibration for an elicitable property $\Gamma$ from group membership functions to arbitrary bounded hypothesis classes and introduce a stronger notion --- swap multicalibration, following \cite{gopalan2023swap}. Subsequently, we propose an oracle-efficient algorithm which when given access to an online agnostic learner, achieves $\tilde{\cO}(T^{\frac{1}{r + 1}})$ $\ell_r$-swap multicalibration error with high probability $(r \ge 2)$ for a hypothesis class with bounded sequential Rademacher complexity and an elicitable property $\Gamma$. For the special case of $r = 2$, this implies an oracle-efficient algorithm that achieves $\tilde{\cO}(T^{\frac{1}{3}})$ $\ell_{2}$-swap multicalibration error, which significantly improves on the previously established bounds for the problem \citep{noarov2023statistical, ghugeimproved, luo2025improved}, and completely resolves an open question raised in \cite{garg2024oracle} on the possibility of an oracle-efficient algorithm that achieves $\tilde{\cO}(\sqrt{T})$ $\ell_{2}$-mean multicalibration error by answering it in a strongly affirmative sense.
\end{abstract}

\newpage
\setcounter{page}{1}

\section{Introduction}\label{sec:introduction}
With the rise of machine learning models in several real-world applications such as healthcare, medical support, law, finance, and safety-critical control, quantifying the model's confidence in its predictions is becoming an increasingly serious problem. A widely studied metric to quantify the uncertainty in a model's predictions is \emph{calibration} \citep{dawid1982well}, which demands that the model's predictions are correct conditional on themselves. To define calibration more formally, we set up some notation. Let $\cX$ be the instance space, $\cY = \{0, 1\}$ be the label space, $D \in \Delta(\cX \times \cY)$ be an unknown distribution over $\cX \times \cY$, and $p: \cX \to [0, 1]$ be a predictor. The predictor $p$ is (mean) calibrated if $\mathbb{E}_{(x, y) \sim D}[y|p(x) = v] = v$ for all $v \in \text{Range}(p)$. Calibration ensures a certain level of trustworthiness since the predictions are a true representation of the ground truth. However, calibration only ensures unbiasedness at the scale of the level sets of the predictor and cannot address broader reliability and fairness concerns, e.g., when the instance space $\cX$ also consists of several refinements corresponding to potentially overlapping demographic groups, and the model's predictions should not lead to any detectable bias among the groups. To incentivize predictors to be unbiased not just marginally, but also for a collection of protected subgroups of a population, \cite{hebert2018multicalibration} introduced \emph{multicalibration}. Given a set $\cS$ of potentially overlapping subsets of $\cX$, the predictor $p$ is (mean) multicalibrated if \begin{align*}
    \mathbb{E}_{(x, y) \sim D} [y| p(x) = v, x \in s] = v, \text{for all } v \in \text{Range}(p), s \in \cS.
\end{align*}

Although multicalibration was initially proposed in the distributional (offline) setting by \cite{hebert2018multicalibration} as a mechanism to incentivize fair predictions, recent years have witnessed surprising connections with several different areas, e.g., complexity theory \citep{casacuberta2024complexity,global-calibration}, algorithmic fairness \citep{obermeyer2019dissecting}, learning theory \citep{gopalan_et_al:LIPIcs.ITCS.2022.79, gopalan2022low}, indistinguishability and pseudo-randomness \citep{dwork2021outcome, gopalan2022loss,pseudoentropy,supersimulators}, and so on, leading to a surge of interest in the community. Despite its conceptual importance and broad influence, achieving multicalibration in a sample-efficient manner has remained an outstanding problem, with existing constructions \citep{hebert2018multicalibration, gopalan_et_al:LIPIcs.ITCS.2022.79, globus2023multicalibration} requiring $\gtrsim \varepsilon ^ {-10}$ samples to efficiently learn a predictor with multicalibration error at most $\varepsilon$. The above problem also manifests in the more challenging adversarial (online) setting, where an online forecaster and adversary interact sequentially for $T$ rounds, and the goal of the forecaster is to minimize the multicalibration error at the end of their interaction. Notably, existing online multicalibration algorithms suffer from one or more of the following issues: (a) they achieve error rates that scale poorly with $T$; (b) are inefficient; (c) offer strong theoretical guarantees only under specific settings; and (d) do not readily generalize beyond multicalibration for means ({we define multicalibration for an arbitrary property in \pref{subsec:property_elicitation}}).
Although in recent years, online multicalibration has gained significant attention \citep{gupta2022online, haghtalab2023unifying, garg2024oracle, noarovhigh, luo2025improved, ghugeimproved}, the aforementioned issues remain inherent to the proposed algorithms. The motivating goal of this paper is to devise algorithms that simultaneously circumvent the challenges posed above.

\subsection{Online (Mean) Multicalibration}
Motivated by the above concerns, in this paper, we primarily consider online multicalibration  --- a sequential decision-making problem between an online forecaster and an adversary over $T$ rounds of interaction. In online (mean) multicalibration, at each time $t \in [T]$, (a) the adversary reveals a context $x_{t} \in \cX$, where $\cX$ is the instance space; (b) the forecaster predicts a distribution $p_{t} \in [0, 1]$ over binary outcomes; (c) the adversary reveals the true label $y_{t} \in \cY = \{0, 1\}$. For a bounded hypothesis class $\cF \subset [-1, 1] ^ {\cX}$, the set of forecasts $p_{1}, \dots, p_{T}$ is multicalibrated if  no hypothesis $f$ in $\cF$ is marginally able to detect any correlation $f(x_t) (y_t - p_t)$ with the residual error $y_{t} - p_{t}$ when conditioned on the rounds where the prediction made is $p_t = p$. Mathematically, this is quantified by minimizing the $\ell_{r}$-multicalibration error $(r \ge 1)$, defined as \begin{align}\label{eq:mcal}\mcal_{\text{mean}, r}(\cF) \coloneqq \sup_{f \in \cF}\sum_{p \in [0, 1]} n_p \abs{\frac{1}{n_p}\sum_{t = 1} ^ {T} \ind{p_t = p} f(x_t) (y_t - p_t)} ^ {r},\end{align} where $n_{p} \coloneqq \sum_{t = 1} ^ {T} \ind{p_t = p}$ denotes the number of rounds where the prediction made is $p_t = p$. Note that the summation $\sum_{p \in [0, 1]}$ is an abuse of notation and that it is really over the (at most $T$) distinct predictions made by the forecaster. Online calibration ($\cal_{\text{mean}, r}$) is a special case of multicalibration where $\cF = \{1\}$ and $1$ denotes the constant function that evaluates to $1$ for all $x \in \cX$:
\begin{align*}
    \cal_{\text{mean}, r} \coloneqq \sum_{p \in [0, 1]} n_{p} \abs{\frac{1}{n_p} \sum_{t = 1} ^ {T} \ind{p_t = p} (y_t - p_t)} ^ {r}.
\end{align*}
Throughout the paper, we consider forecasters that make predictions in a finite discretization $\cZ = \{z_{1}, \dots, z_{N}\}$ of $[0, 1]$, where $z_{i} = \frac{i}{N}$ and $N$ is a parameter to be specified later. We let $n_{i}$ be a shorthand for $n_{z_i}$.

In a line of work initiated to understand the implications of multicalibration towards \emph{omniprediction} --- a simultaneous loss-minimization paradigm introduced by \cite{gopalan_et_al:LIPIcs.ITCS.2022.79}, \cite{gopalan2023swap} introduced a related notion called \emph{swap multicalibration}. Informally, while multicalibration requires a marginal (over $p$) guarantee for the conditional correlation errors, swap multicalibration demands that the conditional correlation error for each $p$ be small. Formally, the $\ell_r$-swap multicalibration error incurred by the forecaster is defined as \begin{align}\label{eq:swap_multical_comparator}
    \smcal_{\text{mean}, r}(\cF) \coloneqq \sum_{p \in [0, 1]}  n_{p}\sup_{f \in \cF} \abs{\frac{1}{n_p}\sum_{t = 1} ^ {T} \ind{p_t = p} f(x_t) (y_t - p_t)} ^ {r}.
\end{align}
Compared to multicalibration, swap multicalibration is a stronger notion which requires that for all $p \in [0, 1]$, even when conditioned on the rounds where the prediction made is $p_t = p$, the no correlation guarantee holds for each $f \in \cF$. Clearly, we have $\smcal_{\text{mean}, r}(\cF) \ge \mcal_{\text{mean}, r}(\cF)$.

\subsection{A Connection between Multicalibration and Property Elicitation}\label{subsec:property_elicitation}
Although conventionally stated in the context of means,  mutlicalibration can be defined more generally for any distributional property $\Gamma$, e.g., median, variance and higher $k$-th order $(k \ge 2)$ moments, $q$-th quantiles, etc. In the distributional setting, let $D \in \Delta(\cX \times \cY)$ be an unknown distribution, where, as usual, $\cX$ denotes the instance space and $\cY \subseteq [0, 1]$ represents the label space (not necessarily $\{0, 1\}$). {Let $\cS$ be a collection of potentially overlapping subsets of $\cX$} and $\Gamma: \cP \to [0, 1]$ be a property ({without any loss of generality, we assume $\text{Range}(\Gamma) = [0, 1]$}), where $\cP$ is a family of probability distributions over $\cY$. A predictor {$\gamma: \cX \to [0, 1]$} is \emph{$\Gamma$-multicalibrated} if $\Gamma(D|\gamma(x) = v, x \in s) = v$ holds for all $v \in [0, 1], s \in \cS$.
In other words, the predictions of $\gamma$ about the property $\Gamma$ are correct even when conditioned on the level sets of $\gamma$ and membership in a collection of potentially overlapping subsets of $\cX$. Similarly, \emph{$\Gamma$-calibration} is the marginal requirement that $\Gamma(D| \gamma(x) = v) = v$ for all $v \in [0, 1]$. \cite{jung2021moment} initiated the study of moment multicalibration and showed that it is impossible to multicalibrate variance and other higher moments since there exists a simple distribution $D$ on which even the true distributional predictor for variance $\gamma_{\text{var}}(x) \coloneqq \mathbb{E}[(y - \mathbb{E}[y | x]) ^ {2} | x]$ is not calibrated.\footnote{{To observe this, let $D$ be supported on $(x_{1}, 1)$ and $(x_{2}, 0)$ with $\mathbb{P}(x = x_1) = \mathbb{P}(x = x_2) = 0.5$. Then the true distributional predictor for variance $\gamma_{\text{var}}$ predicts $\gamma_{\text{var}}(x_1) = \gamma_{\text{var}}(x_2) = 0$, where as $\Gamma(D| \gamma_{\text{var}}(x) = 0) = 0.5$ is the variance of a $\text{Bernoulli}(0.5)$ random variable. Compare this to mean multicalibration, where the true distributional predictor $\gamma_{\text{mean}}(x) \coloneqq \mathbb{E}[y|x]$ is always calibrated. As per \cite{dwork2021outcome}, a predictor $\gamma$ is $\Gamma$-multicalibrated if it is indistinguishable from the true distributional predictor $\gamma_{\Gamma}(x) \coloneqq \Gamma(D|x)$ based on statistical tests defined by $\cG$, therefore, it is necessary for $\gamma_{\Gamma}$ to be calibrated.}} 
Subsequent work by \cite{noarov2023statistical} established a remarkable connection between multicalibration and property elicitation \citep{lambert2008eliciting, steinwart2014elicitation}. Formally, a property $\Gamma$ is \emph{elicitable} if there exists a loss function {$\ell: [0, 1] \times \cY \to \mathbb{R}$} whose minimizer in expectation over $P$ yields the true property value $\Gamma (P)$, i.e., {$\Gamma(P) \in \argmin_{\gamma \in [0, 1]} \mathbb{E}_{y \sim P}[\ell(\gamma, y)]$}  for all $P \in \cP$ (we say $\ell$ elicits $\Gamma$). For instance, the squared loss $\ell(\gamma, y) = \frac{1}{2}(\gamma - y)^{2}$ elicits the mean, where as the $\ell_{1}$-loss $\ell(\gamma, y) = \frac{1}{2}\abs{\gamma - y}$ elicits the median; {more generally, the \emph{pinball} loss $\ell(\gamma, y) = (\gamma - y)\bigc{(1 - q)  \ind{y \le \gamma} - q \ind{y > \gamma}}$ elicits the $q$-th quantile for $q \in [0, 1]$ (median corresponds to $q = 0.5$)}. A striking result due to \cite{noarov2023statistical} is that a property $\Gamma$ can be multicalibrated if and only if $\Gamma$ is elicitable. In both the distributional and online settings, \cite{noarov2023statistical} propose canonical algorithms for achieving multicalibration for an elicitable property $\Gamma$. {The result of \cite{noarov2023statistical} builds on the following key characterization of property elicitation established by \cite{steinwart2014elicitation}: a property $\Gamma$ is elicitable if and only if there exists an \emph{identification} function for $\Gamma$ --- a function {$V: [0, 1] \times \cY \to \mathbb{R}$} such that $\mathbb{E}_{y \sim P} [V(\gamma, y)] = 0 \iff \gamma = \Gamma(P)$ for all $P \in \cP$. In other words, the true property value $\Gamma(P)$ is a root of the expected identification function for all $P \in \cP$, e.g., for mean $V(\gamma, y) = \gamma - y$, for median $V(\gamma, y) = \ind{y \le \gamma} - \frac{1}{2}$, {where as for the $q$-th quantile $V(\gamma, y) = \ind{y \le \gamma} - q$}. More generally, for an elicitable property $\Gamma$, $V(\gamma, y) = \frac{\partial \ell(\gamma, y)}{\partial \gamma}$, where $\ell$ is the loss function that elicits $\Gamma$. Based on the concept of an identification function, we generalize the definition of online multicalibration for an elicitable property beyond group membership functions and introduce swap multicalibration. Specifically, for an elicitable property $\Gamma$ with identification function $V$ ({without any loss of generality, we assume $\text{Range}(V) = [-1, 1]$ since the loss that elicits a property can be scaled appropriately}), $r \ge 1$, and a hypothesis class $\cF \subset [-1, 1] ^ {\cX}$, we define the $\ell_{r}$-multicalibration and $\ell_{r}$-swap multicalibration errors incurred by the forecaster respectively as: 
\begin{align}
    \mcal_{\Gamma, r}(\cF) & \coloneqq \sup_{f \in \cF} \sum_{p \in [0, 1]} n_{p} \abs{\frac{1}{n_{p}} \sum_{t = 1} ^ {T} \ind{p_{t} = p} f(x_{t}) V(p_{t}, y_{t})} ^ {r}, \label{eq:mcal_def_intro} \\
    \smcal_{\Gamma, r}(\cF) & \coloneqq  \sum_{p \in [0, 1]} n_{p} \sup_{f \in \cF} \abs{\frac{1}{n_{p}} \sum_{t = 1} ^ {T} \ind{p_{t} = p} f(x_{t}) V(p_{t}, y_{t})} ^ {r}. \label{eq:smcal_def_intro} 
\end{align}
Calibration with a property $\Gamma$ ($\cal_{\Gamma, r}$) is a special case of multicalibration where $\cF = \{1\}$, therefore is non-contextual. 
When $\Gamma$ is the mean (and thus $V(\gamma, y) = \gamma-y$), \eqref{eq:mcal_def_intro} and \eqref{eq:smcal_def_intro} clearly recover \eqref{eq:mcal} and \eqref{eq:swap_multical_comparator}.
}

\subsection{Related Work}
Although there are several algorithms that achieve (mean) multicalibration in both the offline \citep{hebert2018multicalibration, jung2021moment,  gopalan_et_al:LIPIcs.ITCS.2022.79, gopalan_et_al:LIPIcs.ITCS.2022.79, gupta2022online, globus2023multicalibration, gopalan2023swap, deng_et_al:LIPIcs.ITCS.2023.41} and online settings \citep{gupta2022online, haghtalab2023unifying, garg2024oracle, luo2025improved, ghugeimproved}, the resulting offline sample complexities/online multicalibration errors are either quite worse compared to the calibration counterparts, or do not generalize beyond simple classes of functions, e.g., linear functions. {For the ease of comparison, in this subsection, we review the relevant results for $\ell_{2}$-(swap) multicalibration and defer discussion for $\ell_{1}$-(swap) multicalibration to \pref{subsec:contributions} (they are related via the Cauchy-Schwartz inequality as $\smcal_{\Gamma, 2}(\cF) \le \smcal_{\Gamma, 1}(\cF) \le \sqrt{T \cdot \smcal_{\Gamma, 2}(\cF)}$ and $\mcal_{\Gamma, 2}(\cF) \le \mcal_{\Gamma, 1}(\cF) \le \sqrt{T \cdot \mcal_{\Gamma, 2}(\cF)}$)}. Particularly, in the online setting:
\begin{itemize}[leftmargin=*]
    \item \cite{garg2024oracle} proposed an oracle-efficient algorithm that achieves $\smcal_{\text{mean}, 2}(\cF) = \tilde{\cO}(T^{\frac{3}{4}})$ for any hypothesis class $\cF$ with bounded online complexity, and raised the open problem of whether it is possible to devise an oracle-efficient algorithm that achieves $\mcal_{\text{mean}, 2}(\cF) = \tilde{\cO}(\sqrt{T})$. In a restricted setting where the hypothesis class $\cF$ is finite, \cite{garg2024oracle} also developed an inefficient algorithm with running time proportional to $\abs{\cF}$ that achieves $\mcal_{\text{mean}, 2}(\cF) = \tilde{\cO}(\sqrt{T})$.
    
    \item Subsequent work by \cite{luo2025improved}  partially answered the question raised by \cite{garg2024oracle} in the affirmative by proposing an efficient algorithm that achieves $\smcal_{\text{mean}, 2}(\cF) = \tilde{\cO}(T^{\frac{1}{3}})$ when $\cF$ is the class of bounded linear functions. 
\end{itemize}  
Beyond mean multicalibration: 
\begin{itemize}[leftmargin=*]
    \item {For an elicitable property $\Gamma$, \cite{noarov2023statistical} proposed an inefficient algorithm that achieves $\mcal_{\Gamma, 2}(\cF) = \tilde{\cO}(\sqrt{T})$.\footnote{\cite{noarov2023statistical} use a slightly different definition of multicalibration compared to the one considered in this paper. However, their analysis extends gracefully to the definition of $\mcal_{\Gamma, 2}$ considered in this paper.}} 

    \item {In the special case that $\Gamma$ is the $q$-th quantile of a distribution, \cite{garg2024oracle} proposed an {oracle} efficient algorithm that achieves $\smcal_{\Gamma, 2}(\cF) = \tilde{\cO}(T^{\frac{3}{4}})$ for any hypothesis class $\cF$ with bounded online complexity, where as it is possible to achieve $\mcal_{\Gamma, 2} = \tilde{\cO}(\sqrt{T})$ via an inefficient algorithm \cite{bastani2022practical, roth2022uncertain, noarov2023statistical, garg2024oracle}.}
\end{itemize}
We summarize the above results in \pref{tab:comparison}. 

\begin{table}[h!]
\centering
\definecolor{lightblue}{RGB}{173, 216, 230}

\begin{tabularx}{\textwidth}{c X c c}
\toprule
Property ($\Gamma$) & Hypothesis class ($\cF$) & {$\mcal_{\Gamma, 2}(\cF)$} & \textbf{Efficiency} \\ 
\midrule
\multirow{3}{*}{Mean} 
 & Arbitrary with bounded complexity& \cellcolor{lightblue}$\tilde{\cO}(T^{\frac{3}{4}})$ \cite{garg2024oracle} & Oracle-efficient \\ 
\cmidrule(lr){2-4}
 & Finite & $\tilde{\cO}(\sqrt{T})$ \cite{garg2024oracle} & Inefficient \\ 
\cmidrule(lr){2-4}
 & Linear functions & \cellcolor{lightblue}$\tilde{\cO}(T^{\frac{1}{3}})$ \cite{luo2025improved} & Efficient \\ 
\midrule
\multirow{2}{*}{$q$-th Quantile} 
 & Arbitrary with bounded complexity & \cellcolor{lightblue} $\tilde{\cO}(T^{\frac{3}{4}})$   \cite{garg2024oracle} & Oracle-efficient \\ 
\cmidrule(lr){2-4}
 & Finite & $\tilde{\cO}(\sqrt{T})$ \cite{roth2022uncertain} & Inefficient \\ 
\midrule
Elicitable $\Gamma$ 
 & Group membership functions & $\tilde{\cO}(\sqrt{T})$ \cite{noarov2023statistical} & Inefficient \\ 
\bottomrule
\end{tabularx}
\caption{Previously established multicalibration error bounds. Blue cells indicate that the achieved rates were derived for swap multicalibration, therefore, also hold for multicalibration.}
\label{tab:comparison}
\end{table}

When compared against the achievable rates for mean multicalibration, the above results raise the following immediate question: 

\begin{center}
    {\emph{Is it possible to devise an oracle-efficient algorithm which achieves improved (swap) multicalibration guarantees compared to \pref{tab:comparison} for an arbitrary elicitable property $\Gamma$
    and a general hypothesis class $\cF$?}}
\end{center}

\subsection{Contributions}\label{subsec:contributions}
{In this paper, we answer the question above in a strongly affirmative sense by proposing an oracle-efficient algorithm which meets all the above requirements simultaneously: for an elicitable property $\Gamma$ with a $\rho$-Lipschitz identification function,\footnote{This is a standard assumption invoked in several prior works, e.g., \citep{noarov2023statistical} invoke it for multicalibrating an elicitable property $\Gamma$, where as \citep{jungbatch, garg2024oracle} invoke it for multicalibrating $q$-th quantiles.} fixed $r \ge 2$,\footnote{{Although we also have an explicit bound on $\smcal_{\Gamma, r}(\cF)$ for $r \in [1, 2)$, for the sake of brevity, we refer to the bound for $r \ge 2$ throughout, {since a bound on $\smcal_{r}$ for $r \in [1, 2)$ can be obtained from a bound on $\smcal_2$ by applying the H\"older's inequality; see \pref{thm:smcal_general_result_introduction}.
}}
}
and $\cF$ with bounded sequential Rademacher complexity, \pref{alg:hp_algorithm_efficient} achieves $\smcal_{\Gamma, r}(\cF) = \tilde{\cO}(T^{\frac{1}{r + 1}})$ with high probability. 
{Notably, for $r = 2$, this improves the dependence of $T$ significantly over all the rows in \pref{tab:comparison} except for the work of \cite{luo2025improved}, while improving on \cite{luo2025improved} by generalizing to $r > 2$, a broader hypothesis class $\cF$, and an elicitable property $\Gamma$.} Towards achieving this result, we first propose an inefficient algorithm (\pref{alg:hp_algorithm}) that achieves $$\smcal_{\Gamma, r}(\cF) = \tilde{\cO}\bigc{{\rho ^ {r} T ^ {\frac{1}{r + 1}} + T^{\frac{1}{r + 1}} \bigc{\log \frac{\abs{\cF}}{\delta}} ^ {\frac{r}{2}}}}$$ with probability at least $1 - \delta$ for a finite class $\cF$. 
Subsequently, we propose our oracle-efficient algorithm by invoking an online agnostic learner in \pref{alg:hp_algorithm}.}

\begin{restatable}[Online Agnostic Learning \citep{ben2009agnostic, beygelzimer2015optimal}]{definition}{DefOnlineAgnostic}\label{def:online_weak_agnostic_learner_intro}
    Consider the following interaction between an online agnostic learner ($\owal$) and adversary for $n$ rounds: at each time $t \in [n]$, (a) the adversary reveals a context $x_t \in \cX$; (b) $\owal$ responds with a prediction $q_{t}(x_{t})$, where $q_{t}: \cX \to [-1, 1]$; (c) adversary reveals the outcome $\kappa_{t} \in [-1, 1]$. In online agnostic learning, the goal of $\owal$ is to output a sequence of test functions $q_{1}, \dots, q_{T}$ 
    whose cumulative correlation $\sum_{t = 1} ^ {T} q_{t}(x_{t}) \kappa_{t}$ with the outcome sequence $\kappa_{1}, \dots, \kappa_{T}$ is comparable with the best hypothesis in a given hypothesis class $\cF$, i.e., \begin{align}\label{eq:WAL_regret}
        \sup_{f \in \cF} \sum_{t = 1} ^ {n} f(x_{t}) \kappa_{t} \le \sum_{t = 1} ^ {n} q_{t}(x_t) \kappa_{t} + \mathsf{Reg}(\cF, n),
    \end{align}
where $\mathsf{Reg}(\cF, n) > 0$ denotes the regret incurred by $\owal$.
\end{restatable}

 \pref{alg:hp_algorithm} is inefficient since it requires enumerating over each $f \in \cF$, whereas \pref{alg:hp_algorithm_efficient} achieves oracle-efficiency by instantiating $2N$ copies of $\owal$, parameterized by $(i, \sigma) \in [N] \times \{\pm 1\}$, and auditing against the auxiliary test function $q_{t, i, \sigma}$ provided by $\owal_{i, \sigma}$ for each $(i, \sigma)$ instead. In the following result, which is a special case of \pref{thm:smcal_general_result} (the main result of this paper), we derive a bound on $\smcal_{\Gamma, r}(\cF)$ by assuming the existence of an $\owal$ with a favorable regret guarantee. 
 \begin{theorem}\label{thm:smcal_general_result_introduction}
    {Fix a $r \ge 1$ and an elicitable property $\Gamma$ with a $\rho$-Lipschitz identification function, and assume that there exists an $\owal$ for which
    $\mathsf{Reg}(\cF, n) = \tilde{\cO}(\sqrt{n}\comp(\cF))$, where $\comp(\cF)$
    is a complexity measure of $\cF$ that is independent of $n$. Then, for $r \ge 2$, \pref{alg:hp_algorithm_efficient} achieves \begin{align*} 
        \smcal_{\Gamma, r}(\cF) = \tilde{\cO}\bigc{\bigc{\rho ^ {r} + \bigc{\log \frac{1}{\delta}} ^ {\frac{r}{2}} +  \comp(\cF) ^ {r}} T ^ {\frac{1}{r + 1}}},
    \end{align*}
    with probability at least $1 - \delta$. Consequently, for $r \in [1, 2)$, \pref{alg:hp_algorithm_efficient} achieves 
    \begin{align*}
        \smcal_{\Gamma, r}(\cF) = \tilde{\cO}\bigc{\bigc{\rho ^ {r}  +  \bigc{\log \frac{1}{\delta}} ^ {\frac{r}{2}} +  \comp(\cF) ^ {r}} T ^ {1 - \frac{r}{3}}}
    \end{align*}
    with probability at least $1 - \delta$.}\footnote{{Our bounds are achieved without the knowledge of $\rho$, $\mathsf{Reg}(\cF, n)$, and $\delta$. We did not make an attempt to optimize our bounds when these parameters are known.}}
\end{theorem}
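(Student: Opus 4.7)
The plan is to break the proof into three stages: (i) a reduction from the $\sup_{f \in \cF}$ appearing in $\smcal_{\Gamma, r}(\cF)$ to the $\owal$-supplied test functions $q_{t, i, \sigma}$, paying a regret cost; (ii) a potential-based control of the resulting per-level correlation statistics via \pref{alg:hp_algorithm_efficient}'s rule for choosing $p_t$; and (iii) an optimization over the discretization parameter $N$. For brevity, write
\begin{align*}
    A_{i, \sigma}^{(t)} := \sum_{s \le t,\, p_s = z_i} \sigma\, q_{s, i, \sigma}(x_s)\, V(p_s, y_s),
\end{align*}
so that \pref{alg:hp_algorithm_efficient} essentially maintains the $2N$ scalars $\{A_{i, \sigma}^{(t)}\}$ as its state.

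First I would apply the $\owal$ guarantee of \pref{def:online_weak_agnostic_learner_intro} separately to each $(i, \sigma)$ instance with outcomes $\kappa_t = \sigma V(p_t, y_t) \in [-1, 1]$, which yields
\begin{align*}
    \sup_{f \in \cF} \Bigl|\sum_{t \,:\, p_t = z_i} f(x_t) V(p_t, y_t)\Bigr|
    \;\le\; \max_{\sigma \in \{\pm 1\}} A_{i, \sigma}^{(T)} + \reg(\cF, n_i).
\end{align*}
Using $(a+b)^r \le 2^{r-1}(a^r + b^r)$ and the assumed $\reg(\cF, n_i) = \tilde{\cO}(\sqrt{n_i}\,\comp(\cF))$, the cumulative contribution of the regret term to $\smcal_{\Gamma, r}(\cF)$ is at most $2^{r-1}\sum_i n_i^{1 - r/2}\,\comp(\cF)^r$, which for $r \ge 2$ is $\tilde{\cO}(N\,\comp(\cF)^r)$ since $n_i^{1 - r/2} \le 1$. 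It remains to bound the ``algorithmic'' term $\sum_i n_i^{1 - r}\,\max_\sigma |A_{i, \sigma}^{(T)}|^r$.

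Next I would analyze the potential $\Phi_t := \sum_{i, \sigma} |A_{i, \sigma}^{(t)}|^r$ and show that the selection rule (inherited from the inefficient \pref{alg:hp_algorithm} but auditing against the $q_{t, i, \sigma}$ rather than every $f \in \cF$) controls its growth. When $p_t = z_i$, only the two coordinates $(i, \pm)$ update, with increments $\pm q_{t, i, \pm}(x_t) V(p_t, y_t)$. A Taylor expansion $|x + \Delta|^r \le |x|^r + r\,\mathrm{sign}(x)|x|^{r-1}\Delta + O(r^2 |x|^{r-2}\Delta^2)$ combined with the identification-function property $\mathbb{E}_{y \sim P}[V(\Gamma(P), y)] = 0$ lets the forecaster choose $p_t \in \cZ$ via a minimax/fixed-point argument (in the spirit of \cite{noarov2023statistical}) so that the first-order conditional-expected increment to $\Phi$ is controllable at scale $\tilde{\cO}(\rho/N)$ times an appropriate power of $\Phi_t$ (the $\rho/N$ slack arising from the $\rho$-Lipschitz $V$ applied to the discretization gap), while the second-order increment is $O(1)$ times a smaller power of $\Phi_t$. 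Solving the induced deterministic recursion yields $\mathbb{E}[\Phi_T] = \tilde{\cO}(\rho^r T + T^{r/2})$, and a Freedman-style concentration on the martingale $\Phi_t - \mathbb{E}[\Phi_t \mid \cH_{t-1}]$ (with $O(1)$-bounded differences coming from $V, q \in [-1,1]$) upgrades this to a high-probability bound with an additional $\tilde{\cO}((\log\tfrac{1}{\delta})^{r/2}\,T^{r/2})$ term. Passing from $\sum_i |A_{i, \sigma}|^r$ to $\sum_i n_i^{1 - r}|A_{i, \sigma}|^r$ via Jensen's inequality together with $\sum_i n_i = T$ then produces the algorithmic bound $\tilde{\cO}\bigl((\rho^r + (\log\tfrac{1}{\delta})^{r/2})\,T\,N^{-(r-1)}\bigr)$.

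The main obstacle is the potential-function analysis in the second stage --- specifically, certifying a selection rule on $\cZ$ whose first-order increment nearly vanishes, which requires simultaneously handling $2N$ signed audit directions, exploiting the identification-function property of $V$ through a fixed-point argument, and absorbing the $\rho/N$ Lipschitz slack from discretization. Once this is in hand, balancing the two contributions $N\,\comp(\cF)^r$ (from the $\owal$ regret) against $(\rho^r + (\log\tfrac{1}{\delta})^{r/2})\,T\,N^{-(r-1)}$ (from the potential) by setting $N \asymp T^{1/(r+1)}$ yields the claimed $\tilde{\cO}\bigl((\rho^r + (\log\tfrac{1}{\delta})^{r/2} + \comp(\cF)^r)\,T^{1/(r+1)}\bigr)$ rate for $r \ge 2$. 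Finally, the $r \in [1, 2)$ case follows from the $r = 2$ bound by H\"older's inequality with exponents $2/r$ and $2/(2-r)$ applied to $\sum_i n_i c_i^r$ (where $c_i$ is the per-level error), giving $\smcal_{\Gamma, r}(\cF) \le \smcal_{\Gamma, 2}(\cF)^{r/2}\,T^{1 - r/2}$, and substituting the $r = 2$ estimate produces the $T^{1 - r/3}$ exponent claimed.
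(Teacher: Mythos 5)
Stages (i) and (iii) of your plan match the paper: the reduction via the $\owal$ guarantee (feeding $\kappa_t = \sigma V(p_t, y_t)$ to $\owal_{i,\sigma}$ and paying $\reg(\cF, n_i)$ per level) is exactly how \pref{lem:hp_bound_deviation} begins, and the passage from $r \ge 2$ to $r \in [1,2)$ by H\"older is exactly the paper's final step. Stage (ii), however, is both a genuinely different route and the place where the argument breaks.

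Your stage (ii) replaces the paper's per-level Freedman/martingale analysis with a global potential $\Phi_t = \sum_{i,\sigma} |A_{i,\sigma}^{(t)}|^r$ and a Taylor-plus-fixed-point argument. This is not tied to what \pref{alg:hp_algorithm_efficient} actually does. The algorithm's selection rule does not choose $p_t$ to flatten the first-order increment of your potential; it obtains weights $\{w_{t,i,\sigma}\}$ from the expert subroutine $\alg$ (McMwC), builds the \emph{weighted} sign function $\Phi_t(p) = \sum w_{t,i,\sigma}\sigma\ind{p\in\cI_i}q_{t,i,\sigma}(x_t)$, and picks $P_t$ to make $h_t(P_t)\le \rho/T$. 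A potential-based certificate would have to show that McMwC's weights implicitly serve as a suitable surrogate for the gradient $r|A_{i,\sigma}|^{r-1}$ of your potential, which you do not establish and which is not generally true. The paper instead works per level $i$: it decomposes $\sum_t \kappa_{t,i,\sigma}q_{t,i,\sigma}(x_t)$ into four sequences $X_t,U_t,Z_t,W_t$, uses McMwC's data-dependent regret $\tilde\cO(\sqrt{\sum_t\mathbb{P}_t(\tilde p_t\in\cI_i)\log})$ to control the $Z_t$ part, uses the explicit two-point form of $P_t$ to argue $|\mathbb{E}_t[W_t]|\le\rho/T$ for the non-martingale $W_t$, applies Freedman over a dyadic grid for the martingale parts, and then converts $\sum_t\mathbb{P}_t(\tilde p_t\in\cI_i)$ to $n_i$ by solving a quadratic. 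None of this machinery appears in your sketch, and the claimed recursion $\mathbb{E}[\Phi_T]=\tilde\cO(\rho^r T + T^{r/2})$ and the Freedman upgrade are asserted rather than derived.

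There is also a concrete arithmetic inconsistency at the end of stage (ii). The per-level bound of \pref{lem:hp_bound_deviation} gives $n_i^{1-r}|A_i|^r = \tilde\cO(n_i\rho^r/N^r + n_i^{1-r/2}\log^{r/2}+\cdots)$, which sums to $\tilde\cO(\rho^r T N^{-r} + N\log^{r/2})$ and balances at $N\asymp T^{1/(r+1)}$, as required. Your claimed algorithmic term is $\tilde\cO((\rho^r + (\log\tfrac1\delta)^{r/2})\,T N^{-(r-1)})$, i.e.\ the exponent on $N$ is $-(r-1)$ instead of $-r$. Balancing $N\comp(\cF)^r$ against $TN^{-(r-1)}$ gives $N\asymp T^{1/r}$, not $T^{1/(r+1)}$, and substituting $N = T^{1/(r+1)}$ into your term yields $T^{2/(r+1)}$, not $T^{1/(r+1)}$. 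Relatedly, the stated Jensen step from $\sum_i|A_{i,\sigma}|^r$ to $\sum_i n_i^{1-r}|A_{i,\sigma}|^r$ is not sound in the useful direction because $1-r<0$ makes small $n_i$ dominate; the paper avoids this by splitting on whether $n_i$ exceeds $\Theta(\log\tfrac{|\cF|N}{\delta})$ and bounding each level separately rather than converting a global potential bound into a weighted one.
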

To instantiate the above result: 
    \begin{itemize}[leftmargin=*]
        \item When $\cF$ is the class of linear functions, the Online Gradient Descent (OGD) algorithm \citep{zinkevich2003online} achieves {$\mathsf{Reg}(\cF, n) = {\cO}(\sqrt{n})$}, therefore, $\smcal_{\Gamma, 2}(\cF) = \tilde{\cO}(T^{\frac{1}{3}})$. This not only recovers the result of \cite{luo2025improved} for $\smcal_{\text{mean}, 2}(\cF)$ but also generalizes it for any $r > 2$ and an arbitrary elicitable property $\Gamma$.
        \item When $\cF$ is a finite class, the {Multiplcative Weights Update (MWU) algorithm \citep{freund1997decision} 
        achieves $\mathsf{Reg}(\cF, n) = {\cO}(\sqrt{n \log \abs{\cF}})$},
        thereby implying that $\smcal_{\Gamma, r}(\cF) = \tilde{\cO}(T^{\frac{1}{r + 1}}{(\log \abs{\cF})^{\frac{r}{2}}})$. 
        
        \item Similar to \cite{okoroafor2025near}, via standard learning-theoretic arguments and results from \cite{rakhlin2010online}, we show that there exists an $\owal$ that achieves {$\mathsf{Reg}(\cF, n) = \cO(n \cdot \Re^{\text{seq}}(\cF, n))$, where $\Re^{\text{seq}}(\cF, n)$ is the sequential Rademacher complexity of $\cF$ over $n$ rounds of interaction between an algorithm and adversary}. Therefore, whenever $\Re^{\text{seq}}(\cF, n) = \tilde{\cO}\bigc{\frac{\comp(\cF)}{\sqrt{n}}}$, we obtain {$\mathsf{Reg}(\cF, n) = \tilde{\cO}(\sqrt{n} \comp(\cF))$} and $\smcal_{\Gamma, r}(\cF) = \tilde{\cO}(T^{\frac{1}{r + 1}})$. Both the linear class and finite $\cF$ satisfy $\Re^{\text{seq}}(\cF, n) = \tilde{\cO}\bigc{\frac{\comp(\cF)}{\sqrt{n}}}$ with $\comp(\cF) = \cO(1)$ for the linear class and $\comp(\cF) = \log \abs{\cF}$ for the finite class, however, importantly, they admit an explicit algorithm that achieves {$\mathsf{Reg}(\cF, n) = \tilde{\cO}(\sqrt{n} \comp(\cF))$}. For hypothesis classes (beyond the above two classes) for which $\Re^{\text{seq}}(\cF, n) = \tilde{\cO}\bigc{\frac{\comp(\cF)}{\sqrt{n}}}$, we refer to \cite{rakhlin2014statistical}. 
    \end{itemize} 
    When instantiated for means, the above set of results not only completely resolves the open question raised in \cite{garg2024oracle} on the possibility of an oracle-efficient algorithm that achieves $\mcal_{\text{mean}, 2}(\cF) = \tilde{\cO}(\sqrt{T})$, but also shows that it is possible to achieve a much stronger guarantee $\smcal_{\text{mean}, 2}(\cF) = \tilde{\cO}(T^{\frac{1}{3}})$. More generally, for an elicitable property $\Gamma$, \pref{alg:hp_algorithm_efficient} achieves $\smcal_{\Gamma, 2} = \tilde{\cO}(T^{\frac{1}{3}})$, thereby improving upon several previous works \cite{noarov2023statistical, garg2024oracle, ghugeimproved, luo2025improved}. 

\begin{remark}\label{rem:ell1_vs_ell2}
    {For $\ell_{1}$-multicalibration, a recent work \cite{ghugeimproved} proposed an inefficient algorithm that achieves $\mcal_{\text{mean}, 1}(\cF) = \tilde{\cO}(T^{\frac{2}{3}})$ for a finite class $\cF$ (see also \cite{noarovhigh} for a different approach to obtain the same result) and an oracle-efficient algorithm that achieves $\mcal_{\text{mean}, 1}(\cF) = \tilde{\cO}(T^{\frac{3}{4}})$ for a general hypothesis class $\cF$. Since $\smcal_{\Gamma, 1}(\cF) \le \sqrt{T \cdot \smcal_{\Gamma, 2}(\cF)}$, we achieve $\smcal_{\text{mean}, 1}(\cF) = \tilde{\cO}(T^{\frac{2}{3}})$ in an oracle-efficient manner, thereby also significantly improving upon \cite{noarovhigh, ghugeimproved}.}
\end{remark}
\section{Technical Overview} 
We provide a technical overview of our work in this section.
\subsection{Warm up with $\cal_{\text{mean, 2}}$: A new general approach} \label{subsec:warm_up}
 We begin by considering the simpler problem of online $\ell_2$-calibration {for means}. Prior works for calibrating means achieve $\cal_{\text{mean}, 2} = \tilde{\cO}(T^{\frac{1}{3}})$ by either using the equivalence between $\ell_2$-calibration and swap regret of the squared loss \citep{luo2025simultaneous, fishelsonfull} or via the concept of calibeating \citep{foster2023calibeating}. However, in \pref{lem:swap_impossible_inequality} in the appendix, we show that when calibrating $q$-th quantiles, the $\ell_{2}$-quantile calibration error cannot be bounded by any strictly increasing and invertible function {$\zeta$} of the swap regret of any loss function, thereby rendering the swap regret-based approach futile. Furthermore, the calibeating-based approach also does not readily generalize beyond means. 
    
     {To circumvent this, we discover yet another approach to achieve $\cal_{\text{mean}, 2} = \tilde{\cO}(T^{\frac{1}{3}})$, which, unlike the previous approaches, can be directly generalized to properties beyond means. 
     Specifically, we observe that to achieve $\cal_{\text{mean}, 2} = \tilde{\cO}(T^{\frac{1}{3}})$, it suffices to      
     derive a high probability bound of the form \begin{align}\label{eq:high_probability_bound_needed}
         \abs{\sum_{t = 1} ^ {T} \ind{p_t = z_i} (y_t - p_t)} = \tilde{\cO}\bigc{\sqrt{n_i} + \frac{n_i}{N}},
     \end{align} 
     where we have ignored dependence on the failure probability $\delta$ for simplicity. 
     Before explaining why this is sufficient, we first point out that 
     \eqref{eq:high_probability_bound_needed} is a natural goal due to the following interpretation: if the forecaster could witness the conditional distribution of $y_{t}$ (e.g., if the adversary moves first instead and at every time $t \in [T]$, it reveals the conditional distribution of $y_t$),  
     then a simple truthful strategy which computes $\tilde{p}_t$ as the conditional expectation of $y_t$  
     and predicts $p_{t} = \argmin_{z \in \cZ} \abs{z - \tilde{p}_t}$ by rounding $\tilde{p}_t$ to the nearest point in $\cZ$ achieves \eqref{eq:high_probability_bound_needed}. This is because, such a forecasting strategy has to pay for the rounding error $\frac{1}{N}$ per round for $n_i$ rounds, and a variance term $\tilde{\cO}(\sqrt{n_i})$ due to not predicting the true label but rather its expectation.
     Of course, since the forecaster has to decide $p_{t}$ without any knowledge about $y_{t}$, it cannot implement this strategy, however, the discussion points towards the possibility that \eqref{eq:high_probability_bound_needed} might be achievable.}
     
     Notably, \eqref{eq:high_probability_bound_needed} indeed implies that $\cal_{\text{mean}, 2} = \tilde{\cO}(T^{\frac{1}{3}})$ with high probability. This is because, $$n_{i} \bigc{\frac{1}{n_i} \sum_{t = 1} ^ {T} \ind{p_t = z_i} (y_t - p_t)}^{2} = \tilde{\cO}\bigc{1 + \frac{n_{i}}{N ^ {2}}},$$ which when summed over $i$ yields $\cal_{\text{mean}, 2} = \sum_{i \in [N]} n_{i} \bigc{\frac{1}{n_i} \sum_{t = 1} ^ {T} \ind{p_t = z_i} (y_t - p_t)}^{2} = \tilde{\cO}\bigc{N + \frac{T}{N ^ {2}}} = \tilde{\cO}(T^{\frac{1}{3}})$ on choosing the discretization size $N = \Theta(T^{\frac{1}{3}})$. More generally, for $r > 2$, we obtain $$n_{i} \abs{\frac{1}{n_i} \sum_{t = 1} ^ {T} \ind{p_t = z_i} (y_t - p_t)} ^ {r} = \tilde{\cO}\bigc{n_{i} ^ {1 - r/2} + \frac{n_{i}}{N ^ {r}}} = \tilde{\cO}\bigc{1 + \frac{n_{i}}{N ^ {r}}},$$
     and thus $\cal_{\text{mean}, r} = \tilde{\cO}\bigc{N + \frac{T}{N ^ {r}}} = \Tilde{\Theta}(T^{\frac{1}{r + 1}})$ on choosing $N = \Theta(T^{\frac{1}{r + 1}})$.
     {Furthermore, this approach is readily generalizable to an arbitrary elicitable property $\Gamma$ with an identification function $V$, by simply replacing $y_t - p_t$ in~\eqref{eq:high_probability_bound_needed} with $V(p_t, y_t)$}.

     So how can one achieve~\eqref{eq:high_probability_bound_needed}? 
     In fact, a recent work by \cite{hu2024predict} almost achieves \eqref{eq:high_probability_bound_needed}
    in their algorithm designed for a different problem of simultaneous swap regret minimization. The only caveat is that their algorithm only ensures an in expectation bound on $\abs{\sum_{t = 1} ^ {T} \ind{p_t = z_i} (y_t - p_t)}$,
     which, as one can verify, is not sufficient to achieve $\cal_{\text{mean}, 2} = \tilde{\cO}(T^{\frac{1}{3}})$ or even $\mathbb{E}[\cal_{\text{mean}, 2}] = \tilde{\cO}(T^{\frac{1}{3}})$. Since the algorithm of \cite{hu2024predict} uses the expert algorithm McMwC of~\citep{chen2021impossible} as a subroutine, a natural approach to achieve the desired high probability bound would be to obtain a high probability external regret guarantee for McMwC, which is unfortunately mentioned as an open problem in \cite{chen2021impossible} and is still unsolved to the best of our knowledge.
     
     {Despite this fact, it turns out that \eqref{eq:high_probability_bound_needed} can still be achieved in a simple manner that is agnostic to the high probability external regret guarantee of McMwC, which is quite surprising.} Towards achieving this, we propose two simplifications to the algorithm of \cite{hu2024predict}, {the first of which is critical for achieving \eqref{eq:high_probability_bound_needed} and the other is important for 
     subsequent developments in the contextual setting (\pref{subsec:ineff_generalization}). We start with a brief introduction of the algorithm of \cite{hu2024predict}.} For each $i \in [N - 1]$, let $\cI_{i} = \left[\frac{i - 1}{N}, \frac{i}{N}\right)$ and $\cI_{N} = \left[\frac{N - 1}{N}, 1\right]$, so that $\cI_{1}, \dots, \cI_{N}$ represents a partition of $[0, 1]$.
    Informally, their algorithm defines a $2N$-expert problem corresponding to a careful choice of the gain function $\ell_{i, \sigma}(p, y) \coloneqq \sigma \ind{p \in \cI_i} (p - y)$ for each expert $(i, \sigma) \in [N] \times \{\pm 1\}$. At each time $t \in [T]$, the algorithm obtains a probability distribution $\{w_{t, i, \sigma}\}$ over the experts via an expert problem subroutine $\mathsf{ALG}$, which is instantiated to be the McMwC algorithm. Subsequently, it hedges against the experts by defining a function $h_{t}(P)$ that depends on the weights $\{w_{t, i, \sigma}\}$ and the gains $\{\ell_{i, \sigma}\}$ as (with $\mathscr{F}_{t - 1}$ being the filtration generated by the random variables $p_{1}, \dots, p_{t - 1}, y_{1}, \dots, y_{t - 1}$)
    \begin{align*}
        h_{t}(P) \coloneqq \max_{y \in \{0, 1\}} \mathbb{E}_{p \sim P}\bigs{\sum_{(i, \sigma)} w_{t, i, \sigma} \ell_{i, \sigma}(p, y) | \mathscr{F}_{t - 1}},
    \end{align*} obtains a distribution $P_{t}$ satisfying $h_{t}(P_{t}) = \cO\bigc{\frac{1}{T}}$, samples $\tilde{p}_t \sim P_t$, and forecasts $p_{t} = z_{i_t}$, where $i_{t} \in [N]$ is such that $\tilde{p}_{t} \in \cI_{i_t}$. {Finally, after observing $y_t$, we feed the gain $\ell_{t, i, \sigma} \coloneqq \ell_{i, \sigma}(\tilde{p}_t, y_t)$ for each $(i, \sigma)$ to $\alg$. Our proposed modifications are as follows:}
    \begin{enumerate}[leftmargin=*]
        \item \label{item:modification_I} Instead of feeding the loss $\ell_{i, \sigma}(\tilde{p}_t, y_{t})$, we feed a modified loss $\ell_{t, i, \sigma} = \mathbb{E}_{p \sim P_{t}} [\ell_{i, \sigma}(p, y_t) | \mathscr{F}_{t - 1}]$ for each $(i, \sigma)$.
        \item \label{item:modification_II} While \cite{hu2024predict} argue that {a distribution $P_t$ satisfying $h_{t}(P_t) = \cO\bigc{\frac{1}{T}}$} can be obtained in $\text{poly}(T)$ time by using the FTPL-based approach suggested by \cite{noarovhigh}, we provide a simple closed-form construction of $P_{t}$ that is supported on at most two points, {following a similar idea utilized in \cite{gupta2022online, okoroafor2025near}}. To state our construction, we define the function $\Phi_{t}: [0, 1] \to [-1, 1]$ as ${\Phi_{t}(p) \coloneqq \sum_{(i, \sigma)} w_{t, i, \sigma} \sigma \ind{p \in \cI_{i}}}$. If $\Phi_{t}(0) > 0$, we choose $P_t$ that is only supported on $0$; else if $\Phi_{t}(1) \le 0$, we choose $P_t$ that is only supported on $1$; else we choose $i \in \{0, \dots, T - 1\}$ such that $\Phi_{t}\bigc{\frac{i}{T}} \Phi_{t}\bigc{\frac{i + 1}{T}} \le 0$ and $P_t$ such that \begin{align*}
        P_t\bigc{\frac{i}{T}} = \frac{\abs{\Phi_{t}\bigc{\frac{i + 1}{T}}}}{\abs{\Phi_{t}\bigc{\frac{i}{T}}} + \abs{\Phi_{t}\bigc{\frac{i + 1}{T}}}}, \quad P_t\bigc{\frac{i + 1}{T}} = \frac{\abs{\Phi_{t}\bigc{\frac{i}{T}}}}{\abs{\Phi_{t}\bigc{\frac{i}{T}}} + \abs{\Phi_{t}\bigc{\frac{i + 1}{T}}}}.
\end{align*}
    \end{enumerate}
    {To see why the first modification leads to \eqref{eq:high_probability_bound_needed}, we realize the following: \begin{align*}
         \abs{\sum_{t = 1} ^ {T} \ind{\tilde{p}_t \in \cI_i} (y_t - p_{t})} &\le \abs{\sum_{t = 1} ^ {T} \ind{\tilde{p}_t \in \cI_i} (y_t - \tilde{p}_t)} + \abs{\sum_{t = 1} ^ {T} \ind{\tilde{p}_t \in \cI_i} (\tilde{p}_t - p_t)} \\ 
         &\le \max_{\sigma \in \{\pm 1\}} \sum_{t = 1} ^ {T} \ell_{i, \sigma}(\tilde{p}_t, y_t) + \frac{n_i}{N} \\
         &\le \max_{\sigma \in \{\pm 1\}} \sum_{t = 1} ^ {T} \ell_{t, i, \sigma} + \max_{\sigma \in \{\pm 1\}} \sum_{t = 1} ^ {T} \ell_{i, \sigma}(\tilde{p}_t, y_t) - \ell_{t, i, \sigma} + \frac{n_i}{N},
    \end{align*}
    where the second inequality follows from the definition of $\ell_{i, \sigma}$ and $\abs{\tilde{p}_t - p_{t}} \le \frac{1}{N}$. Since the McMwC algorithm receives gain $\ell_{t, i, \sigma}$ for each $(i, \sigma)$, it follows from the regret guarantee of McMwC that (with $\mathbb{P}_{t}[.] \coloneqq \mathbb{P}(. | \mathscr{F}_{t - 1})$) \begin{align*}
        \sum_{t = 1} ^ {T} \ell_{t, i, \sigma} \le \sum_{t = 1} ^ {T} \sum_{(i', \sigma')} w_{t, i', \sigma'} \ell_{t, i', \sigma'} + \tilde{\cO}\bigc{\sqrt{\sum_{t = 1} ^ {T} \ell_{t, i, \sigma} ^ {2}}} = \tilde{\cO}\bigc{\sqrt{\sum_{t = 1} ^ {T} \mathbb{P}_t (\tilde{p}_t \in \cI_i)}},
    \end{align*}
    where the equality follows since for each $t$, $\sum_{(i', \sigma')} w_{t, i', \sigma'} \ell_{t, i', \sigma'} \le h_t(P_t) = \O(\frac{1}{T})$ and $\ell_{t, i, \sigma} ^ 2 \le \mathbb{P}_t (\tilde{p}_t \in \cI_i)$ by the definition of $\ell_{t, i, \sigma}$. Next, observe that the sequence $X_{t} \coloneqq \ell_{i, \sigma}(\tilde{p}_t, y_t) - \ell_{t, i, \sigma} = \ell_{i, \sigma}(\tilde{p}_t, y_t) - \mathbb{E}_{p \sim P_{t}} [\ell_{i, \sigma}(p, y_t) | \mathscr{F}_{t - 1}]$ is a martingale difference sequence (conditioned on $\mathscr{F}_{t - 1}$, the random variables $\tilde{p}_{t}$ (or $p_t$) and $y_t$ are independent) with cumulative variance $\sum_{t = 1} ^ {T}\mathbb{E}_{t}\bigs{X_t ^ {2}} \le \sum_{t = 1} ^ {T} \mathbb{E}_t [\ell_{i, \sigma}(\tilde{p}_t, y_t) ^ {2}] \le \sum_{t = 1} ^ {T} \mathbb{P}_t(\tilde{p}_t \in \cI_i)$, where $\mathbb{E}_{t}[.] \coloneqq \mathbb{E}[. | \mathscr{F}_{t - 1}]$. Therefore, applying Freedman's inequality (\pref{lem:Freedman}), we obtain $\abs{\sum_{t = 1} ^ {T} X_{t}} = \tilde{\cO}\bigc{\sqrt{\sum_{t = 1} ^ {T} \mathbb{P}_t(\tilde{p}_t \in \cI_i)}}$ with high probability. Combining everything, \begin{align*}
        \abs{\sum_{t = 1} ^ {T} \ind{\tilde{p}_t \in \cI_i} (y_t - p_t)} = \tilde{\cO}\bigc{\sqrt{\sum_{t = 1} ^ {T} \mathbb{P}_t(\tilde{p}_t \in \cI_i)} + \frac{n_i}{N}} = \tilde{\cO}\bigc{\sqrt{n_i} + \frac{n_i}{N}},
    \end{align*}
    since $\sqrt{\sum_{t = 1} ^ {T} \mathbb{P}_t(\tilde{p}_t \in \cI_i)} = \tilde{\cO}(\sqrt{n}_i)$ with high probability by applying Freedman's inequality again to the sequence $Z_{t} \coloneqq \mathbb{P}_{t}(\tilde{p}_t \in \cI_i) - \ind{\tilde{p}_t \in \cI_i}$}.
    This thus achieves \eqref{eq:high_probability_bound_needed} without the need of getting a high-probability version of the regret guarantee of McMwC.
    
    {We remark again that while the second modification has not been used yet in the above discussion, it will subsequently be quite helpful in the contextual setting, where we utilize it to obtain tail bounds on the sum of a sequence that is not necessarily a martingale difference sequence. Moreover, we do not flesh out details of the above discussion in the paper since it is a special case of the contextual setting (\pref{subsec:inefficient_multicalibration}) and  \pref{lem:hp_deviation_inefficient} is a generalized version of \eqref{eq:high_probability_bound_needed}.}

    \subsection{An inefficient generalization to the contextual setting} \label{subsec:ineff_generalization} By combining the two modifications above, 
    we generalize to an arbitrary elicitable property and also to the contextual setting where the forecaster receives a context $x_t \in \cX$, predicts $p_t \in [0, 1]$, and finally observes $y_{t} \sim Y_{t}$, where $Y_{t}$ is a distribution over $\cY \subseteq [0, 1]$ that is chosen by the adversary.  In \pref{subsec:inefficient_multicalibration}, we propose an inefficient algorithm (\pref{alg:hp_algorithm}), which for a fixed $r \ge 2$, finite $\cF$, and an elicitable property $\Gamma$ {with an identification function $V$ so that the marginal $V(., Y) \coloneqq \mathbb{E}_{y \sim Y} [V(., y)]$ is
    $\rho$-Lipschitz (\pref{ass:assumption_identification})}, achieves $$\smcal_{\Gamma, r}(\cF) = \tilde{\cO}\bigc{{\rho ^ {r} T ^ {\frac{1}{r + 1}} + T^{\frac{1}{r + 1}} \bigc{\log \frac{\abs{\cF}}{\delta}} ^ {\frac{r}{2}}}}$$ with probability at least $1 - \delta$ (\pref{thm:smcal_bound_inefficient}). \pref{alg:hp_algorithm} defines a $2N\abs{\cF}$-expert problem corresponding to a choice of the gain function $\ell_{f, i, \sigma}(p, x, y) \coloneqq \sigma \ind{p \in \cI_i} f(x) V(p, y)$ for each $(f, i, \sigma) \in \cF \times [N] \times \{\pm 1\}$. As mentioned in \pref{subsec:warm_up} above, at each time $t \in [T]$, the algorithm hedges against the experts by defining the function $$h_{t}(P) \coloneqq \sup_{y \in \cY} \mathbb{E}_{p \sim P}\bigs{\sum_{(f, i, \sigma)} w_{t, f, i, \sigma} \ell_{f, i, \sigma}(p, x_t, y) | \mathscr{F}_{t - 1}},$$
    where $\{w_{t, f, i, \sigma}\}$ is the probability distribution over experts as output by the expert problem subroutine $\mathsf{ALG}$. 
   Subsequently, it obtains a distribution $P_{t}$ satisfying $h_t(P_t) = \cO\bigc{\frac{1}{T}}$ in a similar manner as \pref{subsec:warm_up},  samples $\tilde{p}_t \sim P_{t}$, and forecasts $p_{t} = \frac{i_{t}}{N}$, where $i_{t} \in [N]$ is the unique index such that $\tilde{p}_{t} \in \cI_{i_t}$. Finally, after observing $y_t$, we feed the gain ${\ell_{t, f, i, \sigma} \coloneqq \mathbb{E}_{p \sim P_t}\bigs{\ell_{f, i, \sigma}(p, x_t,  y_{t}) | \mathscr{F}_{t - 1}}}$ for each $(f, i, \sigma)$ to $\alg$.

    Although the proposed modifications
    and generalization to the algorithm of \cite{hu2024predict} are simple, their combination with a martingale-based analysis {using Freedman's inequality (\pref{lem:Freedman})} in the style of \cite{luo2025improved} results in a substantially different and technical proof compared to both \cite{hu2024predict} and \cite{luo2025improved}. Moreover, the central idea in \cite{luo2025improved, garg2024oracle} is a reduction of swap multicalibration to contextual swap regret, which is quite different from our approach. As a highlight of our proof, we utilize our explicit construction of $P_{t}$ to obtain tail bounds on $\abs{\sum_{t = 1} ^ {T} W_{t}}$ for a non-martingale sequence $\{W_{t}\}_{t = 1} ^ {T}$. To elaborate on this further, we provide a detailed proof sketch of \pref{thm:smcal_bound_inefficient} here. Similar to calibration for means (\pref{subsec:warm_up}), we begin by deriving a high probability bound on the quantity $\sup_{f \in \cF} \abs{\sum_{t = 1} ^ {T} \ind{p_t = z_i} f(x_t) V(p_t, y_t)}$ that holds for all $i \in [N]$ (\pref{lem:hp_deviation_inefficient}), in the following manner:
\begin{enumerate}[label=(\roman*), leftmargin=*]
    \item Fix $i \in [N], f \in \cF$, and failure probability $\delta$. Recall that $n_{i} = \sum_{t = 1} ^ {T} \ind{p_t = z_i} = \sum_{t = 1} ^ {T} \ind{\tilde{p}_t \in \cI_i}$ is the number of time instants the prediction made is $p_{t} = z_i$, or equivalently $\tilde{p}_t \in \cI_i$. It follows from the triangle inequality that $\abs{\sum_{t = 1} ^ {T} \ind{p_t = z_i} f(x_t) V(p_t, y_t)} \le \abs{\sum_{t = 1} ^ {T} X_t} + \abs{\sum_{t = 1} ^ {T} U_t} + \abs{\sum_{t = 1} ^ {T} Z_t} + \abs{\sum_{t = 1} ^ {T} W_t}$, where the sequences $\{X_{t}\}, \{U_{t}\}, \{Z_{t}\}$, and $\{W_{t}\}$ are defined as
    \begin{align*}
        X_{t} &\coloneqq \ind{\tilde{p}_t \in \cI_i} f(x_t) \cdot \bigc{V(z_i, y_t) - \mathbb{E}_{t} [V(z_i, y_{t})]}, \\
        U_{t} &\coloneqq \ind{\tilde{p}_t \in \cI_i} f(x_t) \cdot  (\mathbb{E}_t [V(z_i, y_t)] - \mathbb{E}_{t} [V(\tilde{p}_t, y_t)]), \\
        Z_{t} &\coloneqq \ind{\tilde{p}_t \in \cI_i} f(x_t) \cdot V(\tilde{p}_t, y_t), \\
        W_{t} &\coloneqq \ind{\tilde{p}_t \in \cI_i} f(x_t) \cdot (\mathbb{E}_{t}[V(\tilde{p}_t, y_t)] - V(\tilde{p}_t, y_t)).
    \end{align*}
    In the next steps, we discuss our approach to bound the sum of each individual sequence. As already mentioned before, conditioned on $\mathscr{F}_{t - 1}$, the random variables $p_{t}$ (or $\tilde{p}_{t}$) and $y_{t}$ are independent. Since $V(., Y_t)$ is $\rho$-Lipschitz and $\abs{p_{t} - \tilde{p}_{t}} \le \frac{1}{N}$, we immediately obtain $\abs{U_{t}} \le \frac{\rho}{N}$. 
    \item \label{step:II} Observe that $\{X_{t}\}$ is a martingale difference sequence with $\abs{X_t} \le 2$. Applying Freedman's inequality carefully over a dyadic partition of the interval $\bigs{\tilde{\Theta}\bigc{\frac{\log \frac{1}{\delta}}{\sqrt{T}}}, \frac{1}{2}}$, similar to \cite[Lemma 8]{luo2025improved}, we obtain \begin{align*}
        \abs{\sum_{t = 1} ^ {T} X_{t}} = \tilde{\cO}\bigc{\sqrt{\bigc{\sum_{t = 1} ^ {T} \mathbb{P}_{t}(\tilde{p}_t \in \cI_i)} \log \frac{1}{\delta}} + \log \frac{1}{\delta}}.
    \end{align*}
    \item Next, we bound $\abs{\sum_{t = 1} ^ {T} Z_{t}}$. It follows from the definition of the gain function $\ell_{f, i, \sigma}$ that \begin{align*}
    \abs{\sum_{t = 1} ^ {T} Z_{t}} &= \max_{\sigma \in \{\pm 1\}} \sum_{t = 1} ^ {T} \ell_{f, i, \sigma}(\tilde{p}_t, x_t, y_t) \\
    &\le \max_{\sigma \in \{\pm 1\}} \sum_{t = 1} ^ {T} \ell_{t, f, i, \sigma} + \max_{\sigma \in \{\pm 1\}} \sum_{t = 1} ^ {T} \ell_{f, i, \sigma} (\tilde{p}_t, x_t, y_t) - \mathbb{E}_{\tilde{p}_t}[\ell_{f, i, \sigma} (\tilde{p}_t, x_t, y_t) | \mathscr{F}_{t - 1}].
    \end{align*}
    Invoking the regret guarantee of $\alg$ (\pref{lem:expert_minimization_alg}), we obtain \begin{align*}
        \max_{\sigma \in \{\pm 1\}} \sum_{t = 1} ^ {T} \ell_{t, f, i, \sigma} \le \sum_{t = 1} ^ {T} \sum_{(f, i, \sigma)} w_{t, f, i, \sigma} \ell_{t, f, i, \sigma} + \tilde{\cO}\bigc{\log (\abs{\cF}N) + \sqrt{\bigc{\sum_{t = 1} ^ {T} \mathbb{P}_{t}(\tilde{p}_t \in \cI_i)} \log (\abs{\cF}N)}}.
    \end{align*}
    Moreover, $\sum_{t = 1} ^ {T} \sum_{(f, i, \sigma)} w_{t, f, i, \sigma} \ell_{t, f, i, \sigma} \le \sum_{t = 1} ^ {T} h_t (P_t) = \cO(1)$. For a fixed $\sigma \in \{\pm 1\}$, the sequence $\ell_{f, i, \sigma} (\tilde{p}_t, x_t, y_t) - \mathbb{E}_{\tilde{p}_t}[\ell_{f, i, \sigma} (\tilde{p}_t, x_t, y_t) | \mathscr{F}_{t - 1}]$ is a martingale difference sequence. Repeating a similar analysis as that done for the sequence $\{X_{t}\}$, we obtain \begin{align*}
        \abs{\sum_{t = 1} ^ {T} Z_{t}} = \tilde{\cO}\bigc{\sqrt{\bigc{\sum_{t = 1} ^ {T} \mathbb{P}_{t}(\tilde{p}_t \in \cI_i)} \log \frac{\abs{\cF} N}{\delta}} + \log \frac{\abs{\cF} N}{\delta}}.
    \end{align*}
    \item Towards bounding $\abs{\sum_{t = 1} ^ {T} W_{t}}$, we observe that the sequence $\{W_{t}\}$ is not necessarily a martingale difference sequence, due to the indicator $\ind{\tilde{p}_t \in \cI_i}$. However, using the explicit form of $P_{t}$ (thanks to our second modification to \cite{hu2024predict}) and $\rho$-Lipschitzness of $V(., Y_{t})$, we argue that $\abs{\mathbb{E}_t[W_{t}]} \le \frac{\rho}{T}$. Thereafter, we consider the sequence $\{W_{t} - \mathbb{E}_t[W_{t}]\}$, which is clearly a martingale difference sequence, and bound $\abs{\sum_{t = 1} ^ {T} W_{t} - \mathbb{E}_t[W_t]}$ in an exactly same manner as \pref{step:II}. Since $\abs{\sum_{t = 1} ^ {T} W_t} \le \abs{\sum_{t = 1} ^ T W_t - \mathbb{E}_t[W_t]} + \abs{\sum_{t = 1} ^ {T} \mathbb{E}_t[W_t]}$ and $\abs{\mathbb{E}_t[W_t]} \le \frac{\rho}{T}$, we obtain \begin{align*}
    \abs{\sum_{t = 1} ^ {T} W_{t}} =  \tilde{\cO}\bigc{\sqrt{\bigc{\sum_{t = 1} ^ T \mathbb{P}_{t}(\tilde{p}_t \in \cI_i)} \log \frac{1}{\delta}} + \log \frac{1}{\delta}}.
\end{align*}
    \item Finally, we relate $\sum_{t = 1} ^ {T} \mathbb{P}_{t}(\tilde{p}_t \in \cI_i)$ with $n_{i}$ by bounding $\abs{\sum_{t = 1} ^ {T} \mathbb{P}_{t}(\tilde{p}_t \in \cI_i) - \ind{\tilde{p}_t \in \cI_i}}$. To achieve so, we consider the sequence $\bigcurl{\mathbb{P}_t(\tilde{p}_t \in \cI_i) - \ind{\tilde{p}_t \in \cI_i}}$, which is again a martingale difference sequence. Proceeding similarly as \pref{step:II}, we obtain \begin{align*}
        \abs{\sum_{t = 1} ^ {T} \ind{\tilde{p}_t \in \cI_i} - \mathbb{P}_t (\tilde{p}_t \in \cI_i)} &= \tilde{\cO}\bigc{\sqrt{\bigc{\sum_{t = 1} ^ T \mathbb{P}_{t}(\tilde{p}_t \in \cI_i)} \log \frac{1}{\delta}} + \log \frac{1}{\delta}}.
    \end{align*}
    Let $\alpha \coloneqq \sum_{t = 1} ^ {T} \mathbb{P}_t (\tilde{p}_t \in \cI_i), \beta = \log \frac{1}{\delta}$, and $\gamma > 0$ be the constant (including logarithmic terms) hidden in the $\tilde{\cO}(.)$ notation. Then the equation in the display above can be expressed as $\abs{n_i - \alpha} \le \gamma (\sqrt{\alpha \beta} + \beta)$. Solving for $\alpha$, we obtain $\alpha \le \gamma \sqrt{\beta} + \sqrt{n_i} + \sqrt{\gamma \beta} = \tilde{\cO}\bigc{\sqrt{\log \frac{1}{\delta}} + \sqrt{n_i}}$. 
\end{enumerate}

Combining everything and taking a union bound over all $i \in [N], f \in \cF$, we obtain $$\sup_{f \in \cF} \abs{\sum_{t = 1} ^ {T} \ind{p_t = z_i} f(x_t) V(p_t, y_t)} = \tilde{\cO}\bigc{\frac{\rho n_i}{N} + \sqrt{n_i \log \frac{\abs{\cF} N}{\delta}} + \log \frac{\abs{\cF} N}{\delta}}$$ 
with probability at least $1 - \delta$. The result of \pref{thm:smcal_bound_inefficient} then follows by directly plugging the above bound in the definition of $\smcal_{\Gamma, r}(\cF)$ \eqref{eq:smcal_def_intro}.

\subsection{An oracle efficient algorithm} Finally, in \pref{subsec:oracle_efficient_multicalibration}, we propose an oracle-efficient algorithm (\pref{alg:hp_algorithm_efficient}) by reducing to online agnostic learning. Our oracle-efficient algorithm builds on \pref{alg:hp_algorithm} and achieves efficiency by using $\owal$, similar to \cite{okoroafor2025near}. 
    We remark that while the idea of making \pref{alg:hp_algorithm} oracle-efficient is adopted from \cite{okoroafor2025near}, its combination with other techniques proposed in this paper results in an overall different algorithm and a significantly different analysis compared to \cite{okoroafor2025near}. 

    As mentioned briefly in \pref{subsec:contributions}, instead of enumerating over each $f \in \cF$, \pref{alg:hp_algorithm_efficient} instantiates $2N$ copies of $\owal$, parametrized by $(i, \sigma)$, and audits against the auxiliary test function $q_{t, i, \sigma}$ output by $\owal_{i, \sigma}$, i.e., we consider a modified function $$h_{t}(P) \coloneqq \sup_{y \in \cY} \mathbb{E}_{p \sim P}\bigs{\sum_{(i, \sigma)} w_{t, i, \sigma} \ell_{t, i, \sigma}(p, y) | \mathscr{F}_{t - 1}},$$ where $\ell_{t, i, \sigma}(p, y) \coloneqq \sigma \ind{p \in \cI_i} q_{t, i, \sigma}(x_t) V(p, y)$. Thereafter, we obtain a distribution $P_{t}$ satisfying $h_t(P_t) = \cO(\frac{1}{T})$ and make a prediction $p_t$ in a similar way as before. {After observing $y_t$, we feed the gain $\ell_{t, i, \sigma} \coloneqq \mathbb{E}_{p \sim P_{t}}[\ell_{t, i, \sigma}(p, y_t) | \mathscr{F}_{t - 1}]$ for each $(i, \sigma)$ to $\alg$ and a feedback outcome $\sigma V(p_t, y_t)$ to $\owal_{i_{t}, \sigma}$ for each $\sigma \in \{\pm 1\}$. 
    
    Next, we provide a proof sketch of the swap multicalibration guarantee  (\pref{thm:smcal_general_result_introduction}) of \pref{alg:hp_algorithm_efficient}. Let $\kappa_{t, i, \sigma} \coloneqq \sigma \ind{p_{t} = z_i} V(p_t, y_t)$.
    Fix a $i \in [N]$ and failure probability $\delta$. We begin by realizing that \begin{align*}\sup_{f \in \cF} \abs{\sum_{t = 1} ^ {T} \ind{p_t = z_i} f(x_t) V(p_t, y_t)} &= \sup_{\sigma \in \{\pm 1\}, f \in \cF} \sum_{t = 1} ^ {T} \sigma \ind{p_t = z_i} f(x_t) V(p_t, y_t) \\
    &= \sup_{\sigma \in \{\pm 1\}, f \in \cF} \sum_{t = 1} ^ {T} \kappa_{t, i, \sigma} f(x_{t}) \\
    &\le \sup_{\sigma \in \{\pm 1\}} \sum_{t = 1} ^ {T} \kappa_{t, i, \sigma} q_{t, i, \sigma}(x_{t}) + {\mathsf{Reg}(\cF, n_i)},
\end{align*}
where the last inequality follows from \pref{def:online_weak_agnostic_learner_intro}.
For the next step, we bound $\sup_{\sigma \in \{\pm 1\}} \sum_{t = 1} ^ {T} \kappa_{t, i, \sigma} q_{t, i, \sigma}(x_{t})$ by proceeding similar to the proof sketch in \pref{subsec:ineff_generalization}, which eventually leads to
$$\sup_{f \in \cF} \abs{\sum_{t = 1} ^ {T} \ind{p_t = z_i} f(x_t) V(p_t, y_t)} = \tilde{\cO}\bigc{\frac{\rho n_i}{N} + \sqrt{n_i \log \frac{\abs{\cF} N}{\delta}} + \log \frac{\abs{\cF} N}{\delta} + {\reg(\cF, n_i)}}$$ for all $i \in [N]$ with probability at least $1 - \delta$. The proof of \pref{thm:smcal_general_result_introduction} then follows by plugging the above bound to the definition of $\smcal_{\Gamma, r}(\cF)$ \eqref{eq:smcal_def_intro}.
}
    
    We conclude this section with the following remarks. 
    \begin{remark}[Oracle comparison]
        Note that compared to \cite{garg2024oracle}, which assumes an online squared error regression oracle for $\cF$ and \cite{ghugeimproved}, which assumes an offline optimization (multicalibration evaluation) oracle, we assume an online agnostic learner for $\cF$, {which is incomparable to both oracles}.
        Since multicalibration is generally considered a stronger notion compared to omniprediction \citep{gopalan_et_al:LIPIcs.ITCS.2022.79} and recent breakthrough by \cite{okoroafor2025near} assumes an online agnostic learner for $\cF$ for achieving oracle-efficient online omniprediction, it is reasonable to assume the same for multicalibration. Even in the distributional setting, multicalibration auditing using a weak agnostic learner is quite standard (see, for example \citep{hebert2018multicalibration}).
    \end{remark}
    \begin{remark}[Lower Bounds]
        While our results imply an efficient algorithm that achieves $\cal_{\Gamma, 2} = \tilde{\cO}(T^{\frac{1}{3}})$ for an elicitable property $\Gamma$, the question of whether this is tight for every $\Gamma$ (in terms of a matching lower bound) is wide open and beyond the scope of this work. For means, it is known that $\tilde{\Omega}(T^{0.08}) = \cal_{\text{mean}, 2} = \tilde{\cO}(T^{\frac{1}{3}})$, where the lower bound is due to \cite{qiao2021stronger, dagan2025breaking}. Closing the gap seems to require significant theoretical advancements.
    \end{remark}

\section{Preliminaries}

\paragraph{Notation.} For any $m \in [N]$, $[m]$ denotes the index set $\{1, \dots, m\}$. Throughout the paper, we reserve calligraphic alphabets, e.g., $\cP, \cZ$ to denote sets, where as probability distributions over sets are represented by upper-case letters. For a set $\cA$, $\Delta(\cA)$ represents the simplex over $\cA$, i.e., the set of all probability distributions over $\cA$. We use $\mathbb{I}[.]$ to represent the indicator function, which evaluates to $1$ if the predicate is true and $0$ otherwise. For a set $\cS$, $\bar{\cS}$ represents the complement $\Omega\backslash \cS$ of $\cS$, where the sample set $\Omega$ shall be clear from the context. Finally, we use the notations $\tilde{\cO}, \tilde{\Omega}, \tilde{\Theta}$ to hide lower-order logarithmic terms in $T$.

\subsection{Property Elicitation}
 Let the label space be $\cY \subseteq [0, 1]$ and $\cP$ be a family of probability distributions supported over $\cY$. We view a property $\Gamma: \cP \to [0, 1]$ as a mapping from a distribution to a scalar value.
 A loss function $\ell: [0, 1] \times \cY \to \mathbb{R}$ takes as input a prediction $\gamma$ about the property value, a label $y$, and maps it to a real number. We say that a property $\Gamma$ is \emph{elicitable} if there exists a loss function that is minimized at the true property value. This is formalized in the following definition, which is central to our work.
\begin{definition}[Strictly consistent loss function, Property elicitation] Fix a property $\Gamma$, a family of probability distributions $\cP$. A loss function $\ell$ is strictly $\cP$-consistent if $\Gamma(P) \in \argmin_{\gamma \in [0, 1]} \ell(\gamma, P)$
for all $P \in \cP$, where $\ell(\gamma, P) \coloneqq \mathbb{E}_{y \sim P} [\ell(\gamma, y)]$. We also say that $\ell$ elicits $\Gamma$ 
and that $\Gamma$ is elicitable if there exists a strictly $\cP$-consistent loss function. 
\end{definition}
In other words, $\ell$ is strictly $\cP$-consistent if for each $P \in \cP$, the true property value $\Gamma(P)$ can be obtained by minimizing the expected loss of $\ell$ over samples drawn from $P$. We provide several examples of elicitable properties: 
\begin{itemize}
    \item The squared loss $\ell(\gamma, y) = \frac{1}{2}(\gamma - y) ^ {2}$ elicits the mean; more generally, $\ell(\gamma, y) = (\gamma - y^{k}) ^ {2}$ elicits the \emph{raw moment} $\mathbb{E}[y ^ {k}]$, where $k \in \mathbb{N}$. 
    \item The $\ell_{1}$-loss $\ell(\gamma, y) = \frac{1}{2}\abs{\gamma - y}$ elicits the median; more generally, the pinball loss $\ell(\gamma, y) = (\gamma - y)\bigc{(1 - q)  \ind{y \le \gamma} - q \ind{y > \gamma}}$ elicits the $q$-th quantile for $q \in [0, 1]$.
    \item For a discrete distribution with finite support, the $0-1$ loss $\ell(\gamma, y) = \ind{\gamma \neq y}$ elicits the \emph{mode}.
    \item {For a fixed $\tau \in (0, 1)$, the loss $\ell(\gamma, y) = \frac{1}{2}(y - \gamma) ^ {2} \abs{\tau - \ind{\gamma - y \ge 0}}$ elicits the \emph{expectile}, 
    which corresponds to a $t$ that satisfies $\int_{-t} ^ {\infty} \abs{t - z} d\mu(z) = \tau \int_{-\infty} ^ {\infty} \abs{t - z} d\mu(z)$, where $\mu$ is the cumulative distribution function of $y$}.
    \item {Let $P$ be a distribution supported over $\{0, 1\}$ and $\Gamma(P) = \mathbb{P}(y = 1)$. A loss $\ell: [0, 1] \times \{0, 1\} \to \mathbb{R}$ is called \emph{proper} if $\mathbb{E}_{y \sim \gamma} [\ell(\gamma, y)] \le \mathbb{E}_{y \sim \gamma} [\ell(\gamma', y)]$ for all $\gamma, \gamma' \in [0, 1]$, e.g., squared loss, log loss $\ell(\gamma, y) = -y \log \gamma - (1 - y) \log (1 - \gamma)$, spherical loss $\ell(\gamma, y) = -\frac{\gamma y + (1 - \gamma) (1 - y)}{\sqrt{\gamma ^ 2 + (1 - \gamma) ^ {2}}}$, Tsallis entropy $\ell(\gamma, y) = (\alpha - 1) \gamma ^ {\alpha} - \alpha \gamma ^ {\alpha - 1} y$ for $\alpha > 1$, etc. More generally, given any concave function $f(\gamma)$, the loss $\ell(\gamma, y) = f(\gamma) + \ip{\partial f(\gamma)}{y - \gamma}$ is proper \cite{gneiting2007strictly}, where $\partial f$ represents a supergradient of $f$. Clearly, any proper loss $\ell$ elicits $\Gamma$ by definition.}
\end{itemize} 
Next, we define the notion of an identification function for a property, which intuitively measures the overestimate/underestimate of the true property value.

\begin{definition}[Identification function] Fix a property $\Gamma$ and a space of probability distributions $\cP$. A function $V: [0, 1] \times \cY \to \mathbb{R}$ is called a $\cP$-identification function for $\Gamma$ if $V(\gamma, P) = 0 \iff \gamma = \Gamma(P)$ for all $P \in \cP$, where $V(\gamma, P) \coloneqq \mathbb{E}_{y \sim P}[V(\gamma, y)]$. We say that $\Gamma$ is identifiable if there exists a $\cP$-identification function for $\Gamma$.

\end{definition}

As can be concluded from the definition, for any distribution $P \in \cP$, the true property value $\Gamma(P)$ is a root of the expected identification function over samples drawn from $P$. For example, $V(\gamma, y) = \gamma - y$ is an identification function for mean, whereas $V(\gamma, y) = \ind{y \le \gamma} - 0.5$ is an identification function for median; more generally, $V(\gamma, y) = \ind{y \le \gamma} - q$ is an identification function for the $q$-th quantile.  

A seminal result due to \cite{steinwart2014elicitation} is that under mild technical assumptions on the family $\cP$ and the property $\Gamma$, the following conditions are equivalent: (a) $\Gamma$ is elicitable; (b) there exists a strictly consistent loss function for $\Gamma$; (c) there exists an identification function for $\Gamma$; (d) the level sets $\Gamma_{\alpha} \coloneqq \{P \in \cP; \Gamma(P) = \alpha\}$ of $\Gamma$ are convex. An immediate consequence of the characterization is that it allows us to identify several properties that are not elicitable. For example, consider the conditional value at risk (also known as expected shortfall), which for a target quantile $q \in [0, 1]$ is defined as $\text{CVaR}_{q}(P) \coloneqq \mathbb{E}_{y \sim P}[y | y > f_{q}(P)]$, where $f_q(P)$ represents the $q$-th quantile of $P$. CVaR is of central significance in financial risk assesment, however, is not elicitable since its level sets are not convex \cite{gneiting2011making}. Similarly, when $y$ is a binary random variable and $\cP$ is the set of all Bernoulli distributions with mean $p$ for $p \in [0, 1]$, the variance has non-convex level sets, therefore is not elicitable.

\subsection{Problem Setting}\label{sec:problem_setting}
Let $\Gamma$ be an elicitable property. Following \cite{noarov2023statistical}, we consider the following online learning protocol: at each time $t = 1, \dots, T$, (a) the adversary selects a feature vector $x_{t} \in \cX$ and a distribution $Y_{t} \in \Delta(\cY)$ over the label space $\cY \subseteq \{0, 1\}$, and reveals $x_{t}$; (b) the forecaster randomly predicts $p_{t} \in [0, 1]$; (c) the adversary reveals $y_{t} \sim Y_{t}$. Throughout, (a) we assume that the adversary is adaptive, i.e., $x_t, Y_{t}$ are chosen with complete knowledge about the forecaster's algorithm and $p_{1}, \dots, p_{t - 1}$, and (b) the forecaster predicts $p_{t} \in \cZ \subset [0, 1]$ for all $t \in [T]$, where $\cZ = \{ z_1, \dots, z_{N}\}$ is a finite discretization of $[0, 1]$ and $z_{i} = \frac{i}{N}$ for all $i \in [N]$. We let $\mathbb{E}_{t}[.], \mathbb{P}_{t}[.]$ represent the conditional expectation, probability respectively, where the conditioning is over all randomness till time $t - 1$ (inclusive). Formally, letting $\mathscr{F}_{t - 1}$ be the filtration generated by the random variables $p_{1}, \dots, p_{t - 1}, y_{1}, \dots, y_{t - 1}$, we have $\mathbb{E}_{t}[.] \coloneqq \mathbb{E}[. | \mathscr{F}_{t - 1}], \mathbb{P}_{t}[.] \coloneqq \mathbb{P}[. | \mathscr{F}_{t - 1}]$ respectively.
 
 For a property $\Gamma$, an identification function $V: [0, 1] \times \cY \to [-1, 1]$
 takes as input the forecaster's prediction $p$, the adversary's chosen outcome $y$ and maps it to a real number in $[-1, 1]$ (assumed without any loss of generality). Similar to \cite{noarov2023statistical}, we assume that the marginal $V(p, Y_t) \coloneqq \mathbb{E}_{y \sim Y_t}[V(p, y) | \mathscr{F}_{t - 1}]$ is $\rho$-Lipschitz in $p$.
 \begin{assumption}\label{ass:assumption_identification}
     The function $V(p, Y_t)$ satisfies \begin{align}\label{eq:assumption_lipschitzness}
    \abs{V(p_2, Y_{t}) - V(p_1, Y_t)} \le \rho \abs{p_2 - p_1} \text{ for all } 0 \le p_1, p_2 \le 1.
\end{align}
Furthermore, $V(0, Y) \le 0$ and $V(1, Y) \ge 0$ for all $Y \in \Delta(\cY)$. 
\end{assumption}
For specific instantiations of \eqref{eq:assumption_lipschitzness}, observe that for means $V(p, Y_t) = p - \mathbb{E}_{t}[y]$ is trivially $1$-Lipschitz in $p$, whereas for quantiles $V(p, Y_t) = \mathbb{P}_{t}(y \le p) - q$ and \eqref{eq:assumption_lipschitzness} corresponds to the assumption that the CDF of $Y_t$ is $\rho$-Lipschitz, which is a standard assumption invoked in the relevant literature on quantile calibration \citep{jungbatch, garg2024oracle}. The second condition in \pref{ass:assumption_identification} is without any loss of generality; it is clearly true for means and quantiles and more generally can be interpreted as an underestimate/overestimate of the true property value. For instance, for expectiles, $V(p, y) = (p - y) (1 - \tau)$ if $p \ge y$ and $V(p, y) = (p - y) \tau$ if $p \le y$. Clearly, $V(p, Y)$ is Lipschitz in $p$ and satisfies $V(0, Y) \le 0$ and $V(1, Y) \ge 0$ for all $Y \in \Delta(\cY)$. Similarly, \pref{ass:assumption_identification} can be verified to be true for raw moments, $\Gamma(P) = \mathbb{P}(y = 1)$ under specific proper losses such as Tsallis entropy with $\alpha \ge 3$, spherical loss, etc.

For a bounded hypothesis class $\cF \subset [-1, 1] ^ {\cX}$, $\ell_{r}$-multicalibration ($\mcal_{\Gamma, r}(\cF)$) and $\ell_{r}$-swap multicalibration  ($\smcal_{\Gamma, r}(\cF)$) errors $(r \ge 1)$ are given by \eqref{eq:mcal_def_intro} and \eqref{eq:smcal_def_intro} respectively. 
The goal of the forecaster is to make predictions $p_1, \dots, p_{T}$ such that $\smcal_{\Gamma, r}(\cF)$ is minimized.

\subsection{Regret Minimization}
As mentioned, regret minimization algorithms are important subroutines of our final algorithm.
Consider the following interaction between an algorithm and adversary over $T$ rounds: at each time $t \in [T]$, the algorithm takes an action $a_{t} \in \cA$ and simultaneously the adversary reveals an outcome $y_{t} \in \cY$; the algorithm observes $y_{t}$ and incurs loss $\ell(a_{t}, y_t)$ for a known loss function $\ell: \cA \times \cY \to \Rn$. The \emph{external regret} $\mathsf{Reg} \coloneqq \sum_{t = 1} ^ {T} \ell(a_t, y_t) - \inf_{a \in \cA} \sum_{t = 1} ^ {T} \ell(a, y_t)$ compares the cumulative loss incurred by the algorithm with that incurred by the best fixed action chosen in hindsight. Compared to external regret, \emph{swap regret} is a stronger benchmark that compares the total loss incurred by the algorithm with that of the best swap (or modification) function $\nu: \cA \to \cA$ chosen in hindsight, i.e., $\sreg \coloneqq \sup_{\nu: \cA \to \cA} \sum_{t = 1} ^ {T} \ell(a_t, y_t) - \ell(\nu(a_t), y_t)$.

\paragraph{Expert problem.} The $K$-expert problem is a canonical problem in online learning, where at each time $t \in [T]$, an online algorithm outputs a probability distribution $w_{t} \in \Delta([K])$ 
over the experts and subsequently the adversary reveals a gain (or reward) vector $\ell_{t} \in [-1, 1] ^ {K}$, where $\ell_{t}(i)$ is the gain incurred by expert $i$. The regret $\mathsf{Reg} \coloneqq \max_{i^\star \in [K]} \sum_{t = 1} ^ {T} \ell_{t}(i^\star) -  \sum_{t = 1} ^ {T} \ip{w_{t}}{\ell_{t}}$ measures the difference between the gain of the best fixed expert chosen in hindsight and the expected gain of the algorithm.

\paragraph{Sequential Rademacher Complexity.} A $\cX$-valued binary tree is one in which all nodes have values that lie in $\cX$. For a $\cX$-valued tree $\x$ with depth $n$, a path from the root to a leaf can be represented as $\boldsymbol{\epsilon} = (\epsilon_1, \dots, \epsilon_{n})$, where $\epsilon_{i} \in \{-1, 1\}$ and $-1, +1$ correspond to traversing left, right respectively ($\epsilon_{n}$ is clearly irrelevant). Moreover, $\x$ can be represented by a sequence of mappings $\x_{1}, \dots, \x_{n}$, where $\x_{i}: \{-1, 1\} ^ {i - 1} \to \cX$ represents the value of a node at the $i$-th level of $\x$. For notational convenience, we represent $\x_{i}({\epsilon_{1}, \dots, \epsilon_{i - 1}})$ with $\x_{i}(\boldsymbol{\epsilon})$, with the understanding that $\epsilon_{i}, \dots, \epsilon_{n}$ are irrelevant when describing a node at the $i$-th level. 
\begin{definition}[Sequential Rademacher Complexity]
    For a hypothesis class $\cF$ and a $\cX$-valued tree $\x$ with depth $n$, the conditional sequential Rademacher complexity $\Re^{\text{seq}}(\cF, n; \x)$ is defined as \begin{align*}
        \Re^{\text{seq}}(\cF, n; \x) \coloneqq \frac{1}{n} \sup_{f \in \cF} \mathbb{E}_{\boldsymbol{\epsilon}}\bigs{\sum_{i = 1} ^ {n} \epsilon_{i} f(\x_{i}(\boldsymbol{\epsilon}))},
    \end{align*} 
    where the expectation is taken over the uniform distribution over $\{-1, 1\} \times \dots \times \{-1, 1\}$.
    The (unconditional) sequential Rademacher complexity $\Re^{\text{seq}}(\cF, n) $ is then defined as \begin{align*}
        \Re^{\text{seq}}(\cF, n) \coloneqq \sup_{\x} \Re^{\text{seq}}(\cF, n; \x),
    \end{align*}
    where the supremum is taken over all $\cX$-valued trees of depth $n$.
\end{definition}
Similar to the role of Rademacher complexity in describing the learnability of a hypothesis class $\cF$ in the distributional setting, the sequential Rademacher complexity characterizes learnability in the online setting.

\section{Algorithm}\label{sec:algorithm}
In this section, we propose an algorithm for multicalibrating an elicitable property $\Gamma$ with an identification function $V$ that satisfies \pref{ass:assumption_identification}. We first propose an inefficient algorithm and subsequently propose an efficient algorithm by reducing to online agnostic learning. 

\subsection{Achieving Multicalibration Inefficiently}\label{subsec:inefficient_multicalibration}

As mentioned in \pref{sec:introduction}, we build upon the algorithm proposed by \cite{hu2024predict} in a different context.  For each $i \in [N - 1]$, let $\cI_{i} = \left[\frac{i - 1}{N}, \frac{i}{N}\right)$ and $\cI_{N} = \left[\frac{N - 1}{N}, 1\right]$ so that $\cI_{1}, \dots, \cI_{N}$ represents a partition of $[0, 1]$.  We consider a $2N\abs{\cF}$-expert problem
where for each $f \in \cF, i \in [N]$, and $\sigma \in \{\pm 1\}$, we define the gain function $\ell_{f, i, \sigma}: [0, 1] \times \cX \times \cY \to [-1, 1]$
corresponding to the expert characterized by the tuple $(f, i, \sigma)$ as ${\ell_{f, i, \sigma}(p, x, y) \coloneqq \sigma \ind{p \in \cI_i} f(x) V(p, y)}$. At each time $t$, \pref{alg:hp_algorithm} obtains a probability distribution $\{w_{t, f, i, \sigma}\}$ over the $2\abs{\cF}N$ experts via an expert problem subroutine
$\alg$, where $w_{t, f, i, \sigma}$ is the probability allotted to the expert corresponding to $(f, i, \sigma)$. As shall be shown in \pref{prop:distribution}, in \pref{line:obtain_distribution} we obtain a distribution $P_t$ satisfying $h_{t}(P_t) \le \frac{\rho}{T}$, where the function $h_{t}: \Delta([0, 1]) \to [-1, 1]$ is defined as
$${h_{t}(P) \coloneqq \sup_{y \in \cY} \mathbb{E}_{p \sim P} \bigs{\sum_{(f, i, \sigma)} w_{t, f, i, \sigma} \ell_{f, i, \sigma} (p, x_{t}, y) | \mathscr{F}_{t - 1}}}.$$ 
Subsequently, \pref{alg:hp_algorithm} samples $\tilde{p}_{t} \sim P_t$ and predicts $p_{t} = \frac{i_{t}}{N}$, where $i_{t} \in [N]$ is such that $\tilde{p}_{t} \in \cI_{i_t}$. Finally, on observing $y_{t}$, we feed the gain ${\ell_{t, f, i, \sigma} \coloneqq \mathbb{E}_{p \sim P_t} \bigs{\ell_{f, i, \sigma} (p, y_t, x_{t}) | \mathscr{F}_{t - 1}}}$ for each $(f, i, \sigma)$ to $\alg$. A subtle difference between $\ell_{f, i, \sigma}$ and $\ell_{t, f, i, \sigma}$ is that the latter represents the actual gain fed to $\alg$, while the former is an auxiliary function used to define the latter. 

\begin{algorithm}[!htb]
                    \caption{Multicalibration for an Elicitable Property (Computationally Inefficient Version)} 
                    \label{alg:hp_algorithm}
                    \textbf{Initialize:} An expert problem subroutine $\alg$; 
                    \begin{algorithmic}[1]
                            \STATE\textbf{for} $t = 1, \dots, T,$
                            \STATE\hspace{3mm}Receive context $x_{t}$;
                            \STATE\hspace{3mm}Obtain weights $\{w_{t, f, i, \sigma}\}$ from $\alg$;
                            \STATE\label{line:obtain_distribution}\hspace{3mm}Define the function $\Phi_{t}: [0, 1] \to [-1, 1]$ as $${\Phi_{t}(p) \coloneqq \sum_{(f, i, \sigma)} w_{t, f, i, \sigma} \sigma \ind{p \in \cI_{i}} f(x_t)}.$$ If $\Phi_{t}(0) > 0$, choose $P_t$ that is only supported on $0$; else if $\Phi_{t}(1) \le 0$, choose $P_t$ that is only supported on $1$; else choose $i \in \{0, \dots, T - 1\}$ such that $\Phi_{t}\bigc{\frac{i}{T}} \Phi_{t}\bigc{\frac{i + 1}{T}} \le 0$ and $P_t$ such that \begin{align*}
    P_t\bigc{\frac{i}{T}} = \frac{\abs{\Phi_{t}\bigc{\frac{i + 1}{T}}}}{\abs{\Phi_{t}\bigc{\frac{i}{T}}} + \abs{\Phi_{t}\bigc{\frac{i + 1}{T}}}}, \quad P_t\bigc{\frac{i + 1}{T}} = \frac{\abs{\Phi_{t}\bigc{\frac{i}{T}}}}{\abs{\Phi_{t}\bigc{\frac{i}{T}}} + \abs{\Phi_{t}\bigc{\frac{i + 1}{T}}}}.
\end{align*}

                            \STATE\hspace{3mm}Sample $\tilde{p}_{t} \sim P_t$ and predict $p_{t} = z_{i_t} = \frac{i_{t}}{N}$, where $i_{t} \in [N]$ is such that $\tilde{p}_{t} \in \cI_{i_t}$;
                            \STATE\hspace{3mm}Observe $y_{t} \in \cY$;
                            \STATE\hspace{3mm}For each $(f, i, \sigma)$, feed ${\ell_{t, f, i, \sigma} = \mathbb{E}_{p \sim P_t}\bigs{\ell_{f, i, \sigma}(p, x_t,  y_{t}) | \mathscr{F}_{t - 1}}}$ to $\alg$;
                            
                        \end{algorithmic}
\end{algorithm}
\begin{proposition}\label{prop:distribution}
\pref{alg:hp_algorithm} satisfies $h_{t}(P_t) \le \frac{\rho}{T}$ for all $t \in [T]$. 
\end{proposition}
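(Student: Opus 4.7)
The plan is to proceed by case analysis on the three branches of the construction of $P_t$ in \pref{line:obtain_distribution}, after first rewriting $h_t$ in the compact form
\[
h_t(P) = \sup_{y \in \cY} \mathbb{E}_{p \sim P}\bigs{V(p, y)\, \Phi_t(p) \mid \mathscr{F}_{t-1}},
\]
which is immediate from the definitions of $h_t$, $\ell_{f, i, \sigma}$, and $\Phi_t$ (pull the factor $V(p, y)$ outside the weighted sum over $(f, i, \sigma)$, since only $\sigma\,\ind{p \in \cI_i}\,f(x_t)$ depends on the index). For the two degenerate branches where $P_t$ is a point mass, the bound is immediate from the boundary conditions in \pref{ass:assumption_identification} (invoked with $Y = \delta_y$): if $\Phi_t(0) > 0$ then $P_t = \delta_0$ and $V(0, y) \le 0$ for every $y \in \cY$, so $h_t(P_t) = \sup_{y \in \cY} V(0, y)\, \Phi_t(0) \le 0$; symmetrically, if $\Phi_t(1) \le 0$ then $P_t = \delta_1$ and $V(1, y)\,\Phi_t(1) \le 0$ for every $y$. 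In either case $h_t(P_t) \le 0 \le \rho/T$.

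The substantive case is when $\Phi_t(0) \le 0$ and $\Phi_t(1) > 0$. Existence of an index $i \in \{0, \dots, T-1\}$ with $\Phi_t(i/T)\,\Phi_t((i+1)/T) \le 0$ follows from a discrete intermediate value argument on $\{\Phi_t(j/T)\}_{j=0}^T$, e.g., take the largest $j$ for which $\Phi_t(j/T) \le 0$. Writing $a \coloneqq \Phi_t(i/T)$ and $b \coloneqq \Phi_t((i+1)/T)$, so that $ab \le 0$, plugging the explicit two-point distribution $P_t$ into the expectation yields
\[
\mathbb{E}_{p \sim P_t}\bigs{V(p, y)\,\Phi_t(p)} = \frac{|b|\, a\, V(i/T, y) + |a|\, b\, V((i+1)/T, y)}{|a|+|b|}.
\]
The key algebraic observation is that $|b|\,a + |a|\,b = 0$ whenever $ab \le 0$ (a one-line sign check covering $a \le 0 \le b$ and $b \le 0 \le a$), which telescopes the numerator into $|b|\,a\,\bigc{V(i/T, y) - V((i+1)/T, y)}$. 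Bounding the prefactor by $\frac{|a||b|}{|a|+|b|} \le 1$ and invoking Lipschitzness of $V(\cdot, y)$ from \pref{ass:assumption_identification} (applied with $Y = \delta_y$) for the estimate $|V(i/T, y) - V((i+1)/T, y)| \le \rho/T$, the absolute value of the expectation is at most $\rho/T$ uniformly in $y$, which gives $h_t(P_t) \le \rho/T$.

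The main obstacle, insofar as there is one, is the cancellation identity $|b|\,a + |a|\,b = 0$ on which the $\rho/T$ rate rests: without this telescoping one would only bound the expectation by a constant rather than by the $p$-difference $V(i/T, y) - V((i+1)/T, y)$. The closed-form choice of $P_t$ with weights proportional to $|b|$ and $|a|$ is precisely engineered to produce this cancellation, so the proof ultimately reduces to verifying that the algorithm's explicit construction exploits it and that an adversarial $y$ cannot cause trouble in either degenerate branch, both of which are handled by the boundary clauses of \pref{ass:assumption_identification}.
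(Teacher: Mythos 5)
Your proposal is correct and takes essentially the same route as the paper: rewrite $h_t(P)$ as $\sup_y \mathbb{E}_{p\sim P}[V(p,y)\Phi_t(p)]$, dispose of the two point-mass branches via the boundary clauses of \pref{ass:assumption_identification}, and in the two-point branch exploit the cancellation $|b|a+|a|b=0$ (when $ab\le 0$) together with $\frac{|a||b|}{|a|+|b|}\le 1$ and $\rho$-Lipschitzness of $V(\cdot,\cdot)$ to get the $\rho/T$ bound. The only cosmetic difference is that the paper phrases the final estimate with $\sup_{Y\in\Delta(\cY)}$ while you work directly with $\sup_{y\in\cY}$ and point masses $\delta_y$; these are equivalent by linearity of the inner expectation in the law of $y$.
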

\begin{proof}
When $\Phi_{t}(0) > 0$, we have $$h_{t}(P_t) = \sup_{Y \in \Delta(\cY)} \mathbb{E}_{y \sim Y} \bigs{\Phi_{t}(0) V(0, y)} = \sup_{Y \in \Delta(\cY)} \Phi_{t}(0) V(0, Y) \le 0,$$ where the inequality follows from \pref{ass:assumption_identification}. Similarly, when $\Phi_{t}(1) \le 0$, we have $$h_{t}(P_t) = \sup_{Y \in \Delta(\cY)} \mathbb{E}_{y \sim Y} [\Phi_{t} (1) V(1, y)] = \sup_{Y \in \Delta(\cY)} \Phi_{t}(1) V(1, Y) \le 0.$$ For the last case, we have \begin{align*}
    h_{t}(P_t) &= \sup_{Y \in \Delta(\cY)} \mathbb{E}_{p \sim P_t, y \sim Y} \bigs{\Phi_{t}(p) V(p, y)} \\
    &= \sup_{Y \in \Delta(\cY)}\mathbb{E}_{y \sim Y} \bigs{\frac{\Phi_{t}\bigc{\frac{i}{T}} \abs{\Phi_{t}\bigc{\frac{i + 1}{T}}}V\bigc{\frac{i}{T}, y}}{\abs{\Phi_{t}\bigc{\frac{i}{T}}} + \abs{\Phi_{t}\bigc{\frac{i + 1}{T}}}}  + \frac{\Phi_{t}\bigc{\frac{i + 1}{T}} \abs{\Phi_{t}\bigc{\frac{i}{T}}} V\bigc{\frac{i + 1}{T}, y}}{\abs{\Phi_{t}\bigc{\frac{i}{T}}} + \abs{\Phi_{t}\bigc{\frac{i + 1}{T}}}}} \\
    &\le \sup_{Y \in \Delta(\cY)}  \bigs{\frac{\abs{\Phi_{t}\bigc{\frac{i}{T}}} \abs{\Phi_{t}\bigc{\frac{i + 1}{T}}}}{\abs{\Phi_{t}\bigc{\frac{i}{T}}} + \abs{\Phi_{t}\bigc{\frac{i + 1}{T}}}} \cdot \abs{V\bigc{\frac{i}{T}, Y} - V\bigc{\frac{i + 1}{T}, Y}}} \le \frac{\rho}{T},
\end{align*}
where the last inequality follows since due to the Lipschitzness of $V(., Y)$ \eqref{eq:assumption_lipschitzness} and that $\Phi_t$ takes value in $[-1,1]$.
This completes the proof.
\end{proof}

Instantiating $\alg$ with the McMwC algorithm \citep{chen2021impossible}, we obtain the following lemma.
\begin{lemma}\label{lem:expert_minimization_alg}
    The McMwC algorithm \citep{chen2021impossible} ensures the following regret bound \begin{align*}
        \sum_{t = 1} ^ {T} \ell_{t, f, i, \sigma} - \sum_{t = 1} ^ {T} \sum_{(f', i', \sigma')} w_{t, f', i', \sigma'} \ell_{t, f', i', \sigma'} = \cO\bigc{\log (\abs{\cF} N T) + \sqrt{\bigc{\sum_{t = 1} ^ {T} \mathbb{P}_{t}(\tilde{p}_t \in \cI_{i})} \log (\abs{\cF} NT)}}
    \end{align*}
    against each expert $(f, i, \sigma)$ simultaneously.
\end{lemma}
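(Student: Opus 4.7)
The plan is to directly apply the per-expert second-order regret guarantee of the McMwC algorithm of \cite{chen2021impossible} to the $K = 2\abs{\cF}N$ experts indexed by $(f,i,\sigma) \in \cF \times [N] \times \{\pm 1\}$, and then upper bound the resulting loss-squared quantity by $\sum_{t=1}^T \mathbb{P}_t(\tilde{p}_t \in \cI_i)$. McMwC, when fed bounded gains $\ell_{t,k} \in [-1,1]$ for each of $K$ experts, produces weights $w_t \in \Delta([K])$ with a data-adaptive, per-expert bound of the form
\[
\sum_{t=1}^T \ell_{t,k} - \sum_{t=1}^T \ip{w_t}{\ell_t} = \cO\bigc{\log(KT) + \sqrt{\log(KT) \cdot \sum_{t=1}^T \ell_{t,k}^2}}
\]
holding simultaneously for every expert $k$. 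Since $\log K = \log(2\abs{\cF}N) = \cO(\log(\abs{\cF}N))$, the logarithmic factor $\log(\abs{\cF}NT)$ demanded by the statement is immediate.

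The one computation to do is to show $\ell_{t,f,i,\sigma}^2 \le \mathbb{P}_t(\tilde{p}_t \in \cI_i)$ for every $(f,i,\sigma)$ and $t$. By definition,
\[
\ell_{t,f,i,\sigma} = \sigma f(x_t)\, \mathbb{E}_{p\sim P_t}\bigs{\ind{p\in\cI_i}\, V(p,y_t) \mid \mathscr{F}_{t-1}},
\]
and using $\abs{\sigma}=1$, $\abs{f(x_t)}\le 1$, and $\abs{V(p,y_t)}\le 1$, we obtain $\abs{\ell_{t,f,i,\sigma}} \le \mathbb{E}_{p\sim P_t}[\ind{p\in\cI_i}\mid \mathscr{F}_{t-1}] = \mathbb{P}_t(\tilde{p}_t\in\cI_i) \le 1$, hence $\ell_{t,f,i,\sigma}^2 \le \mathbb{P}_t(\tilde{p}_t\in\cI_i)$. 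Substituting this bound into the McMwC regret guarantee above yields exactly the stated inequality.

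The only potential obstacle is matching the precise form of the McMwC guarantee as stated in \cite{chen2021impossible}: their theorem is phrased with a somewhat more general quadratic variation (allowing centering around a predictable baseline) and a possibly different logarithmic normalization, so a small amount of book-keeping may be needed to extract the clean $\cO(\log(KT) + \sqrt{\log(KT) \cdot \sum_t \ell_{t,k}^2})$ form with the $\log(KT)$ factor appearing both outside and inside the square root. Since this is a direct off-the-shelf application of the expert-algorithm result, no new technical analysis is required beyond these routine adjustments.
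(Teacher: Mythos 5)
Your proof is correct and follows the same approach the paper itself uses: the paper does not give a separate formal proof for this lemma, but in the remark immediately following it (and in the technical overview of Section 2.1) it explains that McMwC gives a second-order regret bound of the form $\tilde\cO\bigl(\sqrt{\sum_t \ell_{t,f,i,\sigma}^2}\bigr)$ plus additive logarithmic terms, and that for this particular choice of $\ell_{t,f,i,\sigma}$ one has $\ell_{t,f,i,\sigma}^2 \le \mathbb{P}_t(\tilde p_t \in \cI_i)$. Your explicit derivation of that bound (using $|\sigma f(x_t)| \le 1$, $|V|\le 1$, and $\mathbb{P}_t(\tilde p_t\in\cI_i)\le 1$ so that $|\ell_{t,f,i,\sigma}|\le \mathbb{P}_t(\tilde p_t\in\cI_i)$ and hence $\ell_{t,f,i,\sigma}^2\le \mathbb{P}_t(\tilde p_t\in\cI_i)$) is exactly the computation the paper is alluding to, and your caveat about matching the precise phrasing of the McMwC guarantee in \cite{chen2021impossible} is a reasonable and honest acknowledgment of the off-the-shelf nature of this step.
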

Note that a canonical expert minimization algorithm such as Multiplicative Weights Update would incur $\tilde{\cO}(\sqrt{T})$ regret against each expert, which is not sufficient to achieve the sub-$\sqrt{T}$ rates that we seek for in this paper. Instead, the McMwC algorithm ensures a regret bound that depends on $\sqrt{\sum_{t = 1} ^ {T} \mathbb{P}_{t}(\tilde{p}_t \in \cI_i)}$, which can be potentially much smaller than $\sqrt{T}$. 
Moreover, compared to \cite[Lemma 5.4]{hu2024predict} which incurs a factor proportional to $\sqrt{\sum_{t = 1} ^ {T} \ind{\tilde{p}_t \in \cI_i}}$, we incur a factor of $\sqrt{\sum_{t = 1} ^ {T} \mathbb{P}_{t}(\tilde{p}_t \in \cI_i)}$ instead, since McMwC actually ensures a regret bound $\tilde{\cO}\bigc{\sqrt{\sum_{t = 1} ^ {T} \ell_{t, f, i, \sigma} ^ 2}}$ (ignoring additive constants) against the expert $(f, i, \sigma)$. For our choice of $\ell_{t, f, i, \sigma}$, it follows that $\ell_{t, f, i, \sigma} ^ 2 \le \mathbb{P}_{t}(\tilde{p}_t \in \cI_i)$, thereby introducing a different factor. 

\begin{restatable}{lemma}{HPDeviationInefficient}\label{lem:hp_deviation_inefficient}
    \pref{alg:hp_algorithm} ensures that \begin{align*}
        \sup_{f \in \cF} \abs{\sum_{t = 1} ^ {T} \ind{p_t = z_i} f(x_t) V(p_t, y_t)} = \tilde{\cO}\bigc{\frac{\rho n_i}{N} + \sqrt{n_i \log \frac{\abs{\cF}N}{\delta}} + \log \frac{\abs{\cF}N}{\delta}}
    \end{align*}
    for all $i \in [N]$ with probability at least $1 - \delta$.
\end{restatable}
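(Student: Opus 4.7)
The plan is to fix $i \in [N]$ and $f \in \cF$, decompose the target quantity into the four telescoping pieces $X_t, U_t, Z_t, W_t$ exactly as in the overview of \pref{subsec:ineff_generalization} (using $\ind{p_t = z_i} = \ind{\tilde{p}_t \in \cI_i}$), and bound each sum separately via a combination of Freedman's inequality, the McMwC regret guarantee of \pref{lem:expert_minimization_alg}, and the explicit closed-form for $P_t$ from \pref{line:obtain_distribution}. Adding the four bounds, relating $\sum_{t=1}^T \mathbb{P}_t(\tilde{p}_t \in \cI_i)$ back to $n_i$, and taking a union bound over $(i, f) \in [N] \times \cF$ will yield the claimed high-probability estimate.

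The terms $X_t$ and $U_t$ are the easiest. For $U_t$, conditionally on $\mathscr{F}_{t-1}$ and $\tilde{p}_t$, the inner conditional expectation equals $V(\cdot, Y_t)$, so $\rho$-Lipschitzness together with $|\tilde{p}_t - z_i| \le 1/N$ gives $|U_t| \le \rho/N$ pointwise, hence $\abs{\sum_t U_t} \le \rho n_i / N$. For $X_t$, I observe that it is a martingale difference sequence bounded by $2$ in absolute value whose conditional second moment is at most $\mathbb{P}_t(\tilde{p}_t \in \cI_i)$; Freedman's inequality combined with a dyadic peeling argument over the possible range of the cumulative variance (exactly as in \cite[Lemma 8]{luo2025improved}) then delivers $\abs{\sum_t X_t} = \tilde{\cO}\bigc{\sqrt{(\sum_t \mathbb{P}_t(\tilde{p}_t \in \cI_i))\log\tfrac{1}{\delta}} + \log\tfrac{1}{\delta}}$.

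For $Z_t$ I rewrite $\abs{\sum_t Z_t} = \max_{\sigma \in \{\pm 1\}} \sum_t \ell_{f, i, \sigma}(\tilde{p}_t, x_t, y_t)$ and split this further as $\sum_t \ell_{t, f, i, \sigma}$ plus a martingale-difference remainder. The first piece is controlled by the regret guarantee of \pref{lem:expert_minimization_alg} against the expert $(f, i, \sigma)$: the resulting baseline $\sum_t \sum_{(f', i', \sigma')} w_{t, f', i', \sigma'} \ell_{t, f', i', \sigma'}$ is at most $\sum_t h_t(P_t) \le \rho$ by \pref{prop:distribution}, while the data-dependent term in the McMwC bound contributes $\tilde{\cO}\bigc{\sqrt{(\sum_t \mathbb{P}_t(\tilde{p}_t \in \cI_i))\log(\abs{\cF}NT)} + \log(\abs{\cF}NT)}$ since $\ell_{t, f, i, \sigma}^2 \le \mathbb{P}_t(\tilde{p}_t \in \cI_i)$ by construction. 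The martingale remainder, for each fixed $\sigma$, is handled by another application of Freedman with peeling and has the same form.

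The main obstacle is $W_t$, which is \emph{not} a martingale difference sequence because both $\ind{\tilde{p}_t \in \cI_i}$ and $V(\tilde{p}_t, y_t)$ depend on $\tilde{p}_t$. This is where the second modification to the algorithm of \cite{hu2024predict} pays off: since $P_t$ is supported on at most two points of $\{0, \tfrac{1}{T}, \ldots, 1\}$ at distance at most $1/T$, a direct calculation shows that $\mathbb{E}_t[W_t]$ vanishes when both support points lie on the same side of $\cI_i$, and in the only remaining case it equals $f(x_t)$ times a product of two probabilities times the difference of $V(\cdot, Y_t)$ at two points $1/T$ apart, which by $\rho$-Lipschitzness yields $|\mathbb{E}_t[W_t]| \le \rho/T$ and hence $\abs{\sum_t \mathbb{E}_t[W_t]} \le \rho$. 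Once this bias is isolated, $W_t - \mathbb{E}_t[W_t]$ is a genuine martingale difference sequence with conditional second moment at most $\mathbb{P}_t(\tilde{p}_t \in \cI_i)$, and Freedman with peeling applies as before. A final application of Freedman to the martingale difference sequence $\{\mathbb{P}_t(\tilde{p}_t \in \cI_i) - \ind{\tilde{p}_t \in \cI_i}\}$, followed by solving the resulting quadratic inequality in $\sqrt{\sum_t \mathbb{P}_t(\tilde{p}_t \in \cI_i)}$, replaces the conditional-probability sum by $n_i$ up to logarithmic factors. A union bound over $i \in [N]$ and $f \in \cF$ (absorbing $\log(\abs{\cF} N)$ into the error term) completes the proof.
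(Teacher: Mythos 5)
Your proposal matches the paper's proof essentially step for step: the same four-term decomposition, the same Freedman-plus-dyadic-peeling treatment of the martingale pieces, the same use of the McMwC regret guarantee combined with $\sum_t h_t(P_t) \le \rho$ for the $Z_t$ term, the same bias-isolation trick for $W_t$ exploiting the two-point support of $P_t$, and the same quadratic-inequality argument to replace $\sum_t \mathbb{P}_t(\tilde p_t \in \cI_i)$ by $n_i$. The only cosmetic difference is in the bias bound for $W_t$: you case-split on whether the two support points straddle $\partial \cI_i$ and evaluate $\mathbb{E}_t[W_t]$ exactly, whereas the paper simply bounds $\abs{\mathbb{E}_{\tilde p_t}[V(\tilde p_t, Y_t)] - V(\tilde p_t, Y_t)} \le \rho/T$ pointwise over the two realizations of $\tilde p_t$ and concludes the same $\abs{\mathbb{E}_t[W_t]} \le \rho/T$; both are correct and equivalent in effect.
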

\begin{proof} Recall that $n_{i} = \sum_{t = 1} ^ {T} \ind{p_t = z_i} = \sum_{t = 1} ^ {T} \ind{\tilde{p}_t \in \cI_i}$ is the number of time instants the prediction made is $p_{t} = z_i = \frac{i}{N}$. Fixing a $i \in [N], f \in \cF$, it follows
from the triangle inequality that $$\abs{\sum_{t = 1} ^ {T} \ind{p_t = z_i} f(x_t) V(p_t, y_t)} \le \abs{\sum_{t = 1} ^ {T} X_t} + \abs{\sum_{t = 1} ^ {T} U_t} + \abs{\sum_{t = 1} ^ {T} Z_t} + \abs{\sum_{t = 1} ^ {T} W_t},$$ where the sequences $\{X_{t}\}, \{U_{t}\}, \{Z_{t}\}$, and $\{W_{t}\}$ are defined as 
    \begin{align*}
        X_{t} &\coloneqq \ind{\tilde{p}_t \in \cI_i} f(x_t) \cdot \bigc{V(z_i, y_t) - \mathbb{E}_{t} [V(z_i, y_{t})]}, \\
        U_{t} &\coloneqq \ind{\tilde{p}_t \in \cI_i} f(x_t) \cdot  (\mathbb{E}_t [V(z_i, y_t)] - \mathbb{E}_{t} [V(\tilde{p}_t, y_t)]), \\
        Z_{t} &\coloneqq \ind{\tilde{p}_t \in \cI_i} f(x_t) \cdot V(\tilde{p}_t, y_t), \\
        W_{t} &\coloneqq \ind{\tilde{p}_t \in \cI_i} f(x_t) \cdot (\mathbb{E}_{t}[V(\tilde{p}_t, y_t)] - V(\tilde{p}_t, y_t)).
    \end{align*}
Note that conditioned on $\mathscr{F}_{t - 1}$, the random variables $p_{t}$ (or $\tilde{p}_{t}$) and $y_{t}$ are independent. In the next steps, we bound the sum of each sequence individually.
\paragraph{Bounding $\abs{\sum_{t = 1} ^ {T} U_t}$.} Since $V(., Y_t)$ is $\rho$-Lipschitz and $\abs{p_{t} - \tilde{p}_{t}} \le \frac{1}{N}$, we immediately obtain $\abs{U_{t}} \le \frac{\rho}{N}$. 

\paragraph{Bounding $\abs{\sum_{t = 1} ^ {T} X_t}$.} Clearly, $\abs{X_{t}} \le 2$ and $\mathbb{E}_{t}[X_{t}] = 0$; thus, $\{X_{t}\}$ is a bounded martingale difference sequence. Furthermore, the cumulative variance $V_{X} \coloneqq \sum_{t = 1} ^ {T} \mathbb{E}_{t}\bigs{X_{t} ^ {2}}$ can be bounded as $V_{X} \le 4\sum_{t = 1} ^ {T} \mathbb{P}_{t}(\tilde{p}_t \in \cI_i)$. Therefore, for a fixed $\mu \in [0, \frac{1}{2}]$ and $\delta \in [0, 1]$, it follows from \pref{lem:Freedman} that \begin{align}\label{eq:bound_X_t}
    \abs{\sum_{t = 1} ^ {T} X_{t}} \le 4\mu \sum_{t = 1} ^ {T} \mathbb{P}_{t}(\tilde{p}_{t} \in \cI_i) + \frac{1}{\mu} \log \frac{2}{\delta}
\end{align}
with probability at least $1 - \delta$. 
For the subsequent steps, we assume that $m \in \mathbb{N}$ is such that \begin{align}\label{eq:n_existence}
    \frac{1}{2 ^ {m}} \le \sqrt{\frac{\log \frac{2 (m + 1)}{\delta}}{T}} < \frac{1}{2 ^ {m - 1}},
\end{align} and partition the interval $\cJ = \bigs{\frac{1}{2}\sqrt{\frac{\log \frac{2(m + 1)}{\delta}}{T}}, \frac{1}{2}}$ into dyadic intervals as $\cJ = \cJ_{m} \cup \dots \cup \cJ_{1} \cup \cJ_{0}$, where $\cJ_{k} = \left[\frac{1}{2 ^ {k + 1}}, \frac{1}{2 ^ {k}}\right)$ for all $k \in [m - 1], \cJ_{0} = \bigcurl{\frac{1}{2}}$, and $\cJ_{m} = \left[\frac{1}{2}\sqrt{\frac{\log \frac{2(m + 1)}{\delta}}{T}}, \frac{1}{2 ^ {m}}\right)$. For each interval $\cJ_{k}$, we associate a parameter $\mu_{k} = \frac{1}{2 ^ {k + 1}}$. Taking a union bound over all $k \in \{0, \dots, m\}$, we obtain \begin{align*}
     \abs{\sum_{t = 1} ^ {T} X_{t}} \le 4\mu_{k} \sum_{t = 1} ^ {T} \mathbb{P}_{t}(\tilde{p}_{t} \in \cI_i) + \frac{1}{\mu_{k}} \log \frac{2(m + 1)}{\delta}
 \end{align*}
 with probability at least $1 - \delta$ (simultaneously for all $k$). 
Each interval corresponds to a condition on $\sum_{t = 1} ^ {T} \mathbb{P}_{t}(\tilde{p}_{t} \in \cI_i)$ for which the optimal \begin{align*}
    \mu = \frac{1}{2}\min\bigc{1, \sqrt{\frac{\log \frac{2(m + 1)}{\delta}}{\sum_{t = 1} ^ {T} \mathbb{P}_t (\tilde{p}_t \in \cI_i)}}}    
\end{align*}
falls within that interval. In particular, for each $k \in [m - 1]$, the interval $\cJ_{k}$ shall correspond to the condition that \begin{align}\label{eq:constaint}
     \frac{1}{2 ^ {k}} \le \sqrt{\frac{\log \frac{2 (m + 1)}{\delta}}{\sum_{t = 1} ^ {T} \mathbb{P}_{t}(\tilde{p}_{t} \in \cI_i)}} < \frac{1}{2 ^ {k - 1}} 
     \equiv 4 ^ {k - 1} \log \frac{2 (m + 1)}{\delta} < \sum_{t = 1} ^ {T} \mathbb{P}(\tilde{p}_{t} \in \cI_i) \le 4 ^ {k} \log \frac{2 (m + 1)}{\delta}.
 \end{align}
 Additionally, $\cJ_{0}$ shall represent the condition that $0 \le \sum_{t = 1} ^ {T} \mathbb{P}_{t}(\tilde{p}_{t} \in \cI_i) \le \log \frac{2 (m + 1)}{\delta}$. Finally, $\cJ_{m}$ shall correspond to $4 ^ {m - 1} \log \frac{2(m + 1)}{\delta} < \sum_{t = 1} ^ {T} \mathbb{P}(\tilde{p}_{t} \in \cI_i) \le T$.
 Next, we prove an uniform upper bound on $\abs{\sum_{t = 1} ^ {T} X_{t}}$ by analyzing the bound for each interval. Towards this end, let $k \in [m - 1]$. Then,
 \begin{align*}
     \abs{\sum_{t = 1} ^ {T} X_{t}} \le \frac{1}{2 ^ {k - 1}} \sum_{t = 1} ^ {T} \mathbb{P}_{t}(\tilde{p}_{t} \in \cI_i) + 2 ^ {k + 1} \log \frac{2 (m + 1)}{\delta} \le 6 \sqrt{\bigc{\sum_{t = 1} ^ {T} \mathbb{P}_t(\tilde{p}_t \in \cI_i)} \log \frac{2 (m + 1)}{\delta}},
 \end{align*}
 where the second inequality follows from \eqref{eq:constaint}. 
 Similarly, for $k = 0$ we have \begin{align*}
     \abs{\sum_{t = 1} ^ {T} X_{t}} \le 2\sum_{t = 1} ^ {T} \mathbb{P}_{t}(\tilde{p}_t \in \cI_i) + 2 \log \frac{2 (m + 1)}{\delta} \le 4\log \frac{2 (m + 1)}{\delta}.
 \end{align*}
 Finally, for $k = m$ we have \begin{align*}
     \abs{\sum_{t = 1} ^ {T} X_{t}} &\le \frac{1}{2 ^ {m - 1}} \sum_{t = 1} ^ {T} \mathbb{P}_{t}(\tilde{p}_{t} \in \cI_i) + 2 ^ {m + 1} \log \frac{2 (m + 1)}{\delta} \\
     &\le 2\sqrt{\frac{\log \frac{2 (m + 1)}{\delta}}{T}} \sum_{t = 1} ^ {T} \mathbb{P}_{t}(\tilde{p}_{t} \in \cI_i) + 2 ^ {m + 1} \log \frac{2 (m + 1)}{\delta} \\
     &\le 6\sqrt{\bigc{\sum_{t = 1} ^ {T} \mathbb{P}_t(\tilde{p}_t \in \cI_i)} \log \frac{2 (m + 1)}{\delta}},
 \end{align*}
 where the first inequality follows from \eqref{eq:n_existence}, while the second inequality follows from the condition characterizing $\cJ_m$. Combining the three bounds above, we have shown that with probability at least $1 - \delta$,
 \begin{align*}
     \abs{\sum_{t = 1} ^ {T} X_{t}} &\le 6 \sqrt{\bigc{\sum_{t = 1} ^ {T} \mathbb{P}_{t}(\tilde{p}_t \in \cI_i)} \log \frac{2 (m + 1)}{\delta}} + 4 \log \frac{2 (m + 1)}{\delta} \\
     &= \tilde{\cO}\bigc{\sqrt{\bigc{\sum_{t = 1} ^ {T} \mathbb{P}_{t}(\tilde{p}_t \in \cI_i)} \log \frac{1}{\delta}} + \log \frac{1}{\delta}},
 \end{align*}
 where the equality follows since $m = \tilde{\Theta}\bigc{\log T}$. Taking a union bound over all $f \in \cF, i \in [N]$, we obtain
 \begin{align*}
     \sup_{f \in \cF} \abs{\sum_{t = 1} ^ {T} \ind{\tilde{p}_t \in \cI_i} f(x_t) \bigc{V(z_i, y_t) - \mathbb{E}_{t} [V(z_i, y_{t})]}} = \tilde{\cO}\bigc{\sqrt{\bigc{\sum_{t = 1} ^ {T} \mathbb{P}_{t}(\tilde{p}_t \in \cI_i)} \log \frac{N\abs{\cF}}{\delta}} + \log \frac{N\abs{\cF}}{\delta}}
 \end{align*}
for all $i \in [N]$ simultaneously with probability at least $1 - \delta$.

\paragraph{Bounding $\abs{\sum_{t = 1} ^ {T} Z_t}$.} It follows from the definition of the gain function $\ell_{f, i, \sigma}$ that \begin{align*}
    \abs{\sum_{t = 1} ^ {T} Z_{t}} &= \max_{\sigma \in \{\pm 1\}} \sum_{t = 1} ^ {T} \ell_{f, i, \sigma}(\tilde{p}_t, x_t, y_t) \\
    &\le \max_{\sigma \in \{\pm 1\}} \sum_{t = 1} ^ {T} \ell_{t, f, i, \sigma} + \max_{\sigma \in \{\pm 1\}} \sum_{t = 1} ^ {T} \ell_{f, i, \sigma} (\tilde{p}_t, x_t, y_t) - \mathbb{E}_{\tilde{p}_t}[\ell_{f, i, \sigma} (\tilde{p}_t, x_t, y_t) | \mathscr{F}_{t - 1}].
    \end{align*}
    Invoking the regret guarantee of $\alg$ (\pref{lem:expert_minimization_alg}), we obtain \begin{align*}
        \max_{\sigma \in \{\pm 1\}} \sum_{t = 1} ^ {T} \ell_{t, f, i, \sigma} \le \sum_{t = 1} ^ {T} \sum_{(f, i, \sigma)} w_{t, f, i, \sigma} \ell_{t, f, i, \sigma} + \tilde{\cO}\bigc{\log (\abs{\cF} N) + \sqrt{\bigc{\sum_{t = 1} ^ {T} \mathbb{P}_{t}(\tilde{p}_t \in \cI_i)} \log (\abs{\cF}N)}}.
    \end{align*}
    Moreover, $\sum_{t = 1} ^ {T} \sum_{(f, i, \sigma)} w_{t, f, i, \sigma} \ell_{t, f, i, \sigma} \le \sum_{t = 1} ^ {T} h_t (P_t) = \cO(1)$. For a fixed $\sigma \in \{\pm 1\}$, the sequence $\ell_{f, i, \sigma} (\tilde{p}_t, x_t, y_t) - \mathbb{E}_{\tilde{p}_t}[\ell_{f, i, \sigma} (\tilde{p}_t, x_t, y_t) | \mathscr{F}_{t - 1}]$ is a martingale difference sequence. Repeating a similar analysis as that done for the sequence $\{X_{t}\}$, we obtain \begin{align*}
        \abs{\sum_{t = 1} ^ {T} Z_{t}} = \tilde{\cO}\bigc{\sqrt{\bigc{\sum_{t = 1} ^ {T} \mathbb{P}_{t}(\tilde{p}_t \in \cI_i)} \log \frac{\abs{\cF}N}{\delta}} + \log \frac{\abs{\cF}N}{\delta}}.
    \end{align*}
Taking a union bound over all $f \in \cF, i \in [N]$, we obtain that \begin{align*}
    \sup_{f \in \cF} \abs{\sum_{t = 1}^ {T} \ind{\tilde{p}_t \in \cI_i} f(x_t) V(\tilde{p}_t, y_t)} = \tilde{\cO} \bigc{\sqrt{\bigc{\sum_{t = 1} ^ {T} \mathbb{P}_{t}(\tilde{p}_t \in \cI_i)} \log \frac{\abs{\cF}N}{\delta}} + \log \frac{\abs{\cF}N}{\delta}}.
\end{align*} 
holds for all $i \in [N]$ with probability at least $1 - \delta$.
\paragraph{Bounding $\abs{\sum_{t = 1} ^ {T} W_t}$.} {Observe that $\{W_t\}$ is not necessarily a martingale difference sequence. However, since the distribution $P_t$ randomizes between at most 2 points and $V(., Y)$ is Lipschitz, $\abs{\mathbb{E}_{t}[W_t]} \le \frac{\rho}{T}$. This is because, \begin{align*}
    {\mathbb{E}_{t}[W_t] = f(x_t) \mathbb{E}_{\tilde{p}_t}\bigs{\ind{\tilde{p}_t \in \cI_i} (\mathbb{E}_{\tilde{p}_t}[V(\tilde{p}_t, Y_t)] - V(\tilde{p}_t, Y_{t}))},}
\end{align*} 
where we have simplified the expression by not mentioning the filtration $\mathscr{F}_{t - 1}$, which is still implicit in $\mathbb{E}_{\tilde{p}_t}[.]$. Subsequently, if $P_t$ picks $0$ or $1$ deterministically, $\mathbb{E}_t[W_t] = 0$. In the remaining case, let $P_t$ randomize between 2 points $\frac{j}{T}$ and $\frac{j + 1}{T}$ at time $t$. Then, if the sampled $\tilde{p}_t$ is $\frac{j}{T}$, we have
\begin{align*}
    \abs{\mathbb{E}_{\tilde{p}_t}[V(\tilde{p}_t, Y_t)] - V(\tilde{p}_t, Y_{t})} &= \abs{P_t\bigc{\frac{j}{T}} V\bigc{\frac{j}{T}, Y_t} + P_t\bigc{\frac{j + 1}{T}} V\bigc{\frac{j + 1}{T}, Y_t} - V\bigc{\frac{j}{T}, Y_t}} \\
    &= \abs{P_t\bigc{\frac{j + 1}{T}} \bigc{V\bigc{\frac{j + 1}{T}, Y_t} - V\bigc{\frac{j}{T}, Y_t}}} \le \frac{\rho}{T},
\end{align*}
where the inequality follows due to the Lipschitzness of $V(., Y)$ \eqref{eq:assumption_lipschitzness}. The above bound also holds if the sampled $\tilde{p}_{t} = \frac{j + 1}{T}$. Therefore, $\abs{\mathbb{E}_{t}[W_{t}]} \le \frac{\rho}{T}$. Equipped with this, we can bound $\sum_{t = 1} ^ {T} W_{t}$ by considering the martingale difference sequence $W_{t} - \mathbb{E}_{t}[W_t]$ instead and repeating a similar analysis using Freedman's inequality. Particularly, $\abs{W_t - \mathbb{E}_t[W_t]} \le 4$ since $\abs{W_t} \le 2$ and the conditional variance can be bounded as $\sum_{t = 1} ^ {T} \mathbb{E}_{t}[(W_t - \mathbb{E}_t[{W_t}]) ^ 2] = \sum_{t = 1} ^ {T} \mathbb{E}_t[W_t ^ 2] - (\mathbb{E}_t[W_t]) ^ 2 \le \sum_{t = 1} ^ T \mathbb{E}_t[W_t ^ 2] \le 4 \cdot \mathbb{P}_t(\tilde{p}_t \in \cI_i)$. Repeating a similar analysis as that done for the sequence $\{X_t\}$, we obtain \begin{align*}
    \abs{\sum_{t = 1} ^ {T} W_t - \mathbb{E}_t [W_t]} =  \tilde{\cO}\bigc{\sqrt{\bigc{\sum_{t = 1} ^ T \mathbb{P}_{t}(\tilde{p}_t \in \cI_i)} \log \frac{1}{\delta}} + \log \frac{1}{\delta}}.
\end{align*}
Since $\abs{\sum_{t = 1} ^ {T} W_t} \le \abs{\sum_{t = 1} ^ T W_t - \mathbb{E}_t[W_t]} + \abs{\sum_{t = 1} ^ {T} \mathbb{E}_t[W_t]}$ and $\abs{\mathbb{E}_t[W_t]} \le \frac{\rho}{T}$, we obtain \begin{align*}
    \abs{\sum_{t = 1} ^ {T} W_t} =  \tilde{\cO}\bigc{\sqrt{\bigc{\sum_{t = 1} ^ T \mathbb{P}_{t}(\tilde{p}_t \in \cI_i)} \log \frac{1}{\delta}} + \log \frac{1}{\delta}}.
\end{align*}
Taking a union bound over all $f \in \cF, i \in [N]$, we obtain that \begin{align*}
    \sup_{f \in \cF} \abs{\sum_{t = 1} ^ {T} \ind{\tilde{p}_t \in \cI_i} f(x_t)  (\mathbb{E}_{t}[V(\tilde{p}_t, y_t)] - V(\tilde{p}_t, y_t))} = \tilde{\cO}\bigc{\sqrt{\bigc{\sum_{t = 1} ^ T \mathbb{P}_{t}(\tilde{p}_t \in \cI_i)} \log \frac{\abs{\cF} N}{\delta}} + \log \frac{\abs{\cF} N}{\delta}}
\end{align*}
holds for all $i \in [N]$ with probability at least $1 - \delta$.
}

\paragraph{Relating $\sum_{t = 1} ^ {T} \mathbb{P}_t (\tilde{p}_t \in \cI_i)$ with $n_i$.} Next, we obtain a high probability bound that relates $\sum_{t = 1} ^ {T} \mathbb{P}_t (\tilde{p}_t \in \cI_i)$ with $n_i$. Consider the sequence $A_{t} \coloneqq \ind{\tilde{p}_t \in \cI_i} - \mathbb{P}_{t}(\tilde{p}_t \in \cI_i)$. Again, $\{A_{t}\}$ is a martingale difference sequence, therefore, repeating the exact same analysis as that for the sequence $\{X_{t}\}$, we obtain that \begin{align}\label{eq:deviation_ind_prob}
    \abs{\sum_{t = 1} ^ {T} \ind{\tilde{p}_t \in \cI_i} - \mathbb{P}_t (\tilde{p}_t \in \cI_i)} &= \tilde{\cO}\bigc{\sqrt{\bigc{\sum_{t = 1} ^ T \mathbb{P}_{t}(\tilde{p}_t \in \cI_i)} \log \frac{N}{\delta}} + \log \frac{N}{\delta}}.
\end{align}
holds for all $i \in [N]$ with probability at least $1 - \delta$. Let $\alpha \coloneqq \sum_{t = 1} ^ {T} \mathbb{P}_t (\tilde{p}_t \in \cI_i), \beta = \log \frac{N}{\delta}$, and $\gamma > 0$ be the constant (including logarithmic terms) hidden in the $\tilde{\cO}(.)$ notation. Then, \eqref{eq:deviation_ind_prob} can be expressed as $\abs{n_i - \alpha} \le \gamma (\sqrt{\alpha \beta} + \beta)$, therefore $\alpha$ satisfies $\alpha - \gamma \sqrt{\alpha\beta} - (n_{i} + \gamma \beta) \le 0$, which implies that $\sqrt{\alpha} \le \frac{\gamma \sqrt{\beta} + \sqrt{\gamma ^ 2 \beta + 4 n_i + 4 \gamma \beta}}{2} \le \gamma \sqrt{\beta} + \sqrt{n_i} + \sqrt{\gamma \beta} = \tilde{\cO}\bigc{\sqrt{\log \frac{N}{\delta}} + \sqrt{n_i}}$. 

\paragraph{Combining everything.} For all $i \in [N]$, with probability at least $1 - \delta$, we have
\begin{align*}
    \sup_{f \in \cF} \abs{\sum_{t = 1} ^ {T} \ind{\tilde{p}_t \in \cI_i} f(x_t) \bigc{V(z_i, y_t) - \mathbb{E}_{t} [V(z_i, y_{t})]}} &= \tilde{\cO}\bigc{\sqrt{n_i \log \frac{\abs{\cF} N}{\delta}} + \log \frac{\abs{\cF} N}{\delta}}, \\
     \max_{\sigma \in \{\pm 1\}} \sup_{f \in \cF} \abs{\sum_{t = 1} ^ {T} \ell_{f, i, \sigma} (\tilde{p}_t, x_t, y_t) - \mathbb{E}_{\tilde{p}_t}[\ell_{f, i, \sigma} (\tilde{p}_t, x_t, y_t) | \mathscr{F}_{t - 1}]} &= \tilde{\cO}\bigc{\sqrt{n_i \log \frac{\abs{\cF} N}{\delta}} + \log \frac{\abs{\cF} N}{\delta}}, \\
    \sup_{f \in \cF} \abs{\sum_{t = 1} ^ T \ind{\tilde{p}_t \in \cI_i} f(x_t) (\mathbb{E}_{t}[V(\tilde{p}_t, y_t)] - V(\tilde{p}_t, y_t))} &= \tilde{\cO}\bigc{\sqrt{n_i \log \frac{\abs{\cF} N}{\delta}} + \log \frac{\abs{\cF} N}{\delta}}.
\end{align*}
Combining the three bounds above with the bound on $\abs{\sum_{t = 1} ^ {T} U_t}$, we obtain
\begin{align*}
    \sup_{f \in \cF} \abs{\sum_{t = 1} ^ {T} \ind{p_t = z_i} f(x_t) V(p_t, y_t)} = \tilde{\cO}\bigc{\frac{\rho n_i}{N} + \sqrt{n_i \log \frac{\abs{\cF} N}{\delta}} + \log \frac{\abs{\cF} N}{\delta}}.
\end{align*}
This completes the proof.\end{proof}
Equipped with \pref{lem:hp_deviation_inefficient}, we prove the main result of this subsection.
\begin{theorem}\label{thm:smcal_bound_inefficient}
    For a fixed $r \ge 2$, \pref{alg:hp_algorithm} achieves $$\smcal_{\Gamma, r}(\cF) = 
        \tilde{\cO}\bigc{{\rho ^ {r} T ^ {\frac{1}{r + 1}} + T^{\frac{1}{r + 1}} \bigc{\log \frac{\abs{\cF}}{\delta}} ^ {\frac{r}{2}}}}$$ 
        with probability at least $1 - \delta$. Consequently, for $r \in [1, 2)$, \pref{alg:hp_algorithm} achieves \begin{align*}
        \smcal_{\Gamma, r}(\cF) = \tilde{\cO}\bigc{{\rho ^ {r} T ^ {1 - \frac{r}{3}} + T ^ {1 - \frac{r}{3}} \bigc{\log \frac{\abs{\cF}}{\delta}} ^ {\frac{r}{2}}}}
    \end{align*}
    with probability at least $1 - \delta$.
\end{theorem}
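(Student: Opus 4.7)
The plan is to deduce the theorem from \pref{lem:hp_deviation_inefficient} by a direct computation on the definition \eqref{eq:smcal_def_intro}. Let $L \coloneqq \log\bigc{\tfrac{\abs{\cF} N}{\delta}}$ and set $S_i \coloneqq \sup_{f \in \cF} \abs{\sum_{t=1}^T \ind{p_t = z_i} f(x_t) V(p_t, y_t)}$, so that $\smcal_{\Gamma, r}(\cF) = \sum_{i \in [N]} n_i^{1-r} S_i^r$. \pref{lem:hp_deviation_inefficient} guarantees $S_i = \tilde{\cO}\bigc{\rho n_i/N + \sqrt{n_i L} + L}$ uniformly in $i$ with probability at least $1 - \delta$, and we condition on this event throughout. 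A crucial additional observation that I will use is the trivial bound $S_i \le n_i$, which follows since each summand lies in $[-1,1]$.

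For $r \ge 2$, I would split the sum over $i$ into \emph{small} buckets ($n_i < L$) and \emph{large} buckets ($n_i \ge L$). On small buckets, the trivial bound yields $n_i^{1-r} S_i^r \le n_i^{1-r} \cdot n_i^r = n_i \le L$, contributing at most $N L$ in total. On large buckets, $L \le \sqrt{n_i L}$, so the lemma simplifies to $S_i = \tilde{\cO}(\rho n_i/N + \sqrt{n_i L})$; applying $(a+b)^r \le 2^{r-1}(a^r + b^r)$ gives $n_i^{1-r} S_i^r \le \tilde{\cO}(\rho^r n_i/N^r + n_i^{1-r/2} L^{r/2})$, and since $1-r/2 \le 0$ we have $n_i^{1-r/2} \le 1$ for all $n_i \ge 1$. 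Summing the first term contributes $\rho^r T/N^r$ and the second contributes at most $N L^{r/2}$. Combining everything and using $L \le L^{r/2}$ for $r \ge 2$ (with $L \ge 1$) yields $\smcal_{\Gamma, r}(\cF) = \tilde{\cO}\bigc{\rho^r T/N^r + N L^{r/2}}$. Balancing the two terms by choosing $N = \lceil T^{1/(r+1)} \rceil$ produces $\smcal_{\Gamma, r}(\cF) = \tilde{\cO}\bigc{(\rho^r + L^{r/2}) T^{1/(r+1)}}$, which matches the claim once we absorb $L = \tilde{\cO}(\log(\abs{\cF}/\delta))$ (since $\log N = \cO(\log T)$ is hidden in $\tilde{\cO}$).

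For $r \in [1, 2)$, I would reduce to the $r = 2$ case rather than repeat the computation. Writing $n_i^{1-r} S_i^r = (n_i^{-1} S_i^2)^{r/2} \cdot n_i^{1-r/2}$ and applying H\"older's inequality with conjugate exponents $(2/r, 2/(2-r))$ gives
\begin{align*}
\smcal_{\Gamma, r}(\cF) \le \bigc{\sum_i n_i^{-1} S_i^2}^{r/2} \bigc{\sum_i n_i^{(1-r/2) \cdot 2/(2-r)}}^{(2-r)/2} = \smcal_{\Gamma, 2}(\cF)^{r/2} \cdot T^{(2-r)/2},
\end{align*}
since $(1-r/2)\cdot 2/(2-r) = 1$ and $\sum_i n_i = T$. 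Substituting the $r=2$ bound and using $(a+b)^{r/2} \le a^{r/2} + b^{r/2}$ (valid since $r/2 \in [1/2, 1)$) yields, after simplifying the exponent $(2-r)/2 + r/6 = 1 - r/3$, the desired $\smcal_{\Gamma, r}(\cF) = \tilde{\cO}\bigc{(\rho^r + L^{r/2}) T^{1-r/3}}$.

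The main obstacle is ensuring that the additive $L$ term in \pref{lem:hp_deviation_inefficient} does not blow up to $L^r$ in the final bound; a naive application of $(a+b+c)^r \le 3^{r-1}(a^r+b^r+c^r)$ would introduce precisely such an undesired $N L^r$ contribution. The case-split on $n_i$ versus $L$, combined with the trivial bound $S_i \le n_i$ on small buckets, is exactly what controls this term, and everything else is routine bookkeeping of exponents and the standard balancing of $N$.
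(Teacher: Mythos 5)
Your proposal is correct and follows essentially the same route as the paper's own proof: both start from \pref{lem:hp_deviation_inefficient}, split the buckets according to whether $n_i$ is larger or smaller than $\log(\abs{\cF}N/\delta)$, use the trivial bound $S_i \le n_i$ (equivalently $\tfrac{1}{n_i}S_i \le 1$) on the small buckets, apply the power-mean inequality and $n_i^{1-r/2}\le 1$ on the large buckets to get $\tilde{\cO}(\rho^r T/N^r + N L^{r/2})$, balance with $N = \Theta(T^{1/(r+1)})$, and reduce $r\in[1,2)$ to $r=2$ via H\"older's inequality with $\smcal_{\Gamma,r}\le T^{1-r/2}\smcal_{\Gamma,2}^{r/2}$. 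The only differences are cosmetic (you spell out the H\"older conjugates and the absorption $L\le L^{r/2}$ explicitly).
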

\begin{proof}
    Fix a $r \ge 2$. We bound $\smcal_{\Gamma, r}(\cF)$ using the high probability bound in \pref{lem:hp_deviation_inefficient}. From \pref{lem:hp_deviation_inefficient}: \begin{align}\label{eq:hp_deviation_inefficient}
        \frac{1}{n_i}\sup_{f \in \cF} \abs{\sum_{t = 1} ^ {T} \ind{p_t = z_i} f(x_t) V(p_t, y_t)} \le \begin{cases}
            \tilde{\cO}\bigc{\sqrt{\frac{\log \frac{\abs{\cF} N}{\delta}}{n_i}} + \frac{\rho}{N}} & i \in \cS, \\
            1 & i \in \bar{\cS},
        \end{cases}
    \end{align}
    where $\cS \coloneqq \bigcurl{i \in [N]; n_{i} = \Omega\bigc{\log \frac{\abs{\cF} N}{\delta}}}$. Therefore, \begin{align*}
        \smcal_{\Gamma, r}(\cF) &= \sum_{i \in [N]} n_i \bigc{\frac{1}{n_i}\sup_{f \in \cF} \abs{\sum_{t = 1} ^ {T} \ind{p_t = z_i} f(x_t) V(p_t, y_t)}} ^ {r} \\
        &= \sum_{i \in \cS} n_{i} \cdot \tilde{\cO}\bigc{{\frac{\rho ^ {r}}{N ^ {r}} + \bigc{\frac{\log \frac{\abs{\cF}N}{\delta}}{n_i}} ^ {\frac{r}{2}}}} + \cO\bigc{\abs{\bar{\cS}} \log \frac{\abs{\cF}N}{\delta}} \\
        &= \tilde{\cO}\bigc{{\rho ^ {r}\frac{T}{N ^ {r}} + N \bigc{\log \frac{\abs{\cF}N}{\delta}} ^ {\frac{r}{2}}}} 
\end{align*}
where the first equality follows since $(u + v) ^ {r} \le 2 ^ {r - 1} (u ^ {r} + v ^ {r})$, which holds for all $u, v$ via Jensen's inequality applied to the function $x \to x^{r}$, which is convex for $r \ge 1$; we incur the $\cO\bigc{\abs{\bar{\cS}} \log \frac{\abs{\cF}N}{\delta}}$ term due to \eqref{eq:hp_deviation_inefficient}, and since $n_i = \cO\bigc{\log \frac{\abs{\cF}N}{\delta}}$ for $i \in \bar{\cS}$. To obtain the second equality, we bound $n_{i} ^ {1 - \frac{r}{2}} < 1$. Setting $N = \Theta(T^{\frac{1}{r + 1}})$ to balance the terms $\frac{T}{N ^ {r}}$ and $N$, we obtain the desired bound on $\smcal_{\Gamma, r}(\cF)$. For $r \in [1, 2)$, since $\smcal_{\Gamma, r}(\cF) \le T ^ {1 - \frac{r}{2}} (\smcal_{\Gamma, 2}(\cF)) ^ {\frac{r}{2}}$ by H\"older's inequality, we obtain \begin{align*}
    \smcal_{\Gamma, r}(\cF) = \tilde{\cO}\bigc{T ^ {1 - \frac{r}{2}} \cdot \bigc{\rho ^ 2 T ^ {\frac{1}{3}} + T ^ \frac{1}{3} \log \frac{\abs{\cF}}{\delta}} ^ {\frac{r}{2}}} = \tilde{\cO}\bigc{{\rho ^ {r} T ^ {1 - \frac{r}{3}} + T ^ {1 - \frac{r}{3}} \bigc{\log \frac{\abs{\cF}}{\delta}} ^ {\frac{r}{2}}}},
\end{align*}
where the second equality follows since $(u + v) ^ {\frac{r}{2}} \le \sqrt{2 ^ {r - 1} (u ^ {r} + v ^ {r})} \le 2 ^ {\frac{r - 1}{2}} (u ^ {\frac{r}{2}} + v ^ {\frac{r}{2}})$. This completes the proof.
\end{proof}
\subsection{Achieving Oracle-Efficient Multicalibration}\label{subsec:oracle_efficient_multicalibration}

Clearly, \pref{alg:hp_algorithm} is inefficient since it requires enumerating over $\cF$. In this section, we propose an oracle-efficient algorithm by reducing to online agnostic learning.
\DefOnlineAgnostic* 
For a randomized online agnostic learner that achieves \eqref{eq:WAL_regret} in expectation, due to linearity of \eqref{eq:WAL_regret} in the prediction $q_t(x_t)$, it is sufficient to predict the expectation of $q_{t}(x_t)$ at each time $t$ to achieve the same guarantee, therefore, without any loss of generality, {we may assume that $\owal$ is deterministic.}

Towards making \pref{alg:hp_algorithm} efficient, we instantiate $2N$ copies of $\owal$$\{\owal_{i, \sigma}\}_{i \in [N], \sigma \in \{\pm 1\}}$ and only maintain a probability distribution over the experts parametrized by $(i, \sigma)$. At each time $t$, each $\owal_{i, \sigma}$ provides an auditing function $q_{t, i, \sigma}$, which we utilize along with the probability distribution $\{w_{t, i, \sigma}\}$ over the experts to define the distribution
$${h_{t}(P) \coloneqq \sup_{y \in \cY} \mathbb{E}_{p \sim P} \bigs{\sum_{(i, \sigma)} w_{t, i, \sigma} \sigma \ind{p \in \cI_{i}} q_{t, i, \sigma}(x_{t}) V(p, y) | \mathscr{F}_{t - 1}}}.$$
Next, we obtain $P_t$ satisfying $h_{t}(P_t) \le \frac{\rho}{T}$ similar to \pref{line:obtain_distribution} in \pref{alg:hp_algorithm}. Subsequently, after predicting $p_{t}$ and observing $y_t$, we feed the gain ${\ell_{t, i, \sigma} \coloneqq \mathbb{E}_{p \sim P_t}\bigs{\ell_{t, i, \sigma}(p, y_t) | \mathscr{F}_{t - 1}}}$ to $\alg$, where the gain function $\ell_{t, i, \sigma}(p, y)$ is defined as ${\ell_{t, i, \sigma}(p, y) \coloneqq \sigma \ind{p \in \cI_i} q_{t, i, \sigma}(x_t) V(p, y)}$, and the outcome ${\sigma V(p_t, y_t)}$ to $\owal_{i_{t}, \sigma}$ for each $\sigma \in \{-1,+1\}$ (no feedback to $\owal_{i, \sigma}$ with $i\neq i_t$). The full algorithm is summarized in \pref{alg:hp_algorithm_efficient}.
\begin{algorithm}[!htb]
                    \caption{Multicalibration for an Elicitable Property (Oracle-Efficient Version)} 
                    \label{alg:hp_algorithm_efficient}
                    \textbf{Initialize:} An expert problem subroutine $\alg$, online agnostic learner ($\owal_{i, \sigma}$) for each $(i, \sigma)$; 
                    \begin{algorithmic}[1]
                            \STATE\textbf{for} $t = 1, \dots, T,$
                            \STATE\hspace{3mm}Receive context $x_t \in \cX$; \STATE\hspace{3mm}Obtain weights $\{w_{t, i, \sigma}\}$ from $\alg$;
                            \STATE\hspace{3mm}For each $(i, \sigma)$, if no outcome was feed to $\owal_{i, \sigma}$ in the last round, let $q_{t, i, \sigma}$ be the {most recent} output of $\owal_{i, \sigma}$; else obtain a new test function from $\owal_{i, \sigma}$ as $q_{t, i, \sigma}$; 
                            \STATE\hspace{3mm}Define the function $\Phi_{t}: [0, 1] \to [-1, 1]$ as $$\Phi_{t}(p) \coloneqq \sum_{(i, \sigma)} w_{t, i, \sigma} \sigma \ind{p \in \cI_{i}} q_{t, i, \sigma}(x_t).$$ If $\Phi_{t}(0) > 0$, choose a distribution $P_t$ that is only supported on $0$; else if $\Phi_t(1) \le 0$, choose $P_t$ that is only supported on $1$; else choose $i \in \{0, \dots, T - 1\}$ such that $\Phi_{t}\bigc{\frac{i}{T}} \Phi_{t}\bigc{\frac{i + 1}{T}} \le 0$ and $P_t$ such that \begin{align*}
    P_t\bigc{\frac{i}{T}} = \frac{\abs{\Phi_{t}\bigc{\frac{i + 1}{T}}}}{\abs{\Phi_{t}\bigc{\frac{i}{T}}} + \abs{\Phi_{t}\bigc{\frac{i + 1}{T}}}}, \quad P_t\bigc{\frac{i + 1}{T}} = \frac{\abs{\Phi_{t}\bigc{\frac{i}{T}}}}{\abs{\Phi_{t}\bigc{\frac{i}{T}}} + \abs{\Phi_{t}\bigc{\frac{i + 1}{T}}}}.
\end{align*}
                            \STATE\hspace{3mm}Sample $\tilde{p}_{t} \sim P_t$ and predict $p_{t} = \frac{i_{t}}{N}$, where $i_{t} \in [N]$ is such that $\tilde{p}_{t} \in \cI_{i_t}$;
                            \STATE\hspace{3mm}Observe $y_{t}$;
                            \STATE\hspace{3mm}For each $(i, \sigma)$, feed the gain ${\ell_{t, i, \sigma} = \mathbb{E}_{p \sim P_t}\bigs{\ell_{t, i, \sigma}(p, y_t) | \mathscr{F}_{t - 1}}}$ to $\alg$, where ${\ell_{t, i, \sigma}(p, y) = \sigma \ind{p \in \cI_i} q_{t, i, \sigma}(x_t) V(p, y)}$;
                            \STATE\hspace{3mm}Feed the outcome $\sigma V(p_{t}, y_{t})$ to $\owal_{i_t, \sigma}$ for each $\sigma \in \{-1,+1\}$;
                        \end{algorithmic}
\end{algorithm}
 
The following lemma is in a similar spirit as \pref{lem:hp_deviation_inefficient} in \pref{subsec:inefficient_multicalibration}.
\begin{restatable}
{lemma}{HPDeviationEfficient}\label{lem:hp_bound_deviation}
    \pref{alg:hp_algorithm_efficient} ensures that \begin{align*}
        \sup_{f \in \cF} \abs{\sum_{t = 1} ^ {T} \ind{p_t = z_i} f(x_t) V(p_t, y_t)} = \tilde{\cO}\bigc{\frac{\rho n_i}{N} + \sqrt{n_i \log \frac{N}{\delta}} + \log \frac{N}{\delta} + \mathsf{Reg}(\cF, n_i)}
    \end{align*}
    for all $i \in [N]$ with probability at least $1 - \delta$.
\end{restatable}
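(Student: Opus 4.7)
The strategy is to reduce the oracle-efficient setting to an analysis that closely parallels that of \pref{lem:hp_deviation_inefficient}, with the key change that the ``worst-case $f \in \cF$'' is bypassed up front using the regret guarantee of the online agnostic learner, leaving only a union bound over $(i,\sigma) \in [N] \times \{\pm 1\}$ instead of $(f,i,\sigma) \in \cF \times [N] \times \{\pm 1\}$. This is what replaces $\log(\abs{\cF} N/\delta)$ with $\log(N/\delta) + \mathsf{Reg}(\cF, n_i)$ in the final bound.

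\textbf{Step 1 (Reduction via $\owal$).} Fix $i \in [N]$ and introduce the sign variable $\sigma$ to write
\begin{align*}
\sup_{f \in \cF} \abs{\sum_{t=1}^{T} \ind{p_t = z_i} f(x_t) V(p_t, y_t)} = \sup_{\sigma \in \{\pm 1\}} \sup_{f \in \cF} \sum_{t=1}^{T} f(x_t) \cdot \sigma \ind{p_t = z_i} V(p_t, y_t).
\end{align*}
By construction of \pref{alg:hp_algorithm_efficient}, $\owal_{i,\sigma}$ is fed the outcome $\kappa_{t,i,\sigma} \coloneqq \sigma \ind{p_t = z_i} V(p_t, y_t)$ on exactly the $n_i$ rounds with $p_t = z_i$ (on the other rounds $\kappa_{t,i,\sigma} = 0$, so those rounds do not contribute to the sum above). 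Applying \pref{def:online_weak_agnostic_learner_intro} to $\owal_{i,\sigma}$ yields
\begin{align*}
\sup_{f \in \cF} \sum_{t=1}^{T} f(x_t) \kappa_{t,i,\sigma} \le \sum_{t=1}^{T} q_{t,i,\sigma}(x_t) \kappa_{t,i,\sigma} + \mathsf{Reg}(\cF, n_i).
\end{align*}
It thus suffices to bound $\sup_{\sigma} \abs{\sum_{t=1}^{T} \ind{p_t = z_i} q_{t,i,\sigma}(x_t) V(p_t, y_t)}$ for each fixed $(i,\sigma)$.

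\textbf{Step 2 (Four-term decomposition, mirroring \pref{lem:hp_deviation_inefficient}).} Replace the role of $f(x_t)$ in the proof of \pref{lem:hp_deviation_inefficient} by $q_{t,i,\sigma}(x_t)$, which is crucially $\mathscr{F}_{t-1}$-measurable once $x_t$ is revealed (and independent of $y_t$ conditional on $\mathscr{F}_{t-1}$). Concretely, decompose
\begin{align*}
\sum_{t=1}^T \ind{p_t = z_i} q_{t,i,\sigma}(x_t) V(p_t, y_t) = \sum_{t=1}^T (X_t + U_t + Z_t + W_t),
\end{align*}
with $X_t, U_t, Z_t, W_t$ defined exactly as in the proof of \pref{lem:hp_deviation_inefficient} but with $f(x_t)$ replaced by $q_{t,i,\sigma}(x_t)$. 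Bound each term by repeating the four martingale/regret arguments used there: $\abs{U_t} \le \rho/N$ from Lipschitzness of $V(\cdot, Y_t)$; $\abs{\sum_t X_t}$ via Freedman's inequality over the dyadic partition; $\abs{\sum_t Z_t}$ via \pref{lem:expert_minimization_alg} applied to the gain $\ell_{t,i,\sigma}$ now defined with $q_{t,i,\sigma}(x_t)$ in place of $f(x_t)$ plus another Freedman step; and $\abs{\sum_t W_t}$ using the explicit two-point form of $P_t$ to bound $\abs{\mathbb{E}_t[W_t]} \le \rho/T$ followed again by Freedman. Finally, relate $\sum_t \mathbb{P}_t(\tilde{p}_t \in \cI_i)$ to $n_i$ exactly as in \pref{lem:hp_deviation_inefficient}. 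Taking a union bound over $(i,\sigma) \in [N]\times\{\pm 1\}$ (not over $\cF$!) replaces the $\log(\abs{\cF}N/\delta)$ factor with $\log(N/\delta)$.

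\textbf{Where the main work lies.} Most of the analysis is a mechanical adaptation of \pref{lem:hp_deviation_inefficient}, so the one non-routine check is that all of the martingale arguments still go through when $f(x_t)$ is replaced by $q_{t,i,\sigma}(x_t)$. The subtlety is that $q_{t,i,\sigma}$ depends on the history through $\owal_{i,\sigma}$'s internal state and may even differ from the previous round's test function; however, since $\owal$ produces $q_{t,i,\sigma}$ before observing $y_t$, the function is measurable with respect to $\mathscr{F}_{t-1}$ together with $x_t$, so each $X_t, W_t$ and the centering of $Z_t$ remain martingale differences conditional on $\mathscr{F}_{t-1}$ with variance controlled by $\mathbb{P}_t(\tilde{p}_t \in \cI_i)$ (since $\abs{q_{t,i,\sigma}} \le 1$). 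Combining these bounds with the $\mathsf{Reg}(\cF, n_i)$ term from Step 1 yields the stated inequality.
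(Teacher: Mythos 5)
Your proposal follows essentially the same route as the paper's proof: reduce the supremum over $\cF$ to the online agnostic learner's test function via \pref{def:online_weak_agnostic_learner_intro} (paying $\mathsf{Reg}(\cF, n_i)$), then replay the four-term $X_t, U_t, Z_t, W_t$ decomposition from \pref{lem:hp_deviation_inefficient} with $f(x_t)$ replaced by the $\mathscr{F}_{t-1}$-measurable $q_{t,i,\sigma}(x_t)$, and union bound over $(i,\sigma)$ instead of $(f,i,\sigma)$. The one small imprecision — that $q_{t,i,\sigma}(x_t)$ is measurable ``with respect to $\mathscr{F}_{t-1}$ together with $x_t$'' — is harmless, since under an adaptive adversary $x_t$ is itself $\mathscr{F}_{t-1}$-measurable, which is exactly the measurability the paper relies on.
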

\begin{proof}
    For convenience, let $\kappa_{t, i, \sigma} \coloneqq \sigma \ind{p_{t} = z_i} V(p_t, y_t)$. Fix a $i \in [N]$ and failure probability $\delta$. We begin by realizing that \begin{align*}\sup_{f \in \cF} \abs{\sum_{t = 1} ^ {T} \ind{p_t = z_i} f(x_t) V(p_t, y_t)} &= \sup_{\sigma \in \{\pm 1\}, f \in \cF} \sum_{t = 1} ^ {T} \sigma \ind{p_t = z_i} f(x_t) V(p_t, y_t) \\
    &= \sup_{\sigma \in \{\pm 1\}, f \in \cF} \sum_{t = 1} ^ {T} \kappa_{t, i, \sigma} f(x_{t}) \\
    &\le \sup_{\sigma \in \{\pm 1\}} \sum_{t = 1} ^ {T} \kappa_{t, i, \sigma} q_{t, i, \sigma}(x_{t}) +  \mathsf{Reg} (\cF, n_i),
    \end{align*}
    where the inequality follows from \eqref{eq:WAL_regret}, since $\owal_{i, \sigma}$ is updated exactly on the rounds where the prediction made is $p_{t} = z_i$.
    Next, we obtain a high probability bound on the quantity $\sum_{t = 1} ^ {T} \kappa_{t, i, \sigma} q_{t, i, \sigma}(x_{t})$ (for a fixed $\sigma$) by proceeding in a similar manner as \pref{lem:hp_deviation_inefficient}. We begin with the decomposition $\sum_{t = 1} ^ {T} \kappa_{t, i, \sigma} q_{t, i, \sigma}(x_{t}) = \sum_{t = 1} ^ {T} X_{t} + U_t + Z_t + W_t$, where the sequences $\{X_{t}\}, \{U_{t}\}, \{V_{t}\}, \{W_{t}\}$ are defined as \begin{align*}X_{t} &\coloneqq \sigma \ind{\tilde{p}_t \in \cI_i} q_{t, i, \sigma}(x_t) \cdot \bigc{V(z_i, y_t) - \mathbb{E}_{t} [V(z_i, y_{t})]}, \\
        U_{t} &\coloneqq \sigma \ind{\tilde{p}_t \in \cI_i} q_{t, i, \sigma}(x_{t}) \cdot  (\mathbb{E}_t [V(z_i, y_t)] - \mathbb{E}_{t} [V(\tilde{p}_t, y_t)]), \\
        Z_{t} &\coloneqq \sigma \ind{\tilde{p}_t \in \cI_i} q_{t, i, \sigma}(x_t) \cdot V(\tilde{p}_t, y_t), \\
        W_{t} &\coloneqq \sigma \ind{\tilde{p}_t \in \cI_i} q_{t, i, \sigma}(x_t) \cdot (\mathbb{E}_{t}[V(\tilde{p}_t, y_t)] - V(\tilde{p}_t, y_t)).\end{align*}
 Clearly, $\abs{U_t} \le \frac{\rho}{N}$. Since $q_{t, i, \sigma}$ is $\mathscr{F}_{t - 1}$ measurable, $\{X_{t}\}$ is a martingale difference sequence, where as $\{W_{t}\}$ is not necessarily so. Following the same reasoning as in the proof of \pref{lem:hp_deviation_inefficient}, we obtain \begin{align*}
    \sum_{t = 1} ^ {T} X_{t} &= \tilde{\cO}\bigc{\sqrt{\bigc{\sum_{t = 1} ^ T \mathbb{P}_{t}(\tilde{p}_t \in \cI_i)} \log \frac{1}{\delta}} + \log \frac{1}{\delta}}, \\
    \sum_{t = 1} ^ {T} W_{t} &= \tilde{\cO}\bigc{\sqrt{\bigc{\sum_{t = 1} ^ T \mathbb{P}_{t}(\tilde{p}_t \in \cI_i)} \log \frac{1}{\delta}} + \log \frac{1}{\delta}}
\end{align*}
with probability at least $1 - \delta$. To bound $\sum_{t = 1} ^ {T} Z_t$, we first decompose $
\sum_{t = 1} ^ {T} Z_{t} = \sum_{t = 1} ^ {T} \ell_{t, i, \sigma}(\tilde{p}_t, y_t) = \sum_{t = 1} ^ {T} \ell_{t, i, \sigma} + \sum_{t = 1} ^ {T} \ell_{t, i, \sigma} (\tilde{p}_t, y_t) - \mathbb{E}_{\tilde{p}_t}[\ell_{t, i, \sigma} (\tilde{p}_t, y_t) | \mathscr{F}_{t - 1}]$.
It follows from the regret guarantee of $\alg$ (\pref{lem:expert_minimization_alg}) that \begin{align*}
   \sum_{t = 1} ^ {T} \ell_{t, i, \sigma} &\le \sum_{t = 1} ^ {T} \sum_{(i, \sigma)} w_{t, i, \sigma} \ell_{t, i, \sigma} + \cO\bigc{\log (NT) + \sqrt{\bigc{\sum_{t = 1} ^ {T} \mathbb{P}_{t}(\tilde{p}_t \in \cI_i)} \log (NT)}} \\
    &= \tilde{\cO}\bigc{\log N + \sqrt{\bigc{\sum_{t = 1} ^ {T} \mathbb{P}_{t}(\tilde{p}_t \in \cI_i)} \log N}},
\end{align*}
since for each $t$, $\sum_{(i, \sigma)} w_{t, i, \sigma} \ell_{t, i, \sigma} \le h_t(P_t) \le \frac{\rho}{T}$ by the choice of $P_t$ (exact same reasoning as \pref{prop:distribution}). The deviation $\sum_{t = 1} ^ {T} \ell_{t, i, \sigma} (\tilde{p}_t, y_t) - \mathbb{E}_{\tilde{p}_t}[\ell_{t, i, \sigma} (\tilde{p}_t, y_t) | \mathscr{F}_{t - 1}]$ can be bounded similar to \pref{lem:hp_deviation_inefficient}; we obtain
\begin{align*}
    \sum_{t = 1} ^ {T} \ell_{t, i, \sigma} (\tilde{p}_t, y_t) - \mathbb{E}_{\tilde{p}_t}[\ell_{t, i, \sigma} (\tilde{p}_t, y_t) | \mathscr{F}_{t - 1}] = \tilde{\cO}\bigc{\sqrt{\bigc{\sum_{t = 1} ^ T \mathbb{P}_{t}(\tilde{p}_t \in \cI_i)} \log \frac{1}{\delta}} + \log \frac{1}{\delta}}.
\end{align*}
Combining everything and taking a union bound over $i \in [N]$, we obtain 
\begin{align*}
    \sum_{t = 1} ^ {T} \kappa_{t, i, \sigma} q_{t, i, \sigma}(x_t) = \tilde{\cO}\bigc{\frac{\rho n_i}{N} + \sqrt{n_i \log \frac{N}{\delta}} + \log \frac{{N}}{\delta}}
\end{align*}
with probability at least $1 - \delta$. This completes the proof.
\end{proof}

\begin{theorem}\label{thm:smcal_general_result}
     {Fix a $r \ge 1$ and assume that there exists an $\owal$ for which $\mathsf{Reg}(\cF, n) = \tilde{\cO}(n^{\alpha} \comp(\cF))$, where $\alpha \in [0, 1)$ and $\comp(\cF)$ is a complexity measure of $\cF$ that is independent of $n$. Then, for $r \ge 2$, \pref{alg:hp_algorithm_efficient} achieves \begin{align*}
        \smcal_{\Gamma, r}(\cF) = \tilde{\cO}\bigc{{\rho ^ {r} T ^ {\frac{1}{r + 1}} + T^{\frac{1}{r + 1}} \bigc{\log \frac{1}{\delta}} ^ {\frac{r}{2}} + T ^ {1 - r + \frac{r}{r + 1} + \frac{\alpha r ^ {2}}{r + 1}} \comp(\cF) ^ {r} + T ^ {\frac{1}{r + 1}} \comp(\cF) ^ {r}}}
    \end{align*}
    with probability at least $1 - \delta$. Consequently, for $r \in [1, 2)$, \pref{alg:hp_algorithm_efficient} achieves 
    \begin{align*}
        \smcal_{\Gamma, r}(\cF) = \tilde{\cO}\bigc{{\rho ^ {r} T ^ {1 - \frac{r}{3}} + T ^ {1 - \frac{r}{3}} \bigc{\log \frac{1}{\delta}} ^ {\frac{r}{2}} + T ^ {1 + \frac{2r (\alpha - 1)}{3}} \comp(\cF) ^ {r} + T ^ {1 - \frac{r}{3}} \comp(\cF) ^ {r}}}
    \end{align*}
    with probability at least $1 - \delta$.}
\end{theorem}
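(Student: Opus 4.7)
The plan is to combine \pref{lem:hp_bound_deviation} with the assumed regret guarantee $\mathsf{Reg}(\cF, n) = \tilde{\cO}(n^{\alpha} \comp(\cF))$ and then evaluate the $\ell_r$-swap multicalibration definition \eqref{eq:smcal_def_intro} following essentially the same strategy as the proof of \pref{thm:smcal_bound_inefficient}, with the only new ingredient being a careful treatment of the extra term coming from $\mathsf{Reg}(\cF, n_i)$. Substituting the regret bound into \pref{lem:hp_bound_deviation} gives, with probability at least $1 - \delta$, for every $i \in [N]$ with $n_i = \Omega(\log(N/\delta))$,
\[
\frac{1}{n_i} \sup_{f \in \cF} \abs{\sum_{t=1}^{T} \ind{p_t = z_i} f(x_t) V(p_t, y_t)} = \tilde{\cO}\bigc{\frac{\rho}{N} + \sqrt{\frac{\log(N/\delta)}{n_i}} + \frac{\comp(\cF)}{n_i^{1-\alpha}}};
\]
for the remaining indices (call the set $\bar{\cS}$) the trivial bound of $1$ combined with $n_i = \cO(\log(N/\delta))$ contributes only $\tilde{\cO}(N \log(N/\delta))$ to $\smcal_{\Gamma, r}(\cF)$, which is a lower-order term.

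Raising the display above to the $r$-th power via $(a+b+c)^r \le 3^{r-1}(a^r + b^r + c^r)$ and multiplying by $n_i$ produces three sums to bound. The first two sums are handled exactly as in \pref{thm:smcal_bound_inefficient}: $\sum_i n_i \rho^r / N^r \le \rho^r T/N^r$, and $\sum_{i \in \cS} n_i (\log(N/\delta)/n_i)^{r/2} \le N (\log(N/\delta))^{r/2}$ using $n_i^{1-r/2} \le 1$ for $r \ge 2$. The new sum is $\comp(\cF)^r \sum_{i \in \cS} n_i^{1 - r(1-\alpha)}$, whose treatment depends on the sign of $\beta := 1 - r(1-\alpha)$.

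The main obstacle is this third sum, and the correct way to handle it is a simple two-case split. When $\beta \in [0,1]$, Jensen's inequality applied to the concave map $x \mapsto x^{\beta}$ yields $\sum_{i \in \cS} n_i^{\beta} \le N^{1-\beta} \bigc{\sum_i n_i}^{\beta} = N^{r(1-\alpha)} T^{1-r(1-\alpha)}$; when $\beta < 0$, we simply use $n_i^{\beta} \le 1$ to obtain $\sum_{i \in \cS} n_i^{\beta} \le N$. Taking the sum of these two upper bounds majorizes the third sum in both regimes simultaneously, and this is precisely why the theorem statement contains two distinct $\comp(\cF)^r$ terms. Assembling everything,
\[
\smcal_{\Gamma, r}(\cF) = \tilde{\cO}\bigc{\frac{\rho^r T}{N^r} + N \bigc{\log \tfrac{1}{\delta}}^{r/2} + N^{r(1-\alpha)} T^{1-r(1-\alpha)} \comp(\cF)^r + N \comp(\cF)^r},
\]
and choosing $N = \Theta(T^{1/(r+1)})$ to balance the $\rho^r T/N^r$ and $N$ terms yields the claimed bound for $r \ge 2$ after a short arithmetic simplification of the exponent $1 - r + \tfrac{r}{r+1} + \tfrac{\alpha r^2}{r+1} = 1 - \tfrac{r^2(1-\alpha)}{r+1}$.

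Finally, the bound for $r \in [1, 2)$ follows immediately from H\"older's inequality $\smcal_{\Gamma, r}(\cF) \le T^{1 - r/2} \bigc{\smcal_{\Gamma, 2}(\cF)}^{r/2}$ applied to the $r = 2$ bound just derived, together with the elementary inequality $(u_1 + u_2 + u_3 + u_4)^{r/2} \le 4^{r/2 - 1}(u_1^{r/2} + u_2^{r/2} + u_3^{r/2} + u_4^{r/2})$, which distributes the $r/2$-th power across the four component terms and reproduces the claimed exponents $1 - r/3$ and $1 + 2r(\alpha-1)/3$.
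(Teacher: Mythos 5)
Your proof is correct and follows essentially the same approach as the paper: you invoke \pref{lem:hp_bound_deviation}, split the $r$-th power via the three-term convexity inequality, handle the $\rho^r$ and $\log$ terms as in \pref{thm:smcal_bound_inefficient}, analyze the regret term by the sign of $\beta = 1 - r(1-\alpha)$ using Jensen (concave case) versus the trivial bound $n_i^\beta \le 1$ (convex case), pick $N = \Theta(T^{1/(r+1)})$, and finish with H\"older for $r \in [1,2)$. The only (minor, welcome) difference is that you explicitly justify why it is legitimate to include both $\comp(\cF)^r$ terms in the final bound simultaneously, whereas the paper simply says ``combining both cases.''
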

\begin{proof}
    Fix a $r \ge 2$. It follows from \pref{lem:hp_bound_deviation} that
    \begin{align}\label{eq:temp}
    \frac{1}{n_i} \sup_{f \in \cF} \abs{\sum_{t = 1} ^ {T} \ind{p_t = z_i} f(x_{t}) V(p_t, y_t)} &= \begin{cases}
        \tilde{\cO}\bigc{\frac{\rho}{N} + \sqrt{\frac{\log \frac{N}{\delta}}{n_i}} + \frac{\reg(\cF, n_i)}{n_i}} & i \in \cS, \\
        1 & \text{otherwise},
    \end{cases}
\end{align}
where $\cS \coloneqq \{i \in [N]; n_i = \Omega(\log \frac{N}{\delta})\}$. Therefore, 
\begin{align*}
    \smcal_{\Gamma, r}(\cF) &= \sum_{i \in [N]} n_{i} \bigc{\frac{1}{n_i} \sup_{f \in \cF} \abs{\sum_{t = 1} ^ {T} \ind{p_t = z_i} f(x_{t}) V(p_t, y_t)}}^{r} \\
    &= \sum_{i \in \cS} n_{i} \cdot \tilde{\cO}\bigc{{\frac{\rho ^ {r}}{N ^ {r}} + \bigc{\frac{\log \frac{N}{\delta}}{n_{i}}} ^ {\frac{r}{2}} + \bigc{\frac{\reg(\cF, n_i)}{n_{i}}} ^ {r}}} + \cO\bigc{\abs{\bar{\cS}} \log \frac{N}{\delta}} \\
    &= \tilde{\cO}\bigc{{\rho ^ {r} \frac{T}{N ^ r} + N \bigc{\log \frac{N}{\delta}} ^ {\frac{r}{2}} + \sum_{i \in \cS} n_{i} \bigc{\frac{\reg(\cF, n_i)}{n_i}} ^ {r}}},
\end{align*}
where the first equality follows since $(u + v + w) ^ {r} \le 3 ^ {r - 1} (u ^ {r} + v ^ {r} + w ^ {r})$ via Jensen's inequality applied to the function $x \to x ^ {r}$, which is convex for $r \ge 1$; the $\cO\bigc{\abs{\bar{\cS}} \log \frac{N}{\delta}}$ term is because of \eqref{eq:temp}, and since $n_{i} = \cO(\log \frac{N}{\delta})$ for $i \in \bar{\cS}$. To obtain the second equality, we bound $n_{i} ^ {1 - \frac{r}{2}} < 1$. Next, observe that as a function of $n_{i}$, $\psi(n_i) \coloneqq n_i \bigc{\frac{\reg(\cF, n_i)}{n_i}} ^ {r}$ is either concave or convex in $n_{i}$. Particularly, since the dependence of $n_{i}$ in $\reg(\cF, n_i)$ is $n_{i} ^ {\alpha}$ for some $\alpha \in [0, 1)$, the dependence of $n_{i}$ in $\psi(n_i)$ is $n_{i} ^ {\beta}$, where $\beta = 1 + r (\alpha - 1) < 1$. If $\psi(n_i)$ is concave in $n_i$ (corresponds to $\beta \in [0, 1)$), we have \begin{align*}
    \sum_{i \in \cS} \psi(n_i) \le \sum_{i \in [N]} \psi(n_i) \le N \psi \bigc{\frac{1}{N} \sum_{i = 1} ^ {N} n_i} = N \psi\bigc{\frac{T}{N}} = T \bigc{\frac{N}{T} \reg\bigc{\cF, \frac{T}{N}}} ^ {r},
\end{align*}
where the second inequality follows from Jensen's inequality. On the contrary, if $\psi(n_i)$ is convex in $n_i$ ($\beta < 0$), bounding $n_{i} ^ {\beta} \le 1$, we obtain \begin{align*}
    \sum_{i \in \cS} \psi(n_i) = \sum_{i \in \cS} n_{i} ^ {\beta} \comp(\cF) ^ {r} \le \abs{\cS} \comp(\cF) ^ {r} \le N \comp(\cF) ^ {r}.
\end{align*} Choosing $N = \Theta(T ^ {\frac{1}{r + 1}})$ and combining both cases yields the desired bound on $\smcal_{\Gamma, r}(\cF)$. For $r \in [1, 2)$, since $\smcal_{\Gamma, r}(\cF) \le T ^ {1 - \frac{r}{2}} (\smcal_{\Gamma, 2}(\cF)) ^ {\frac{r}{2}}$ via H\"older's inequality, we obtain  \begin{align*}
    \smcal_{\Gamma, r}(\cF) &= \tilde{\cO}\bigc{T ^ {1 - \frac{r}{2}} \cdot \bigcurl{\rho ^ {2} T^{\frac{1}{3}} + T^{\frac{1}{3}} \log \frac{1}{\delta} + T ^ {\frac{4\alpha - 1}{3}} \comp(\cF) ^ 2 + T ^ {\frac{1}{3}} \comp(\cF) ^ {2}} ^ {\frac{r}{2}}} \\
    &= \tilde{\cO}\bigc{{\rho ^ {r} T ^ {1 - \frac{r}{3}} + T ^ {1 - \frac{r}{3}} \bigc{\log \frac{1}{\delta}} ^ {\frac{r}{2}} + T ^ {1 + \frac{2r (\alpha - 1)}{3}} \comp(\cF) ^ {r} + T ^ {1 - \frac{r}{3}} \comp(\cF) ^ {r}}},
\end{align*}
where the last equality follows since $(u + v + w + x) ^ {\frac{r}{2}} \le \sqrt{4 ^ {r - 1} (u ^ {r} + v ^ {r} + w ^ {r} + x ^ {r})} \le 2 ^ {r - 1} (u ^ {\frac{r}{2}} + v ^ {\frac{r}{2}} + w ^ {\frac{r}{2}} + x ^ {\frac{r}{2}})$. This completes the proof.
\end{proof}
Observe that whenever the dependence of $n_i$ in $\reg(\cF, n_i)$ is $\sqrt{n_i}$ or better ($\alpha \le \frac{1}{2}$), for $r \ge 2$, the dependence of $T$ in the third term in \pref{thm:smcal_general_result} is at most $T^{1 - r + \frac{r}{r + 1} + \frac{r ^ {2}}{2(r + 1)}} = T ^ {\frac{2 + 2r - r ^ {2}}{2(r + 1)}} \le T^{\frac{1}{r + 1}}$, thereby resulting in $\smcal_{\Gamma, r}(\cF) = \tilde{\cO}\bigc{T^{\frac{1}{r + 1}}}$ with high probability. {Similarly, for $r \in [1, 2)$, the dependence of $T$ in the third term is at most $T ^ {1 - \frac{r}{3}}$, thereby resulting in $\smcal_{\Gamma, r}(\cF) = \tilde{\cO}\bigc{T ^ {1 - \frac{r}{3}}}$ with high probability}.  However, when the dependence of $n_{i}$ is worse than $\sqrt{n_i}$, the dependence of $T$ can be quite worse. In the next section, we consider several hypothesis classes for which either we can derive a concrete algorithm or prove an existential result that $\smcal_{\Gamma, r}(\cF) = \tilde{\cO}\bigc{T^{\frac{1}{r + 1}}}$.

\subsection{Bounds for Specific Hypothesis Classes}
We begin by implementing the online agnostic learner in \pref{def:online_weak_agnostic_learner_intro}. As already mentioned, in order to achieve the $T^{\frac{1}{r + 1}}$ dependence, we want the online agnostic learner $\owal_{i, \sigma}$ to satisfy $\mathsf{Reg}(\cF, n) = \tilde{\cO}(\sqrt{n} \comp(\cF))$. We discuss two specific classes for which one can derive an explicit algorithm satisfying the desired guarantee. For a general hypothesis class $\cF$, we show that under the assumption that the sequential Rademacher complexity of $\cF$ satisfies $\Re^{\text{seq}}(\cF, n) = \tilde{\cO}\bigc{\frac{\comp(\cF)}{\sqrt{n}}}$, there indeed exists such an algorithm.

\paragraph{Finite $\cF$.} When $\cF$ is finite, such an algorithm is possible by invoking the Multiplicative Weights Update (MWU) algorithm and predicting $q_{t}(x_{t}) = \sum_{f \in \cF} w_{t, f} f(x_{t})$, where each $f \in \cF$ corresponds to an expert and $\{w_{t, f}\}$ represents a probability distribution over the experts output by MWU.

\begin{algorithm}[!htb]
                    \caption{Online agnostic learner for finite $\cF$} 
                    \label{alg:owal_finite_F}
                    \textbf{Initialize:} Step size $\eta_t = \sqrt{\frac{\log \abs{\cF}}{t}}$, weights $w_{1, f} = \frac{1}{\abs{\cF}}$ for all $f \in \cF$;
                    \begin{algorithmic}[1]
                            \STATE \textbf{for} $t = 1, \dots, n,$
                            \STATE\hspace{3mm} Receive context $x_{t}$;
                            
                            \STATE\hspace{3mm} Predict $q_{t}(x_t) = \sum_{f \in \cF} w_{t, f} \cdot f(x_t)$ and observe $\kappa_{t}$;
                            
                            \STATE\hspace{3mm} Update $w_{t + 1, f} = \frac{\exp\bigc{\eta_t\sum_{\tau = 1} ^ {t} f(x_{\tau}) \kappa_{\tau}}}{\sum_{f \in \cF} \exp\bigc{\eta_t\sum_{\tau = 1} ^ {t} f(x_{\tau}) \kappa_{\tau}}}$ for each $f \in \cF$;
                        \end{algorithmic}
\end{algorithm}
The following result is immediate regret guarantee of MWU \citep{arora2012multiplicative}. 
\begin{lemma}\label{lem:online_agnostic_finite_F}
    \pref{alg:owal_finite_F} ensures that $\mathsf{Reg}(\cF, n) = \cO\bigc{\sqrt{n \log \abs{\cF}}}$.
\end{lemma}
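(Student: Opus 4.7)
The plan is to reduce directly to the standard external regret guarantee for the Hedge/Multiplicative Weights algorithm. For each $f \in \cF$, I would treat $f$ as an expert whose gain at round $t$ is $g_{t, f} \coloneqq f(x_t)\kappa_t$, which lies in $[-1, 1]$ since both $f(x_t), \kappa_t \in [-1, 1]$. The prediction $q_t(x_t) = \sum_{f \in \cF} w_{t, f} f(x_t)$ in \pref{alg:owal_finite_F} then satisfies $q_t(x_t) \kappa_t = \langle w_t, g_t \rangle$, so the regret in \eqref{eq:WAL_regret} rewrites as
\[\sup_{f \in \cF} \sum_{t = 1}^{n} f(x_t) \kappa_t - \sum_{t = 1}^{n} q_t(x_t) \kappa_t = \max_{f \in \cF} \sum_{t = 1}^{n} g_{t, f} - \sum_{t = 1}^{n} \langle w_t, g_t \rangle,\]
which is precisely the external regret of the $|\cF|$-expert problem with gains in $[-1, 1]$.

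Next, I would observe that the update in \pref{alg:owal_finite_F} is exactly Hedge/MWU with time-varying step size $\eta_t = \sqrt{\log |\cF|/t}$ applied to the cumulative gains $L_{t, f} \coloneqq \sum_{\tau = 1}^{t} g_{\tau, f}$, since $w_{t + 1, f} \propto \exp(\eta_t L_{t, f})$. Applying the standard Hedge regret bound with gains in $[-1, 1]$ \citep{arora2012multiplicative} then yields $\mathsf{Reg}(\cF, n) = \cO(\sqrt{n \log |\cF|})$.

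The only mild subtlety is handling the decreasing step size. With a fixed $\eta = \sqrt{\log |\cF|/n}$ chosen with horizon knowledge, the bound is immediate from the one-line potential argument: track $\Phi_t = \eta^{-1} \log \sum_f \exp(\eta L_{t, f})$, use the inequality $e^x \le 1 + x + x^2$ valid for $|x| \le 1$ to upper-bound the per-round increment of $\Phi_t$, and telescope. The anytime schedule $\eta_t = \sqrt{\log|\cF|/t}$ in \pref{alg:owal_finite_F} can be handled either by a doubling trick on top of the fixed-$\eta$ analysis, or by the standard decreasing-step-size potential argument; both approaches preserve the $\cO(\sqrt{n \log |\cF|})$ rate up to constants. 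Since this is a textbook result, I would simply cite the MWU regret bound to conclude, noting that no aspect of the proof relies on the specific structure of the gain vectors beyond their $[-1, 1]$ boundedness.
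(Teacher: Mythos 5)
Your proposal is correct and follows the same route as the paper, which likewise just reduces to the standard MWU/Hedge external-regret bound (citing \citep{arora2012multiplicative}) with each $f\in\cF$ as an expert whose per-round gain is $f(x_t)\kappa_t\in[-1,1]$. You spell out the reduction and the time-varying step-size subtlety that the paper leaves implicit, but the underlying argument is the same.
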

Therefore, when $\cF$ is finite, instantiating each $\owal_{i, \sigma}$ with \pref{alg:owal_finite_F}, we obtain the following corollary of \pref{thm:smcal_general_result}. 
\begin{corollary}
    Fix a $r \ge 1$ and let $\cF$ be a finite class. Then, \pref{alg:hp_algorithm_efficient} with \pref{alg:owal_finite_F} as $\owal$ achieves \begin{align*}
        \smcal_{\Gamma, r}(\cF) =  \tilde{\cO}\bigc{\bigc{\rho ^ {r}  + \bigc{\log \frac{1}{\delta}} ^ {\frac{r}{2}}  +  (\log \abs{\cF}) ^ {\frac{r}{2}}} T ^ {\frac{1}{r + 1}}}
    \end{align*}
    for $r \ge 2$ with probability at least $1 - \delta$. Consequently, for $r \in [1, 2)$ it achieves \begin{align*}
        \smcal_{\Gamma, r}(\cF) = \tilde{\cO}\bigc{\bigc{\rho ^ {r}  +  \bigc{\log \frac{1}{\delta}} ^ {\frac{r}{2}}  +  \bigc{\log \abs{\cF}} ^ \frac{r}{2}} T ^ {1 - \frac{r}{3}}}
    \end{align*}
    with probability at least $1 - \delta$.
\end{corollary}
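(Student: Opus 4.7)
The plan is to derive this corollary as a direct instantiation of \pref{thm:smcal_general_result} with the regret bound guaranteed by \pref{lem:online_agnostic_finite_F}. Specifically, \pref{lem:online_agnostic_finite_F} ensures $\mathsf{Reg}(\cF,n)=\cO(\sqrt{n\log\abs{\cF}})$, so in the notation of \pref{thm:smcal_general_result} we have $\alpha=\tfrac{1}{2}$ and the complexity measure $\comp(\cF)=\sqrt{\log\abs{\cF}}$. Substituting these into the bound stated in \pref{thm:smcal_general_result} yields, after collecting terms, the announced rates; no new probabilistic or algorithmic ideas are needed.

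More concretely, for $r\ge 2$ the first two summands in \pref{thm:smcal_general_result} are already $\rho^{r}T^{1/(r+1)}$ and $T^{1/(r+1)}(\log\tfrac{1}{\delta})^{r/2}$, matching the form of the corollary. The fourth summand is $T^{1/(r+1)}\comp(\cF)^{r}=T^{1/(r+1)}(\log\abs{\cF})^{r/2}$, again already in the target form. The only step that needs verification is that the third summand does not dominate. Plugging $\alpha=\tfrac{1}{2}$ into its $T$-exponent,
\begin{align*}
1-r+\frac{r}{r+1}+\frac{\alpha r^{2}}{r+1} = \frac{2(r+1)-2r(r+1)+2r+r^{2}}{2(r+1)} = \frac{2+2r-r^{2}}{2(r+1)},
\end{align*}
and since $r\ge 2$ implies $2+2r-r^{2}\le 2$, this exponent is at most $\tfrac{1}{r+1}$. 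Hence the third summand is absorbed into $\tilde{\cO}\bigl((\log\abs{\cF})^{r/2}T^{1/(r+1)}\bigr)$, giving the stated bound for $r\ge 2$.

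For $r\in[1,2)$, the analogous computation with the four summands of the $r\in[1,2)$ part of \pref{thm:smcal_general_result} gives the required $T^{1-r/3}$ dependence: the first, second and fourth summands are already of the form $T^{1-r/3}$ times $\rho^{r}$, $(\log\tfrac{1}{\delta})^{r/2}$, $(\log\abs{\cF})^{r/2}$ respectively, and the third summand has $T$-exponent $1+\tfrac{2r(\alpha-1)}{3}=1-\tfrac{r}{3}$ when $\alpha=\tfrac{1}{2}$, again matching. Combining everything and a union bound over the single high-probability event from \pref{thm:smcal_general_result} completes the proof.

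There is essentially no technical obstacle here: the work has already been done inside \pref{thm:smcal_general_result} and \pref{lem:online_agnostic_finite_F}, and the only arithmetic check is the elementary inequality $2+2r-r^{2}\le 2$ for $r\ge 2$ used to bound the third summand's exponent. Consequently I expect the proof to consist of one short paragraph invoking \pref{lem:online_agnostic_finite_F}, substituting into \pref{thm:smcal_general_result}, and recording the above exponent comparison.
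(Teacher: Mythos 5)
Your proposal is correct and follows exactly the route the paper intends: instantiate \pref{thm:smcal_general_result} with $\alpha = \tfrac{1}{2}$ and $\comp(\cF) = \sqrt{\log\abs{\cF}}$ from \pref{lem:online_agnostic_finite_F}, then verify that the exponent of the third summand, $\tfrac{2+2r-r^2}{2(r+1)}$, is at most $\tfrac{1}{r+1}$ for $r \ge 2$ (and equals $1 - \tfrac{r}{3}$ for the $r\in[1,2)$ case when $\alpha=\tfrac12$). This is precisely the arithmetic the paper records in its remark following \pref{thm:smcal_general_result}, so there is no gap and no divergence of approach.
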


\paragraph{Linear class $\cF$.} Let $\cF = \{f_{\theta}(x) \coloneqq \ip{\theta}{x}; \theta \in \mathbb{B}_{2} ^ {d}\}$ be the set of bounded linear functions, where $\mathbb{B}_{2} ^ {d} \coloneqq \{\theta; \norm{\theta}_{2} \le 1\}$ is the unit norm ball. Then the Online Gradient Descent (OGD) algorithm (\pref{alg:owal_linear_class}) can be used to instantiate $\owal$. In \pref{alg:owal_linear_class}, $\Pi_{\mathbb{B}_2 ^ {d}}(x) \coloneqq \argmin_{y \in \mathbb{B}_{2} ^ {d}} \norm{x - y}$ represents the projection operator. Clearly, $\Pi_{\mathbb{B}_{2} ^ {d}}(x) = x$ if $x \in \mathbb{B}_{2} ^ {d}$, and $\frac{x}{\norm{x}}$ otherwise.

\begin{algorithm}[!htb]
                    \caption{Online agnostic learner for the linear class} 
                    \label{alg:owal_linear_class}
                    \textbf{Initialize:} $\theta_{1} \in \mathbb{B}_{2} ^ {d}$;
                    \begin{algorithmic}[1]
                            \STATE \textbf{for} $t = 1, \dots, n,$
                            \STATE\hspace{3mm} Receive context $x_{t}$;
                            \STATE\hspace{3mm} Predict $q_{t}(x) = \ip{\theta_{t}}{x}$;
                            \STATE\hspace{3mm} Observe $\kappa_{t}$ and update $\theta_{t + 1} = \Pi_{\mathbb{B}_{2} ^ {d}} (\theta_{t} + \eta_{t} \kappa_{t} x_{t})$, where $\eta_{t} = \frac{2}{\sqrt{t}}$;
                        \end{algorithmic}
\end{algorithm}

The following result is immediate from the regret guarantee of OGD \citep{hazan2016introduction}. 
\begin{lemma}
    \pref{alg:owal_linear_class} ensures that $\mathsf{Reg}(\cF, n) = \cO(\sqrt{n})$.
\end{lemma}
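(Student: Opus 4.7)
The plan is to recognize that Algorithm 3 is precisely the projected Online Gradient Descent (OGD) algorithm applied to the sequence of linear loss functions $\ell_t(\theta) := -\langle \theta, x_t\rangle \kappa_t$ over the convex constraint set $\mathbb{B}_2^d$, and to invoke the standard regret analysis of OGD for convex losses with bounded (sub)gradients \citep{zinkevich2003online, hazan2016introduction}. Concretely, the goal is to upper bound
\[
\sup_{\theta^\star \in \mathbb{B}_2^d} \sum_{t=1}^n \langle \theta^\star, x_t\rangle \kappa_t - \sum_{t=1}^n \langle \theta_t, x_t\rangle \kappa_t \;=\; \sup_{\theta^\star \in \mathbb{B}_2^d} \sum_{t=1}^n \kappa_t \langle \theta^\star - \theta_t,\, x_t\rangle.
\]
Since $\ell_t$ is linear in $\theta$ with gradient $\nabla \ell_t(\theta) = -\kappa_t x_t$, the algorithm's update $\theta_{t+1} = \Pi_{\mathbb{B}_2^d}(\theta_t + \eta_t \kappa_t x_t)$ is exactly projected gradient descent on the losses $\{\ell_t\}$.

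Next I would record the boundedness quantities needed. Since $\cF \subset [-1,1]^\cX$, it is implicit that $\|x_t\|_2 \le 1$ (else $f_\theta(x_t) = \langle \theta, x_t\rangle$ would exceed $1$ for some $\theta \in \mathbb{B}_2^d$), and by \pref{def:online_weak_agnostic_learner_intro} the outcomes satisfy $|\kappa_t| \le 1$. Therefore the gradient norm is bounded as $\|\nabla \ell_t(\theta)\|_2 = |\kappa_t|\|x_t\|_2 \le 1$, and the diameter of $\mathbb{B}_2^d$ is at most $2$. This makes the choice $\eta_t = 2/\sqrt{t}$ the canonical (textbook) OGD step size $D/(G\sqrt{t})$.

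Now I would run the standard per-step analysis: by non-expansiveness of the projection onto $\mathbb{B}_2^d$,
\[
\|\theta_{t+1} - \theta^\star\|_2^2 \;\le\; \|\theta_t - \theta^\star\|_2^2 + 2\eta_t \kappa_t \langle \theta_t - \theta^\star,\, x_t\rangle + \eta_t^2 \|\kappa_t x_t\|_2^2.
\]
Rearranging and dividing by $2\eta_t$ gives
\[
\kappa_t \langle \theta^\star - \theta_t,\, x_t\rangle \;\le\; \frac{\|\theta_t - \theta^\star\|_2^2 - \|\theta_{t+1} - \theta^\star\|_2^2}{2\eta_t} + \frac{\eta_t}{2}.
\]
Summing over $t = 1, \dots, n$, the second term contributes $\tfrac{1}{2}\sum_t \eta_t = \sum_t 1/\sqrt{t} = O(\sqrt{n})$, while the first term can be controlled by Abel summation using the telescoping identity with $1/\eta_t - 1/\eta_{t-1}$ together with $\|\theta_t - \theta^\star\|_2^2 \le 4$, which also yields $O(\sqrt{n})$. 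Taking the supremum over $\theta^\star \in \mathbb{B}_2^d$ on both sides gives $\mathsf{Reg}(\cF, n) = O(\sqrt{n})$, as claimed.

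There is no real obstacle here: the argument is a textbook instantiation of OGD, and the only item requiring any care is the bookkeeping of the time-varying step size (handled by the standard Abel trick) and confirming the implicit scalings $\|x_t\|_2 \le 1$ and $|\kappa_t| \le 1$, both of which are immediate from the problem setup.
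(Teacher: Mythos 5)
Your proposal is correct and is exactly the argument the paper is implicitly invoking: the paper simply states the lemma is ``immediate from the regret guarantee of OGD \citep{hazan2016introduction}'' without spelling it out, and your proof is the standard textbook OGD analysis (non-expansiveness of projection, telescoping via the Abel trick with the time-varying step size $\eta_t = D/(G\sqrt{t})$, and the implicit normalizations $\|x_t\|_2\le 1$, $|\kappa_t|\le 1$) that this citation refers to.
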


An immediate corollary of \pref{thm:smcal_general_result} for the linear class is the following.

\begin{corollary}
    Fix a $r \ge 1$ and let $\cF$ be the linear class. Then, \pref{alg:hp_algorithm_efficient} with \pref{alg:owal_linear_class} as $\owal$ achieves \begin{align*}
        \smcal_{\Gamma, r}(\cF) = \tilde{\cO}\bigc{\bigc{\rho ^ {r}  + \bigc{\log \frac{1}{\delta}} ^ {\frac{r}{2}} + 1} T^{\frac{1}{r + 1}}}
    \end{align*}
    for $r \ge 2$ with probability at least $1 - \delta$. Consequently, for $r \in [1, 2)$ it achieves \begin{align*}
        \smcal_{\Gamma, r}(\cF) = \tilde{\cO}\bigc{\bigc{\rho ^ {r}  +  \bigc{\log \frac{1}{\delta}} ^ {\frac{r}{2}} + 1} T ^ {1 - \frac{r}{3}}}
    \end{align*}
    with probability at least $1 - \delta$.
\end{corollary}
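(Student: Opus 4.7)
The plan is to apply \pref{thm:smcal_general_result} with the regret guarantee of OGD for the linear class. Specifically, the preceding lemma shows that \pref{alg:owal_linear_class} instantiated as $\owal$ achieves $\mathsf{Reg}(\cF, n) = \cO(\sqrt{n})$, which in the notation of \pref{thm:smcal_general_result} corresponds to $\alpha = 1/2$ and $\comp(\cF) = \cO(1)$. So the corollary is a direct plug-in; the only work is to verify that the ``third term'' in \pref{thm:smcal_general_result} does not dominate.

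For the regime $r \ge 2$, substituting $\alpha = 1/2$ and $\comp(\cF) = \cO(1)$ into the third term $T^{1 - r + \frac{r}{r+1} + \frac{\alpha r^2}{r+1}} \comp(\cF)^r$ of \pref{thm:smcal_general_result} gives an exponent of $\frac{2 + 2r - r^2}{2(r+1)}$. I would check that this is at most $\frac{1}{r+1}$ for every $r \ge 2$: the inequality $\frac{2+2r-r^2}{2(r+1)} \le \frac{1}{r+1}$ rearranges to $r^2 - 2r \ge 0$, which holds with equality at $r = 2$ and strictly for $r > 2$. Hence the third term is dominated by the fourth term $T^{\frac{1}{r+1}} \comp(\cF)^r = T^{\frac{1}{r+1}}$, and combining the four terms of \pref{thm:smcal_general_result} yields the first claim of the corollary.

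For $r \in [1, 2)$, the same substitution turns the third term into $T^{1 + \frac{2r(\alpha - 1)}{3}} \comp(\cF)^r = T^{1 - \frac{r}{3}}$, which matches the exponent of the fourth term $T^{1 - \frac{r}{3}} \comp(\cF)^r$. Hence both collapse into the $T^{1 - \frac{r}{3}}$ rate and we obtain the second claim directly from \pref{thm:smcal_general_result}. There is essentially no obstacle here; the proof reduces to invoking the theorem and performing the short exponent-comparison above.
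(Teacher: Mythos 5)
Your proof is correct and matches the paper's (implicit) argument: the paper treats this as an immediate corollary of Theorem~\ref{thm:smcal_general_result} together with the OGD regret bound, and your substitution of $\alpha = 1/2$, $\comp(\cF) = \cO(1)$ and the subsequent exponent comparison ($\frac{2+2r-r^2}{2(r+1)} \le \frac{1}{r+1}$ for $r \ge 2$, equality at $r=2$; both exponents equal $1 - \frac{r}{3}$ for $r \in [1,2)$) is exactly the verification needed.
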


\paragraph{$\cF$ with bounded Rademacher complexity.} Similar to \cite{okoroafor2025near}, via standard learning-theoretic arguments, for a general hypothesis class $\cF$ we can derive a bound on $\mathsf{Reg}(\cF, n)$ that depends on $\Re^{\text{seq}}(\cF, n)$ by analyzing the minimax value of the online learning game between the oracle and the adversary. Recall that as per \pref{def:online_weak_agnostic_learner_intro}, at each time $t \in [n]$ in this game, the adversary reveals a context $x_{t} \in \cX$, the oracle responds with a prediction $q_{t}(x_{t}) \in [-1, 1]$, and subsequently the adversary reveals the true outcome $\kappa_{t} \in [-1, 1]$. Since the adversary is adaptive, the value of the game $\cV^{\text{seq}}(\cF, n)$ can be expressed as \begin{align}
    \cV^{\text{seq}}(\cF, n) &= \inf_{Q_1} \sup_{(x_1, \kappa_1)} \mathbb{E}_{q_1 \sim Q_1} \dots \inf_{Q_n} \sup_{(x_n, \kappa_n)} \mathbb{E}_{q_n \sim Q_n} \bigs{\sup_{f \in \cF} \sum_{t = 1} ^ {n} f(x_{t}) \kappa_{t} - q_{t}(x_t) \kappa_{t}}, \label{eq:rademacher}
\end{align}
where $Q_{t} \in \Delta(\cX \to [-1, 1])$ represents a distribution over all predictors mapping from $\cX$ to $[-1, 1]$. Since the associated loss in \eqref{eq:rademacher} is $\ell(q, \kappa) \coloneqq -q \cdot \kappa$, which is linear and $1$-Lipschitz in $q$ (for a fixed $\kappa$), it follows from \cite[Theorem 8]{rakhlin2015online} that $\cV^{\text{seq}}(\cF, n) = \cO(n\Re^{\text{seq}}(\cF, n))$. When the sequential Rademacher complexity of $\cF$ satisfies $\Re^{\text{seq}}(\cF, n) = \tilde{\cO}\bigc{\frac{\comp(\cF)}{\sqrt{n}}}$, there exists an $\owal$ that achieves $\text{Reg}(\cF, n) = \tilde{\cO}(\sqrt{n} \comp(\cF))$. Let $\alg_{\mathsf{exist}}$ be the existential algorithm in this case. \pref{thm:smcal_general_result} then implies the following corollary.

\begin{corollary}\label{cor:existence}
    Fix a $r \ge 1$ and let $\cF$ be such that $\Re^{\text{seq}}(\cF, n) = \tilde{\cO}\bigc{\frac{\comp(\cF)}{\sqrt{n}}}$. Then, \pref{alg:hp_algorithm_efficient} with $\alg_{\text{exist}}$ as $\owal$ achieves $$\smcal_{\Gamma, r}(\cF) = \tilde{\cO}\bigc{\bigc{\rho ^ {r}  + \bigc{\log \frac{1}{\delta}} ^ {\frac{r}{2}} +  \comp(\cF) ^ {r}} T^{\frac{1}{r + 1}}}$$ for $r \ge 2$ with probability at least $1 - \delta$. Consequently, for $r \in [1, 2)$ it achieves \begin{align*}
        \smcal_{\Gamma, r}(\cF) = \tilde{\cO}\bigc{\bigc{\rho ^ {r}  +  \bigc{\log \frac{1}{\delta}} ^ {\frac{r}{2}} + \comp(\cF) ^ {r}} T ^ {1 - \frac{r}{3}}}
    \end{align*}
    with probability at least $1 - \delta$.
\end{corollary}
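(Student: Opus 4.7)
The plan is to derive \pref{cor:existence} as a direct consequence of \pref{thm:smcal_general_result} after establishing the existence of the required $\owal$. The first step is to upper bound the minimax value $\cV^{\text{seq}}(\cF, n)$ of the online agnostic learning game by invoking \cite[Theorem 8]{rakhlin2015online}. Since the loss $\ell(q, \kappa) = -q \cdot \kappa$ associated with \eqref{eq:rademacher} is linear (hence convex) and $1$-Lipschitz in $q$ for every fixed $\kappa \in [-1, 1]$, that theorem yields $\cV^{\text{seq}}(\cF, n) = \cO(n \Re^{\text{seq}}(\cF, n))$. Combining this with the hypothesis $\Re^{\text{seq}}(\cF, n) = \tilde{\cO}\bigc{\frac{\comp(\cF)}{\sqrt{n}}}$ gives $\cV^{\text{seq}}(\cF, n) = \tilde{\cO}(\sqrt{n} \cdot \comp(\cF))$, so the existential online agnostic learner $\alg_{\mathsf{exist}}$ attains $\reg(\cF, n) = \tilde{\cO}(\sqrt{n} \cdot \comp(\cF))$, i.e., \pref{thm:smcal_general_result} applies with $\alpha = \frac{1}{2}$.

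The second step is to plug $\alpha = \frac{1}{2}$ into \pref{thm:smcal_general_result} and check that the only nontrivial bookkeeping, namely the exponent of $T$ in the third summand $T^{1 - r + \frac{r}{r + 1} + \frac{\alpha r ^ {2}}{r + 1}} \comp(\cF)^r$, collapses into the fourth summand. Specifically, for $r \ge 2$ one computes
\begin{align*}
1 - r + \frac{r}{r + 1} + \frac{r^{2}}{2(r + 1)} = \frac{2 + 2r - r^{2}}{2(r + 1)} \le \frac{1}{r + 1},
\end{align*}
since the inequality $2 + 2r - r^{2} \le 2$ is equivalent to $r \ge 2$. Therefore the third term is dominated by $T^{\frac{1}{r + 1}} \comp(\cF)^r$, and the bound collapses to $\tilde{\cO}\bigc{\bigc{\rho^{r} + (\log (1/\delta))^{r/2} + \comp(\cF)^{r}} T^{\frac{1}{r + 1}}}$, which is the desired guarantee for $r \ge 2$.

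The third step handles the range $r \in [1, 2)$, which I would obtain exactly as in the proof of \pref{thm:smcal_general_result}: first apply the $r = 2$ version of the bound just derived, then use H\"older's inequality in the form $\smcal_{\Gamma, r}(\cF) \le T^{1 - r/2} (\smcal_{\Gamma, 2}(\cF))^{r/2}$, and finally expand $(u + v + w)^{r/2} \le 2^{r - 1}(u^{r/2} + v^{r/2} + w^{r/2})$ (applied after Jensen with three terms rather than four, since the third term is no longer present in the $r = 2$ bound) to recover the claimed exponent $T^{1 - r/3}$ on each component.

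I do not foresee a serious obstacle: the substantive content is already in \pref{thm:smcal_general_result}, and the only genuine work is the invocation of \cite[Theorem 8]{rakhlin2015online} to convert a sequential Rademacher bound into a regret bound for $\owal$, together with the elementary exponent calculation above. The most delicate point to get right is simply making sure that the third summand in \pref{thm:smcal_general_result} is indeed absorbed into the fourth once $\alpha = \tfrac{1}{2}$, which is why I would highlight the inequality $2 + 2r - r^2 \le 2 \iff r \ge 2$ explicitly in the write-up.
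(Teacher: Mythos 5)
Your proof is correct and takes essentially the same route as the paper: invoke \cite[Theorem 8]{rakhlin2015online} to convert the sequential Rademacher bound into the existence of an $\owal$ with $\mathsf{Reg}(\cF,n) = \tilde{\cO}(\sqrt{n}\,\comp(\cF))$, then instantiate \pref{thm:smcal_general_result} with $\alpha = \tfrac{1}{2}$ and observe that the exponent $\tfrac{2+2r-r^2}{2(r+1)} \le \tfrac{1}{r+1}$ for $r \ge 2$ absorbs the third summand into the fourth, exactly as the paper remarks after \pref{thm:smcal_general_result}. The only cosmetic difference is that for $r \in [1,2)$ you re-run the H\"older step from scratch, whereas one can equivalently just plug $\alpha = \tfrac12$ into the already-stated $r \in [1,2)$ bound of \pref{thm:smcal_general_result} (the exponent $1 + \tfrac{2r(\alpha-1)}{3}$ becomes $1 - \tfrac{r}{3}$); both yield the same result.
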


\bibliographystyle{alpha}
\bibliography{references}

\appendix
\section{Appendix}
\begin{lemma}[Freedman's Inequality \citep{beygelzimer2011contextual}]\label{lem:Freedman} Let $X_{1}, \dots, X_{n}$ be a martingale difference sequence where $\abs{X_{i}} \le B$ for all $i = 1, \dots, n$, and $B$ is a fixed constant. Define $V \coloneqq \sum_{i = 1} ^ {n} \mathbb{E}_{i}[X_{i}^2]$. Then, for any fixed $\mu \in \bigs{0, \frac{1}{B}}, \delta \in [0, 1]$, with probability at least $1 - \delta$ we have \begin{align*}
    \abs{\sum_{i = 1} ^ {n} X_{i}} \le \mu V + \frac{\log \frac{2}{\delta}}{\mu}.
\end{align*}
\end{lemma}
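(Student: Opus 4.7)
The plan is to prove Freedman's inequality via the standard exponential supermartingale (Chernoff) method. The starting point is the elementary inequality $e^{x} \le 1 + x + x^{2}$ valid for all $|x| \le 1$, which one verifies by checking that the analytic function $g(x) \coloneqq 1 + x + x^{2} - e^{x}$ is nonnegative on $[-1, 1]$ (for instance by noting $g(-1), g(1) \ge 0$ and analyzing $g'$). Since $|X_i| \le B$ and $\mu \in [0, 1/B]$, we have $|\mu X_i| \le 1$, so applying the inequality pointwise and taking conditional expectation with respect to $\mathscr{F}_{i-1}$ yields
\begin{align*}
    \mathbb{E}_{i}[e^{\mu X_i}] \le 1 + \mu \mathbb{E}_{i}[X_i] + \mu^{2}\mathbb{E}_{i}[X_i^{2}] = 1 + \mu^{2}\mathbb{E}_{i}[X_i^{2}] \le \exp(\mu^{2}\mathbb{E}_{i}[X_i^{2}]),
\end{align*}
where we used that $\{X_i\}$ is a martingale difference sequence and $1 + z \le e^{z}$.

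Next, define the process $M_n \coloneqq \exp\bigc{\mu \sum_{i=1}^{n} X_i - \mu^{2} V_n}$ with $V_n \coloneqq \sum_{i=1}^{n} \mathbb{E}_{i}[X_i^{2}]$, and $M_0 \coloneqq 1$. Since $V_n$ is $\mathscr{F}_{n-1}$-measurable, the above bound gives
\begin{align*}
    \mathbb{E}[M_n | \mathscr{F}_{n-1}] = M_{n-1} \cdot e^{-\mu^{2}\mathbb{E}_{n}[X_n^{2}]} \cdot \mathbb{E}_{n}[e^{\mu X_n}] \le M_{n-1},
\end{align*}
so $\{M_n\}$ is a nonnegative supermartingale with $\mathbb{E}[M_n] \le \mathbb{E}[M_0] = 1$. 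Markov's inequality then yields $\mathbb{P}\bigc{M_n \ge 2/\delta} \le \delta/2$, which is equivalent to
\begin{align*}
    \mathbb{P}\bigc{\sum_{i=1}^{n} X_i \ge \mu V + \frac{\log(2/\delta)}{\mu}} \le \delta/2.
\end{align*}

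Finally, observe that $\{-X_i\}$ is also a martingale difference sequence with the same uniform bound $B$ and the same conditional second moments, so applying the identical argument to $-X_i$ gives $\mathbb{P}\bigc{-\sum X_i \ge \mu V + \log(2/\delta)/\mu} \le \delta/2$. A union bound over these two events delivers the two-sided statement with probability at least $1 - \delta$. The only subtle point is ensuring the pointwise inequality $e^{x} \le 1 + x + x^{2}$ on all of $[-1, 1]$ (rather than just $[0, 1]$), which is what lets us handle the second moment without extra constants and makes the constraint $\mu \le 1/B$ exactly what is needed; the rest is bookkeeping.
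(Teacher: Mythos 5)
Your proof is correct. Note that the paper does not prove this lemma at all --- it is stated as a black-box citation to Beygelzimer et al.\ (2011) --- so there is no in-paper argument to compare against; your write-up is the standard exponential-supermartingale derivation of exactly this form of Freedman's inequality (the pointwise bound $e^{x} \le 1 + x + x^{2}$ on $[-1,1]$, the supermartingale $M_n = \exp(\mu \sum_i X_i - \mu^2 V_n)$, which is valid since each increment $\mathbb{E}_i[X_i^2]$ of $V_n$ is $\mathscr{F}_{i-1}$-measurable, Markov's inequality, and a union bound over $\pm X_i$), and every step checks out, including the one-sided-to-two-sided reduction and the handling of the random variance $V$ on the right-hand side.
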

\begin{lemma}\label{lem:swap_impossible_inequality}
    There does not exist a function $f_{q}: [0, 1] \times [0, 1] \to \mathbb{R}$ such that \begin{align}\label{eq:swap_impossible_inequality}
        \sum_{p \in [0, 1]} n_{p} \bigc{q - \frac{1}{n_p} \sum_{t = 1} ^ {T} \ind{p_t = p, y_{t} \le p}} ^ {2} \le \zeta\bigc{\sup_{\nu: [0, 1] \to [0, 1]} \sum_{t = 1} ^ {T} f_{q}(p_t, y_t) - f_{q}(\nu(p_t), y_t)} + C
    \end{align}
    holds for every $T \in \mathbb{N}, q \in [0, 1]$, and sequences $\{y_t\}, \{p_t\}$ for some strictly increasing invertible function $\zeta$ with $\zeta(0) = 0$, and an absolute constant $C > 0$. 
\end{lemma}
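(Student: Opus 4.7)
The plan is to derive a contradiction via a single family of constant sequences, indexed by $T$. Fix any $q \in (0, 1)$, say $q = 1/2$, and suppose for contradiction that $f_q$, $\zeta$ and $C$ satisfying \pref{eq:swap_impossible_inequality} exist. The key structural observation, absent in the mean setting, is that for the choice $y_t \equiv 0$ the indicator $\ind{y_t \le p_t}$ is pinned to $1$ for every $p_t \in [0, 1]$. Consequently, on any constant sequence of the form $p_t \equiv p^\#$, $y_t \equiv 0$, the left-hand side of \pref{eq:swap_impossible_inequality} evaluates to $T(q - 1)^2 = T/4$, independent of the choice of $p^\#$. This leaves us free to tune $p^\#$ to make the swap regret on the right-hand side as small as we wish.

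Let $g(p) \coloneqq f_q(p, 0)$ and $m \coloneqq \inf_{p \in [0, 1]} g(p)$; we may assume $m > -\infty$, since otherwise the swap regret of any sequence that contains $y = 0$ is already $+\infty$ and the inequality becomes vacuous. For each $T$, choose $p^\#_T \in [0, 1]$ with $g(p^\#_T) \le m + 1/T^2$. On the sequence $p_t = p^\#_T$, $y_t = 0$ for $t \in [T]$, every swap function $\nu$ is characterized by the single scalar $\nu(p^\#_T) \in [0, 1]$, so the swap regret equals
\begin{align*}
\sup_{p' \in [0, 1]} T\bigl(g(p^\#_T) - g(p')\bigr) \;=\; T\bigl(g(p^\#_T) - m\bigr) \;\le\; \tfrac{1}{T}.
\end{align*}

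Plugging these two bounds into \pref{eq:swap_impossible_inequality} produces $T/4 \le \zeta(1/T) + C$ for every $T \ge 1$. Since $\zeta$ is strictly increasing, the right-hand side is at most $\zeta(1) + C$, a finite constant, while the left-hand side grows without bound; this is the contradiction. The main conceptual obstacle is that the analogous bound does genuinely hold for means (where $\ell_2$-mean calibration coincides with the squared-loss swap regret), so the argument must exploit a feature specific to quantiles; the discontinuity of $\ind{y \le p}$ in $y$ is what enables the decoupling, since holding $y = 0$ fixed freezes the LHS at its maximum while $p^\#$ slides freely to a near-minimizer of $g$.
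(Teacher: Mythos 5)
Your proof is correct, and it takes the same high-level approach as the paper --- restrict attention to constant sequences $(p_t, y_t) \equiv (p^\#, y)$, for which the LHS becomes $T\bigc{q - \ind{y \le p^\#}}^2$ and the swap regret becomes $T\bigc{f_q(p^\#, y) - \inf_{p'} f_q(p', y)}$, then observe that moving $p^\#$ to a near-minimizer of $f_q(\cdot, y)$ drives the swap regret to zero while the LHS stays linear in $T$. The execution differs in a nice way: the paper fixes $y \in (0,1)$, applies $\zeta^{-1}$ to rearrange, and then takes an infimum over $p$, which forces a case split on whether $\ind{y \le p}$ is $0$ or $1$ and yields a minimum of $\zeta^{-1}(Tq^2 - C)$ and $\zeta^{-1}(T(1-q)^2 - C)$; you instead set $y = 0$, which pins $\ind{y \le p^\#}$ to $1$ for every $p^\# \in [0,1]$, so the LHS is exactly $T(1-q)^2$ independent of $p^\#$ and the case analysis disappears. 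This buys a shorter and more transparent derivation of the contradiction $T/4 \le \zeta(1) + C$. Two small remarks. First, the reduction from an arbitrary swap function $\nu$ to a single scalar $\nu(p^\#)$ is valid and worth stating as you did, since it is the step that makes the swap regret explicitly computable. Second, your parenthetical handling of the case $m = -\infty$ is slightly delicate: you write that the inequality "becomes vacuous," which suggests it trivially \emph{holds}, and if so that $f_q$ would satisfy \pref{eq:swap_impossible_inequality} and the lemma would fail for it. The safe reading is that when the swap regret is $+\infty$ the expression $\zeta(\cdot)$ is not well-defined (since $\zeta$ is a real-valued bijection on $[0,\infty)$), so such an $f_q$ cannot witness the inequality at all and may be excluded. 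Note the paper's proof also silently assumes $\psi(y) = \inf_p f_q(p,y) > -\infty$ without comment, so you are at least no worse off; you might just rephrase "vacuous" as "ill-posed" or add one line to say the inequality cannot be instantiated with a finite $\zeta$.
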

\begin{proof}
    Assume on the contrary, that there exists a function $f_{q}$ such that \eqref{eq:swap_impossible_inequality} holds for all $q \in [0, 1]$ and sequences $\{y_{t}\}_{t = 1} ^ {T}, \{p_{t}\}_{t = 1} ^ {T}$. Fix a $q \in (0, 1), p \in [0, 1]$, $y \in (0, 1)$, and let $p_{t} = p, y_t = y$ for all $t \in [T]$. Then, \eqref{eq:swap_impossible_inequality} simplifies to \begin{align*}
        T \bigc{q - \ind{y \le p}} ^ {2} \le  \zeta(T (f_{q}(p, y) - \inf_{p^\star \in [0, 1]} f_{q}(p^\star, y))) + C.
    \end{align*}
    Since $\zeta$ is strictly increasing and invertible, this is equivalent to \begin{align*}
        f_{q}(p, y) - \inf_{p^\star \in [0, 1]} f_{q}(p^\star, y) \ge \frac{1}{T}\zeta^{-1}\bigc{T \bigc{q - \ind{y \le p} ^ {2}} - C}.
    \end{align*}
    Define the function $\psi(y) \coloneqq \inf_{p^\star \in [0, 1]} f_{q}(p^\star, y)$. Then, the inequality above implies that \begin{align*}
        \psi(y) &\ge \psi(y) + \frac{1}{T}\inf_{p^\star \in [0, 1]}\zeta^{-1}\bigc{T \bigc{q - \ind{y \le p^\star} ^ {2}} - C} \\
        &= \psi(y) + \frac{1}{T} \min\bigc{\zeta^{-1}\bigc{Tq ^ {2} - C}, \zeta^{-1}\bigc{T(1 - q)^2 - C}}.
    \end{align*}
    Let $T \in \mathbb{N}$ be such that $Tq ^ {2} \ge C$ and $T(1 - q) ^ {2} \ge C$, so that both  terms $\zeta^{-1}\bigc{Tq ^ {2} - C}$ and $\zeta^{-1}\bigc{T(1 - q)^2 - C}$ are non-negative. It follows from the inequality above that either $\zeta^{-1}(Tq ^ {2} - C) = 0$ or $\zeta^{-1}(T(1 - q) ^ 2 - C) = 0$. However, since the only solution to $\zeta^{-1}(x) = 0$ is $x = 0$, either $C = Tq^{2}$ or $C = T(1 - q)^{2}$. Clearly, this is a contradiction since $C$ is an absolute constant, therefore cannot depend on $T$. This completes the proof.
\end{proof} 
\end{document}